\title{Nonlinear Sufficient Dimension Reduction with a Stochastic Neural Network}
\author{%
  Siqi Liang \\
  Purdue University\\
  West Lafayette, IN 47906 \\
  \texttt{liang257@purdue.edu} \\
  \And
   Yan Sun\\
   Purdue University  \\
   West Lafayette, IN 47907\\
   \texttt{sun748@purdue.edu} \\
   \AND
   Faming Liang\thanks{To whom the correspondence should be addressed: Faming Liang.} \\
   Purdue University \\
  West Lafayette, IN 47907 \\
   \texttt{fmliang@purdue.edu} \\
}
\newcommand{\indep}{\rotatebox[origin=c]{90}{$\models$}}
\newcommand{\black}{\textcolor{black}}
\newcommand{\bX}{{\boldsymbol X}}
\newcommand{\bw}{{\boldsymbol w}}
\newcommand{\bY}{{\boldsymbol Y}}
\newcommand{\bZ}{{\boldsymbol Z}}
\newcommand{\bz}{{\boldsymbol z}}
\newcommand{\bb}{{\boldsymbol b}}
\newcommand{\bB}{{\boldsymbol B}}
\newcommand{\be}{{\boldsymbol e}}
\newcommand{\bD}{{\boldsymbol D}}
\newcommand{\bV}{{\boldsymbol v}}
\newcommand{\bv}{{\boldsymbol v}}
\newcommand{\mR}{\mathbb{R}}
\newcommand{\mZ}{\frak{Z}}
\newcommand{\btheta}{{\boldsymbol \theta}}
\newcommand{\bepsilon}{{\boldsymbol \epsilon}}
\newtheorem{theorem}{Theorem}[section]
\newtheorem{lemma}{Lemma}[section]
\newtheorem{proposition}{Proposition}[section]
\newtheorem{remark}{Remark}
\newtheorem{assumption}{Assumption}
\newenvironment{proof}{\trivlist\item[\hskip \labelsep{\sc Proof:}]}
 {\unskip\nobreak\ \lower.3ex\hbox{$\Box$}\endtrivlist}
\begin{document}

\setcounter{assumption}{0}
\renewcommand{\theassumption}{A\arabic{assumption}}

\maketitle

\begin{abstract}
Sufficient dimension reduction is a powerful tool to extract core information hidden in the high-dimensional data and has potentially many important applications in machine learning tasks. However, the existing nonlinear sufficient dimension reduction  methods often lack the scalability necessary for dealing with large-scale data. We propose a new type of stochastic neural network under a rigorous probabilistic framework and show that it can be used for sufficient dimension reduction for large-scale data. The proposed stochastic neural network is trained using an adaptive stochastic gradient Markov chain Monte Carlo algorithm, whose convergence is rigorously studied in the paper as well. Through extensive experiments on real-world classification and regression problems, we show that the proposed method compares favorably with the existing  state-of-the-art sufficient dimension reduction methods and is computationally more efficient for large-scale data. 
\end{abstract}

\section{Introduction}

As a supervised method, sufficient dimension reduction (SDR) aims to project the data onto a lower dimensional space so that the output is conditionally independent of the input features given the projected features. Mathematically, the problem of SDR can be described as follows.
 Let $\bY \in \mathbb{R}^d $ be the response variables, and let $\bX = (X_1,\ldots,X_p)^T \in 
\mR^p$ be the explanatory variables of dimension $p$. The goal of SDR is to find a lower-dimensional representation $\bZ \in \mR^q$, as a function of $\bX$ for some $q < p$, such that 
\begin{equation} \label{SDReq1}
P(\bY|\bX)=P(\bY|\bZ), \quad \mbox{or equivalently} \quad \bY \indep \bX | \bZ,
\end{equation}
where $\indep$ denotes conditional independence.  
Intuitively, the definition (\ref{SDReq1}) implies that ${\boldsymbol Z}$ has extracted all the information contained in $\bX$ for predicting $\bY$. In the literature, 
 SDR has been developed under both linear and nonlinear settings.
 
Under the linear setting, SDR is to find a few linear combinations of $\bX$ that are sufficient to describe the conditional distribution of $\bY$
given $\bX$, i.e., finding a projection matrix $\boldsymbol{B} \in \mR^{p\times q}$ such that
\begin{equation}\label{SDRlinear}
    \bY \indep \bX \big|\boldsymbol{B}^T \bX.
\end{equation}
A more general definition for linear SDR based on $\sigma$-field can be found in \cite{cook2007fisher}. 
Towards this goal, a variety of inverse regression methods have been proposed, see e.g., sliced inverse regression (SIR) \cite{li1991sliced}, sliced average variance estimation (SAVE) \cite{Cook2000save, Cook1991save}, parametric inverse regression \cite{Bura2001EstimatingTS}, contour regression  \cite{Li2005ContourRA}, and directional regression \cite{Li2007OnDR}. 
These methods require strict assumptions on the joint distribution of $(\bX,\bY)$ or the conditional distribution of $\bX|\bY$, which limit their use in practice.  
To address this issue, some forward regression methods have  been developed in the literature, see e.g., principal Hessian directions \cite{li1992principal}, minimum average variance estimation \cite{xia2002adaptive}, conditional variance estimation \cite{Fertl2021ConditionalVE}, among others. These methods require  minimal assumptions on the smoothness of the joint distribution $(\bX, \bY)$, but they do not scale well for big data problems. They can become infeasible quickly as both $p$ and $n$ increase, see \cite{Kapla2021FusingSD} for more discussions on this issue.

Under the nonlinear setting, SDR  
 is to find a nonlinear function $f(\cdot)$ such that
\begin{equation} \label{SDRnonlinear}
    \bY \; \indep \; \bX \big|f(\bX).
\end{equation}
A general theory for nonlinear SDR  has been developed in \cite{lee2013general}.
A common strategy to achieve nonlinear SDR is to apply the kernel trick to the existing linear SDR methods, where the variable $\bX$ is first
mapped to a high-dimensional feature space via kernels and then inverse or forward regression  methods are performed. This strategy has led to a variety of methods such as 
 kernel sliced inverse regression (KSIR) \cite{wu2008kernel}, 
 kernel dimension reduction (KDR) \cite{fukumizu2009kernel,Fukumizu2014GradientBasedKD}, manifold kernel dimension reduction (MKDR)  \cite{nilsson2007regression}, generalized sliced inverse regression (GSIR) \cite{lee2013general}, generalized sliced average variance estimator (GSAVE)  \cite{lee2013general}, and least square mutual information estimation (LSMIE) \cite{suzuki2010sufficient}. 
 A drawback shared by these methods is that they require to compute the eigenvectors or inverse of an $n\times n$ matrix. Therefore, these methods lack the scalability necessary for big data problems. 
 Another strategy to achieve nonlinear SDR is to consider the problem under the multi-index model setting. Under this setting, the methods of forward regression  such as those based on the outer product of the gradient \cite{xia2007constructive, hristache2001structure} have been developed, which often involve eigen-decomposition of a $p\times p$  matrix and are thus unscalable for high-dimensional problems. 

 Quite recently, some deep learning-based nonlinear SDR methods have  been proposed in the literature, see e.g. \cite{Kapla2021FusingSD,Banijamali2018DeepVS,Liang2021SufficientDR},  
 which are scalable for big data by training the deep neural network (DNN) with a mini-batch strategy.  
 In \cite{Kapla2021FusingSD}, the authors assume that the response variable $\bY$ on the predictors $\bX$ is fully captured by a regression
 \begin{equation} \label{deepSDReq}
 \bY=g(\bB^T \bX)+\bepsilon,
 \end{equation}
 for an unknown function $g(\cdot)$ and a low rank parameter matrix $\bB$, and they propose a two-stage approach to estimate $g(\cdot)$ and $\bB$. They first estimate $g(\cdot)$ by $\tilde{g}(\cdot)$ by fitting the regression $\bY=\tilde{g}(\bX)+\bepsilon$ with a DNN and initialize the estimator of $\bB$ using the outer product gradient (OPG) approach \cite{xia2002adaptive}, and then refine the estimators of $g(\cdot)$ and $\bB$ by optimizing them in a joint manner. However, as pointed out by the authors, this method might not be valid unless the estimate of $g(\cdot)$ is consistent, but the consistency does not generally hold for the fully connected neural networks trained without constraints. 
 Specifically, the universal approximation ability of the DNN can make the latent variable ${\boldsymbol Z}:=\bB^T \bX$ unidentifiable from the DNN approximator of $g(\cdot)$; or, said differently, ${\boldsymbol Z}$ can be an arbitrary vector by tuning the size of the DNN to be sufficiently large.  A similar issue happened to  \cite{Banijamali2018DeepVS}, where the authors propose to learn the latent variable ${\boldsymbol Z}$ 
 by optimizing three DNNs to approximate the distributions $p({\boldsymbol Z}|\bX)$, $p(\bX|{\boldsymbol Z})$ and $p(\bY|{\boldsymbol Z})$, respectively, under the framework of variational autoencoder. 
 Again, ${\boldsymbol Z}$ suffers from the identifiability issue due to the universal approximation ability of the DNN.   
 In \cite{Liang2021SufficientDR}, the authors employ a regular DNN for sufficient dimension reduction, which works only for the case that the distribution of the response variable falls into the exponential family.
How to conduct SDR with DNNs for general large-scale data remains an unresolved issue.  
 
 
We address the above issue by developing a new type of stochastic neural network. The idea can be loosely described as follows. Suppose that we are able to learn a stochastic neural network, which maps $\bX$ to $\bY$ via some stochastic hidden layers and possesses a layer-wise Markovian structure. Let $h$ denote the number of hidden layers, and let $\bY_1, \bY_2, \ldots, \bY_h$ denote the outputs of the respective stochastic hidden layers. By the layer-wise Markovian structure of the stochastic neural network, we can decompose the joint distribution of $(\bY, \bY_h, \bY_{h-1},\ldots, \bY_1)$ conditioned on $\bX$ as follows
\begin{equation} \label{decomeq}
\pi(\bY, \bY_h, \bY_{h-1},\ldots, \bY_1|\bX)= 
\pi(\bY|\bY_{h}) \pi(\bY_h|\bY_{h-1}) \cdots \pi(\bY_1|\bX),
\end{equation}
where each conditional distribution is modeled by a linear or logistic regression (on transformed outputs of the previous layer), while the stochastic neural network still provides a good approximation to the underlying DNN under appropriate conditions on the random noise added to each stochastic layer. The layer-wise Markovian structure implies $\bY \indep \bX |{\boldsymbol Y}_h$, and the simple regression structure of $\pi(\bY|\bY_h)$ successfully gets around the identifiability issue of the latent variable ${\boldsymbol Z}:=\bY_h$ that has been suffered by some other deep learning-based methods \cite{Banijamali2018DeepVS,Kapla2021FusingSD}. 
How to define and learn such a stochastic neural network will be detailed in the paper. 

\paragraph{Our contribution} in this paper is three-fold: (i)  We propose a new type of stochastic neural network (abbreviated as ``StoNet'' hereafter) for sufficient dimension reduction, for which a layer-wise Markovian structure  (\ref{decomeq}) is imposed on the network in training and the size of the noise added to each hidden layer is calibrated for ensuring the StoNet to provide a good approximation to the underlying DNN.  (ii) We develop an adaptive stochastic gradient MCMC algorithm for training the StoNet and provides a rigorous study for its convergence under mild conditions. The training algorithm is scalable with respect to big data and it is itself of interest to statistical computing for the problems with latent variables or missing data involved.  (iii)  We formulate the StoNet as a composition of many simple linear/logistic regressions, making its structure more designable and interpretable. The backward imputation and forward parameter updating mechanism embedded in the proposed training algorithm enables the regression subtasks to communicate globally and update locally. 
As discussed later, these two features enable the StoNet to solve many important scientific problems, rather than sufficient dimension reduction, 
in a more convenient way than does the conventional DNN. 
The StoNet bridges us from linear models to deep learning. 

\paragraph{Other related works.} Stochastic neural networks have a long history in machine learning. Famous examples include multilayer generative models \cite{Hinton2007},  restricted Boltzmann machine \cite{Hinton2006ReducingTD} and deep Boltzmann machine \cite{SHinton2009}. 
Recently, some researchers have proposed adding noise to the DNN to improve its fitting and generalization.
For example, \cite{srivastava2014dropout} proposed the dropout method to prevent the DNN from over-fitting by randomly dropping some hidden and visible units during training; 
\cite{Neelakantan2017AddingGN} proposed adding gradient noise to improve training;  \cite{Glehre2016NoisyAF, Noh2017RegularizingDN,You2018AdversarialNL,SunLiangKstonet2022} proposed to use stochastic activation functions through adding noise to improve generalization and adversarial robustness, and 
\cite{Yu2021SimpleAE} proposed to learn the uncertainty parameters of the stochastic activation functions along with the training of the neural network. 

However, none of the existing stochastic neural networks can be used for sufficient dimension reduction. It is known that the multilayer generative models \cite{Hinton2007}, restricted Boltzmann machine \cite{Hinton2006ReducingTD} and deep Boltzmann machine \cite{SHinton2009} can be used for dimension reduction, but under the unsupervised mode. As explained in \cite{srivastava2014dropout}, the dropout method is essentially a stochastic regularization method, where the likelihood function is penalized in network training and thus the hidden layer output of the resulting neural network 
does not satisfy (\ref{SDRnonlinear}). 
In \cite{Glehre2016NoisyAF}, the size of the noise added to the activity function is not well calibrated and it is unclear whether the true log-likelihood function is maximized or not. 
The same issue happens to \cite{Neelakantan2017AddingGN}; it is unclear whether the true log-likelihood function is maximized by the proposed training procedure. 
In \cite{Noh2017RegularizingDN}, the neural network was trained by maximizing a lower bound of the log-likelihood function instead of the true log-likelihood function; therefore, its hidden layer output does not satisfy (\ref{SDRnonlinear}). 
In \cite{You2018AdversarialNL}, the random noise added to the output of each hidden unit depends on its gradient; the mutual dependence between the gradients destroys 
the layer-wise Markovian structure of the neural network and thus the hidden layer output does not satisfy (\ref{SDRnonlinear}).  Similarly, in \cite{Yu2021SimpleAE}, independent noise was added to the output of each hidden unit and, therefore, the hidden layer output satisfies neither (\ref{decomeq}) nor (\ref{SDRnonlinear}). In \cite{SunLiangKstonet2022}, inclusion of the support vector regression (SVR) layer to the stochastic neural network makes the hidden layer outputs mutually dependent, although the observations are mutually independent.


\section{StoNet for Sufficient Dimension Reduction}
\label{stonetsect}

In this section, we first define the StoNet, then justify its validity as a universal learner for the map from $\bX$ to $\bY$ by showing that the StoNet has asymptotically the same loss function as a DNN under appropriate conditions, and further justify its use for sufficient dimension reduction. 
 
\subsection{The StoNet}

Consider a DNN model with $h$ hidden layers. For the sake of simplicity, we assume that the same activation function 
$\psi$ is used for each hidden unit. By separating the feeding and activation operators of each 
hidden unit, we can rewrite the DNN in the following form 
\begin{equation}\label{eq: nn reform}
    \begin{split}
    \tilde{\bY}_1 &= \bb_1 + \bw_1\bX, \\
    \tilde{\bY}_i &= \bb_i + \bw_i\Psi(\tilde{\bY}_{i-1}), \quad i=2,3,\dots,h,\\
    \bY &= \bb_{h+1} + \bw_{h+1}\Psi(\tilde{\bY}_{h}) + \be_{h+1}, 
    \end{split}
\end{equation}
where $\be_{h+1}\sim N(0, \sigma_{h+1}^2 I_{d_{h+1}})$ is Gaussian random error; $\tilde{\bY}_i, \bb_i \in \mathbb{R}^{d_i}$ for $i=1,2,\ldots,h$;  $\bY,\bb_{h+1} \in \mathbb{R}^{d_{h+1}}$;
$\Psi(\tilde{\bY}_{i-1})=(\psi(\tilde{\bY}_{i-1,1}), \psi(\tilde{\bY}_{i-1,2}), \ldots,\psi(\tilde{\bY}_{i-1,d_{i-1}}))^T$ for $i=2,3,\ldots,h+1$, $\psi(\cdot)$ is the activation function, and $\tilde{\bY}_{i-1,j}$ is the $j$th element of $\tilde{\bY}_{i-1}$; $\bw_i \in \mathbb{R}^{d_i \times d_{i-1}}$ for $i=1,2,\ldots, h+1$, and $d_0=p$ denotes the dimension of $\bX$.  
For simplicity, we consider only the regression problems in 
(\ref{eq: nn reform}). 
By replacing the third equation in (\ref{eq: nn reform}) with a logit model, the DNN can be trivially extended to the classification problems.

\begin{wrapfigure}{r}{0.45\textwidth}
    \includegraphics[width = 0.45\textwidth]{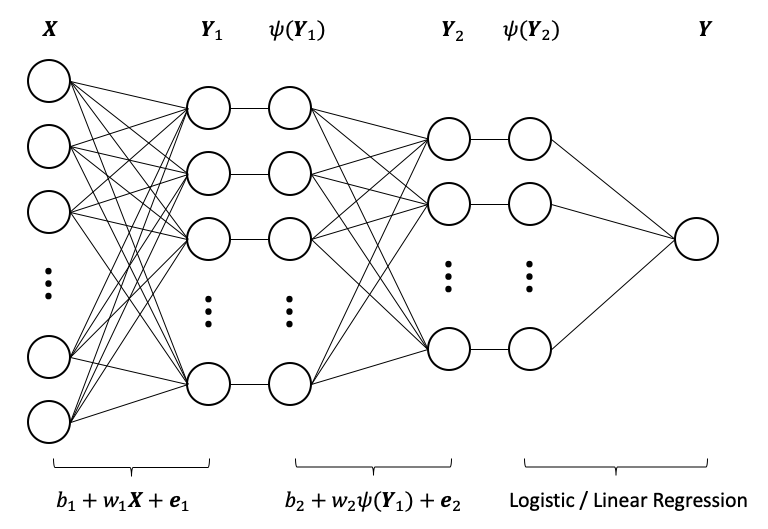}
    \caption{An illustrative plot for the structure of a  StoNet with two hidden layers.}
    \label{fig:stonetl}
\vspace{-0.45in}
\end{wrapfigure}

The StoNet, as a probabilistic deep learning model, can be constructed by adding auxiliary noise to
$\tilde{\bY}_i$'s, $i = 1, 2, \dots, h$ in (\ref{eq: nn reform}). Mathematically, the StoNet is given by 
\begin{equation}\label{eq: stonet}
    \begin{split}
    \bY_1 &= \boldsymbol{b}_1 + \bw_1\bX + \be_1, \\
    \bY_i &= \boldsymbol{b}_i + \bw_i\Psi(\bY_{i-1})+ \be_i, \quad i=2,3,\dots,h,\\
    \bY &= \boldsymbol{b}_{h+1} + \bw_{h+1}\Psi(\bY_{h}) + \be_{h+1}, 
    \end{split}
\end{equation}
where $\bY_1, \bY_2, \dots, \bY_h$ can be viewed as latent variables. Further, we assume that $\be_i \sim N(0, \sigma_i^2 I_{d_i})$ for $i=1,2,\dots, h, h+1$.
For classification networks, the parameter $\sigma_{h+1}^2$ plays the role of temperature for the binomial or multinomial distribution formed at the output layer, which works with $\{\sigma_1^2,\ldots,\sigma_h^2\}$ together to control the variation of the latent variables $\{\bY_1,\ldots,\bY_h\}$.   Figure \ref{fig:stonetl} depicts the architecture of the StoNet. 
In words, the StoNet has been formulated as a composition of many simple linear/logistic regressions, which makes its structure more designable and interpretable. Refer to Section \ref{sec:Discussion} for more discussions on this issue. 


\subsection{The StoNet as an Approximator to a DNN}

To show that the StoNet is a valid approximator to a DNN,  i.e., asymptotically they have the same loss function, the following conditions are imposed on the model. To indicate their dependence on the training sample size $n$, we rewrite $\sigma_i$ as $\sigma_{n,i}$ for $i=1,2,\dots, h+1$. Let 
$\btheta_i=(\bw_i,\bb_i)$, let
$\btheta = (\btheta_1, \btheta_2 \cdots,\btheta_{h+1})$ denote the parameter vector of StoNet, let $d_{\theta}$ denote the dimension of $\btheta$, and 
let $\Theta$ denote the space of $\btheta$.

\begin{assumption}\label{assumption:1}
(i) $\Theta$ is compact, i.e., $\Theta$ is contained in a $d_{\theta}$-ball centered at 0 with radius $r$; (ii) $\mathbb{E}(\log\pi(\bY|\bX, \btheta))^2<\infty$ for any $\btheta \in \Theta$;
(iii) the activation function $\psi(\cdot)$ is $c'$-Lipschitz continuous for some constant $c'$;
(iv) the network's depth $h$ and widths $d_i$'s are both allowed to increase with $n$;
(v) $\sigma_{n,1} \leq \sigma_{n,2} \leq \cdots \leq \sigma_{n,h+1}$, $\sigma_{n, h+1}=O(1)$, and $d_{h+1} (\prod_{i=k+1}^h d_i^2) d_k \sigma^2_{n,k} \prec \frac{1}{h}$ for any $k\in\{1,2,\dots,h\}$.
\end{assumption}

Condition (i) is more or less a technical condition. As shown in Lemma S1 (in supplementary material), the proposed training algorithm for the StoNet ensures the estimates of $\btheta$ to be $L_2$-upper bounded.  
Condition (ii) is the regularity condition for the distribution of $\bY$. Condition (iii) can be satisfied by many activation functions such as {\it tanh}, {\it sigmoid} and {\it ReLU}. Condition (v) constrains the size of the noise added to each hidden layer such that the StoNet has asymptotically the same loss function as the DNN when the training sample size becomes large,  where the factor $d_{h+1} (\prod_{i=k+1}^h d_i^2) d_k$ is derived in the 
proof of Theorem \ref{theorem: same loss} and it can be understood as the amplification factor 
of the noise $\be_k$ at the output layer. 

Let $L: \Theta\rightarrow \mathbb{R}$ denote the loss function of the DNN as defined in (\ref{eq: nn reform}),  which is given by
\begin{equation}
    L(\btheta) = -\frac{1}{n}\sum_{i=1}^n\log\pi(\bY^{(i)}|\bX^{(i)}, \btheta),
\end{equation}
where $n$ denotes the training sample size, and $i$ indexes the training samples.
Theorem \ref{theorem: same loss} shows that the StoNet and the DNN have asymptotically the same training loss function.

\begin{theorem}\label{theorem: same loss}
Suppose Assumption \ref{assumption:1} holds. Then the StoNet (\ref{eq: stonet}) and the neural network (\ref{eq: nn reform}) have asymptotically the same loss function, i.e.,
\begin{equation}\label{eq:same loss}
    \sup_{\btheta\in \Theta}\left|\frac{1}{n}\sum_{i=1}^n\log\pi(\bY^{(i)}, \bY^{(i)}_{mis}|\bX^{(i)},\btheta)-\frac{1}{n}\sum_{i=1}^n\log\pi(\bY^{(i)}|\bX^{(i)},\btheta)\right|\overset{p}{\rightarrow}0,\quad as \quad n\rightarrow\infty,
\end{equation}
where $\bY_{mis}=(\bY_1,\bY_2,\dots,\bY_h)$ denotes the collection of all latent variables in the StoNet (\ref{eq: stonet}).
\end{theorem}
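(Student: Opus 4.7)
The plan is to write both log-likelihoods in explicit Gaussian form, reduce the problem to controlling how the layer noises $\be_1,\ldots,\be_h$ propagate through the network, and then invoke Assumption~\ref{assumption:1}(v) to kill those contributions. Using the layer-wise Markovian structure of the StoNet,
\[
\log\pi(\bY,\bY_{mis}|\bX,\btheta)=\sum_{i=1}^{h+1}\log\pi(\bY_i|\bY_{i-1},\btheta_i),
\]
with $\bY_0:=\bX$ and $\bY_{h+1}:=\bY$, each factor is the log-density of $N(\bb_i+\bw_i\Psi(\bY_{i-1}),\sigma_{n,i}^2 I_{d_i})$ evaluated at $\bY_i$. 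The DNN log-likelihood corresponds to replacing every $\bY_i$ by its deterministic forward-pass value $\tilde\bY_i$, so in the DNN case the first $h$ intermediate residuals vanish identically. The difference of the two loss functions therefore reduces to (i) the intermediate Gaussian pieces $\|\be_i\|^2/(2\sigma_{n,i}^2)$ plus their $\btheta$-independent normalizing constants, and (ii) the gap at the last layer produced by substituting $\Psi(\bY_h)$ for $\Psi(\tilde\bY_h)$.

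The central step is a recursive noise-propagation bound. Subtracting the recurrences in (\ref{eq: nn reform}) and (\ref{eq: stonet}) and using the $c'$-Lipschitzness of $\psi$ gives
\[
\|\bY_i-\tilde\bY_i\|\le c'\|\bw_i\|_{\mathrm{op}}\,\|\bY_{i-1}-\tilde\bY_{i-1}\|+\|\be_i\|,
\]
which unrolls to
\[
\|\bY_h-\tilde\bY_h\|\le \sum_{k=1}^h \|\be_k\|\prod_{j=k+1}^h c'\|\bw_j\|_{\mathrm{op}}.
\]
Assumption~\ref{assumption:1}(i) bounds each $\|\bw_j\|_{\mathrm{op}}$ by a constant multiple of $\sqrt{d_j d_{j-1}}$, and Gaussian concentration yields $\|\be_k\|^2=O_p(d_k\sigma_{n,k}^2)$. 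Squaring the unrolled bound, passing through the last affine map $\bw_{h+1}\Psi(\cdot)$, and expanding the product yields an output-level error whose mean square is of order $\sum_{k=1}^h d_{h+1}\bigl(\prod_{j=k+1}^h d_j^2\bigr) d_k\sigma_{n,k}^2$, and Assumption~\ref{assumption:1}(v) makes this $o(1)$.

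With $\|\bY_h-\tilde\bY_h\|^2=o_p(1)$ in hand, the Lipschitz structure of $\Psi$, Assumption~\ref{assumption:1}(ii), and $\sigma_{n,h+1}=O(1)$ give
\[
\bigl|\log\pi(\bY|\bY_h,\btheta_{h+1})-\log\pi(\bY|\tilde\bY_h,\btheta_{h+1})\bigr|=o_p(1).
\]
The intermediate contributions evaluate to $-\sum_{i=1}^h \|\be_i\|^2/(2\sigma_{n,i}^2)$ up to $\btheta$-independent normalizing constants; their sample averages do not depend on $\btheta$ and so either cancel in the difference on the left-hand side of (\ref{eq:same loss}) or are absorbed into the convention that $\log\pi$ is defined up to terms that are constant in $\btheta$. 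Averaging over the $n$ training samples and using Assumption~\ref{assumption:1}(ii) for the requisite second-moment control then delivers pointwise-in-$\btheta$ convergence.

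Finally, to upgrade pointwise convergence to the uniform supremum over $\Theta$, I would use compactness of $\Theta$ together with stochastic equicontinuity: the Lipschitz continuity of $\psi$ and the uniformly bounded operator norms supply a common modulus of continuity for the sample loss in $\btheta$, and a standard $\epsilon$-net argument transfers pointwise to uniform convergence. The main obstacle is the noise-propagation step, because Assumption~\ref{assumption:1}(iv) permits both $h$ and the widths $d_i$ to grow with $n$, so every constant in the bound must be tracked explicitly rather than absorbed into a generic $O(1)$; in particular, the $1/h$ factor in Assumption~\ref{assumption:1}(v) is exactly what absorbs the outer sum over the $h$ noise layers.
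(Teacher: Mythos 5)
Your noise-propagation recursion and the use of Assumption \ref{assumption:1}(v) to kill the last-layer gap reproduce the core computation of the paper, but the way you dispose of the intermediate-layer contributions is a genuine gap. In your exact decomposition, the joint log-likelihood contains, for each hidden layer $\ell$, the term $-\tfrac{d_\ell}{2}\log(2\pi\sigma_{n,\ell}^2)-\tfrac{1}{2\sigma_{n,\ell}^2}\|\bY_\ell-\bb_\ell-\bw_\ell\Psi(\bY_{\ell-1})\|^2$, while the DNN log-likelihood $\log\pi(\bY|\bX,\btheta)$ has no matching term; hence nothing ``cancels in the difference'' on the left of (\ref{eq:same loss}), and the paper adopts no convention that $\log\pi$ is defined only up to $\btheta$-constant terms. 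Taken at face value these pieces are far from negligible: the quadratic part concentrates around $d_\ell/2$ per observation (and the widths may grow with $n$), and the normalizing constant diverges as $\sigma_{n,\ell}\to 0$. Their being (approximately) free of $\btheta$ would at best rescue the argmax consistency of Theorem \ref{Lemma: StoNet}, not the uniform convergence asserted in (\ref{eq:same loss}). There is also a consistency problem in identifying the layer-$\ell$ residual with the injected noise $\be_\ell$: that identification presumes $\bY_{mis}$ is the forward-pass draw at the same $\btheta$ at which the likelihood is evaluated, which cannot hold simultaneously for all $\btheta$ inside the supremum; the paper instead works in expectation with respect to $\pi(\bY_{mis}|\bX,\bY,\btheta)$ at each fixed $\btheta$.

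The paper's proof is organized precisely to avoid the terms you discard: it Taylor-expands $\log\pi(\bY,\bY_{mis}|\bX,\btheta)$ in $\bY_{mis}$ around the deterministic values $\tilde{\bY}$ of (\ref{eq: nn reform}), at which every intermediate residual vanishes and the joint loss coincides with the DNN loss, and then bounds the first-order term $\boldsymbol{\epsilon}^T\nabla_{\bY_{mis}}\log\pi(\bY,\tilde{\bY}|\bX,\btheta)$, together with your recursion for $\|\boldsymbol{\epsilon}_\ell\|$, in expectation over the imputed latent variables, finishing with a WLLN for the average over samples. That gradient couples adjacent layers and produces contributions of size $\sigma_{k-1}^2/\sigma_{j}^2$ times dimension products, which is exactly where the monotonicity $\sigma_{n,1}\le\cdots\le\sigma_{n,h+1}$ and the amplification factor $d_{h+1}(\prod_{i=k+1}^h d_i^2)d_k$ in Assumption \ref{assumption:1}(v) enter; your sketch never invokes the monotonicity, which signals that the hidden layers' conditional densities do real work your argument skips. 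If you wish to keep the exact-decomposition route, you must handle the full conditional densities of the hidden layers (not only the substitution $\Psi(\bY_h)\mapsto\Psi(\tilde{\bY}_h)$ at the output), e.g., by passing to expectations under $\pi(\bY_{mis}|\bX,\bY,\btheta)$ as the paper does; as written, the step ``intermediate contributions are constants in $\btheta$ and can be dropped'' is where the proof fails. Your $\epsilon$-net plan for uniformity is reasonable, and indeed more explicit than the paper's appeal to compactness, but it does not repair this step.
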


Let $Q^*(\btheta)=\mathbb{E}(\log\pi(\bY|\bX,\btheta))$, where the expectation is taken with respect to the joint distribution $\pi(\bX,\bY)$.
By Assumption \ref{assumption:1}-$(i)\&(ii)$ and the law of large numbers,
\begin{equation}\label{eq:same loss2}
   \frac{1}{n}\sum_{i=1}^n\log\pi(\bY^{(i)}|\bX^{(i)},\btheta)-Q^*(\btheta)\overset{p}{\rightarrow} 0
\end{equation}
holds uniformly over $\Theta$. Further, we assume the following condition hold for $Q^*(\btheta)$:
\begin{assumption}\label{assumption:2} (i) $Q^*(\btheta)$ is continuous in $\btheta$ and uniquely maximized at $\btheta^*$; (ii) for any $\epsilon>0$, $sup_{\btheta\in\Theta\backslash B(\epsilon)}Q^*(\btheta)$ exists, where $B(\epsilon)=\{\btheta:\|\btheta-\btheta^*\|<\epsilon\}$, and $\delta=Q^*(\btheta^*)-sup_{\btheta\in\Theta\backslash B(\epsilon)}Q^*(\btheta)>0$.
\end{assumption}

Assumption \ref{assumption:2} is more or less a technical assumption. As shown in \cite{nguyen2017loss} (see also \cite{Gori1992OnTP}), for a fully connected DNN, almost all local energy minima are globally optimal if the width of one hidden layer of the DNN is no smaller than the training sample size and the network structure from this layer on is pyramidal. 
Similarly, \cite{AllenZhu2019ACT}, \cite{Du2019GradientDF}, \cite{ZouGu2019}, and
\cite{Zou2020GradientDO} proved that the gradient-based algorithms with random initialization can converge to the global optimum provided that the width of the DNN is polynomial in training sample size. All the existing theory implies that this assumption should not be a practical concern for StoNet as long as its structure is large enough, possibly over-parameterized, such that the data can be well fitted. 
Further, we assume that each $\btheta$ for the DNN is unique up to loss-invariant transformations, such as reordering some hidden units and simultaneously changing the signs of some weights and biases. Such an implicit assumption has often been used in theoretical studies for neural networks, see e.g. \cite{liang2018bayesian} and  
\cite{SunSLiang2021} for the detail.

\begin{theorem}\label{Lemma: StoNet}
Suppose Assumptions \ref{assumption:1} and  \ref{assumption:2} hold, and $\pi(\bY, \bY_{mis}|\bX,\btheta)$ is continuous in $\btheta$. Let $\hat{\btheta}_n = \arg\max_{\btheta\in\Theta}\{\frac{1}{n}\sum_{i=1}^n\log\pi(\bY^{(i)},\bY_{mis}^{(i)}|\bX^{(i)},\btheta)\}$. Then $\|\hat{\btheta}_n-\btheta^*\|\overset{p}{\rightarrow}0$ as $n\rightarrow \infty$.
\end{theorem}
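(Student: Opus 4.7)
The plan is to use the classical Wald-type consistency argument for M-estimators: derive uniform convergence of the empirical complete-data objective to the identifiable population limit $Q^*$, and then exploit the well-separated maximum provided by Assumption \ref{assumption:2}.

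First, I would establish that
\[
M_n(\btheta) := \frac{1}{n}\sum_{i=1}^n \log \pi(\bY^{(i)}, \bY_{mis}^{(i)}|\bX^{(i)}, \btheta)
\]
converges to $Q^*(\btheta)$ uniformly over $\Theta$ in probability. Let
$L_n(\btheta) := \frac{1}{n}\sum_{i=1}^n \log \pi(\bY^{(i)}|\bX^{(i)}, \btheta)$ be the marginal empirical log-likelihood of the underlying DNN. The triangle inequality gives
\[
\sup_{\btheta\in\Theta}|M_n(\btheta)-Q^*(\btheta)| \;\leq\; \sup_{\btheta\in\Theta}|M_n(\btheta)-L_n(\btheta)| \;+\; \sup_{\btheta\in\Theta}|L_n(\btheta)-Q^*(\btheta)|.
\]
By Theorem \ref{theorem: same loss}, the first term on the right tends to zero in probability under Assumption \ref{assumption:1}, and by the uniform law of large numbers in (\ref{eq:same loss2}), which holds under Assumption \ref{assumption:1}(i)--(ii) together with the assumed continuity of $\log\pi$, the second term does too. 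Hence $\sup_{\btheta}|M_n(\btheta)-Q^*(\btheta)| \overset{p}{\to} 0$.

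Second, I would convert uniform convergence into consistency by a standard argmax argument. Existence of $\hat{\btheta}_n$ follows from compactness of $\Theta$ and continuity of $\pi(\bY,\bY_{mis}|\bX,\btheta)$ in $\btheta$. Since $\hat{\btheta}_n$ maximizes $M_n$, we have $M_n(\hat{\btheta}_n)\geq M_n(\btheta^*)$, so
\[
Q^*(\hat{\btheta}_n) \;\geq\; M_n(\hat{\btheta}_n) - o_p(1) \;\geq\; M_n(\btheta^*) - o_p(1) \;\geq\; Q^*(\btheta^*) - o_p(1),
\]
where each $o_p(1)$ is uniform in $\btheta$ by the previous step. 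Fix $\epsilon>0$. On the event $\{\hat{\btheta}_n\notin B(\epsilon)\}$, Assumption \ref{assumption:2}(ii) gives $Q^*(\hat{\btheta}_n)\leq Q^*(\btheta^*)-\delta$ for some $\delta>0$; combined with the display above, this forces $o_p(1)\geq \delta$, an event of vanishing probability. Therefore $\mathbb{P}(\|\hat{\btheta}_n-\btheta^*\|\geq\epsilon)\to 0$, as desired.

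The main obstacle has essentially already been absorbed into Theorem \ref{theorem: same loss}; given that result, the present argument is routine. The subtlety I would be careful about is the interpretation of the imputed latent values $\bY_{mis}^{(i)}$: Theorem \ref{theorem: same loss} and the definition of $\hat{\btheta}_n$ must be read as referring to the same $\bY_{mis}^{(i)}$ (for instance, $\bY_{mis}^{(i)}$ as a profile/argmax over the latent variables, or values sampled from the StoNet conditional at parameter $\btheta$), otherwise the first term in the triangle inequality above does not directly inherit from Theorem \ref{theorem: same loss}. A second minor check is that Assumption \ref{assumption:2}(i) guarantees $\btheta^*$ is well defined up to the loss-invariant symmetries the paper quotients out, so $\|\hat{\btheta}_n-\btheta^*\|$ is meaningful in the stated sense.
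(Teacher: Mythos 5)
Your proposal is correct and follows essentially the same route as the paper: the paper also obtains uniform convergence of the complete-data objective to $Q^*$ by combining Theorem \ref{theorem: same loss} with the uniform law of large numbers in (\ref{eq:same loss2}), and then applies a generic Wald-type argmax consistency lemma (Lemma S1 in the supplement, proved via a two-event argument equivalent to your inequality chain) under the well-separated maximum of Assumption \ref{assumption:2}. Your cautionary remarks about the interpretation of $\bY_{mis}^{(i)}$ and the loss-invariant symmetries match the paper's implicit conventions, so no gap remains.
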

 
This theorem implies that the DNN  (\ref{eq: nn reform}) can be trained by training the StoNet (\ref{eq: stonet}), which are asymptotically equivalent as the sample size $n$ becomes large. Refer to the supplement for its proof.

\subsection{Nonlinear Sufficient Dimension Reduction via  StoNet}
The joint distribution $\pi(\bY,\bY_{mis}|\bX,\btheta)$ for the StoNet can be factored as
\begin{equation}\label{eq:decomposition}
    \pi(\bY, \bY_{mis}|\bX,\btheta)=\pi(\bY_{1}|\bX,\btheta_1)[\prod_{i=2}^h\pi(\bY_{i}|\bY_{i-1},\btheta_i)]\pi(\bY|\bY_{h},\btheta_{h+1}),
\end{equation}
 based on the Markovian structure between layers  
 of the StoNet. Therefore,   
\begin{equation}\label{eq:StoNet SDR}
    \pi(\bY|\bY_{mis},\bX,\btheta) = \pi(\bY|\bY_h,\btheta_{h+1}).
\end{equation}
 By Proposition 2.1 of \cite{li2018sufficient}, Equation (\ref{eq:StoNet SDR}) is equivalent to
 $\bY\indep \bX | \bY_h$,
which coincides with the definition of nonlinear sufficient dimension reduction in (\ref{SDRnonlinear}). In summary, we have the  proposition:
\begin{proposition}\label{prop:1}
 For a well trained StoNet for the mapping $\bX \to \bY$,   the output of the last hidden layer $\bY_h$ satisfies SDR condition in (\ref{SDRnonlinear}).
\end{proposition}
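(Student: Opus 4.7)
The plan is to exploit the layered Markovian structure of the StoNet, which is built directly into the model (\ref{eq: stonet}): each $\bY_i$ depends only on $\bY_{i-1}$ (and parameters), and $\bY$ depends only on $\bY_h$. This is essentially already written down in the excerpt as the factorization (\ref{eq:decomposition}), so my job is mostly to turn that factorization into the conditional independence statement $\bY \indep \bX \mid \bY_h$.

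First, I would start from (\ref{eq:decomposition}) and divide by $\pi(\bY_{mis}\mid \bX,\btheta)$, which itself factors as $\pi(\bY_1\mid \bX,\btheta_1)\prod_{i=2}^h \pi(\bY_i\mid \bY_{i-1},\btheta_i)$, to read off (\ref{eq:StoNet SDR}), namely $\pi(\bY\mid \bY_{mis},\bX,\btheta)=\pi(\bY\mid \bY_h,\btheta_{h+1})$. Next, to pass from the full latent collection $\bY_{mis}$ down to just $\bY_h$, I would integrate out $\bY_1,\ldots,\bY_{h-1}$: writing
\begin{equation*}
\pi(\bY\mid \bY_h,\bX,\btheta) = \int \pi(\bY\mid \bY_h,\btheta_{h+1})\,\pi(\bY_1,\ldots,\bY_{h-1}\mid \bY_h,\bX,\btheta)\,d\bY_1\cdots d\bY_{h-1},
\end{equation*}
and noting that the first factor in the integrand does not depend on $\bX$ or on $\bY_1,\ldots,\bY_{h-1}$, I can pull it outside, leaving $\pi(\bY\mid \bY_h,\bX,\btheta)=\pi(\bY\mid \bY_h,\btheta_{h+1})$. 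This is precisely the equivalent characterization of conditional independence, so invoking Proposition 2.1 of \cite{li2018sufficient} as cited in the text yields $\bY \indep \bX \mid \bY_h$, which is the SDR condition (\ref{SDRnonlinear}) with the choice $f(\bX)=\bY_h$.

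The only step that requires any care, and the place where the ``well trained'' qualifier does real work, is making sure that the Markov factorization (\ref{eq:decomposition}) is the distribution under which the conditional independence is being asserted: the factorization is a structural property of the StoNet as specified in (\ref{eq: stonet}), not of the data-generating distribution of $(\bX,\bY)$. By Theorem \ref{Lemma: StoNet} the MLE of the StoNet is consistent for $\btheta^*$, and by Theorem \ref{theorem: same loss} the StoNet is asymptotically equivalent in loss to the DNN (\ref{eq: nn reform}); together these justify replacing the true conditional distribution $\pi(\bY\mid \bX)$ by the trained StoNet conditional $\pi(\bY\mid \bX,\hat{\btheta}_n)$ at a well-trained parameter value. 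I would therefore phrase the statement as: under the fitted model at $\btheta=\hat{\btheta}_n$, the above factorization and the cited proposition give $\bY \indep \bX \mid \bY_h$, and consistency ensures that this is the correct conditional structure asymptotically.

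I do not expect a serious technical obstacle here; the main thing to get right is being explicit that the conditional independence is a direct consequence of the Markov structure baked into the StoNet and the simple regression form of $\pi(\bY\mid \bY_h,\btheta_{h+1})$, rather than of any approximation-theoretic argument, so the proof can be kept quite short and essentially consists of formalizing the reasoning already given in the two displays (\ref{eq:decomposition}) and (\ref{eq:StoNet SDR}).
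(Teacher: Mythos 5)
Your proposal is correct and follows essentially the same route as the paper: the paper's proof is exactly the argument preceding the proposition, namely the Markov factorization (\ref{eq:decomposition}), the resulting identity (\ref{eq:StoNet SDR}), and Proposition 2.1 of \cite{li2018sufficient} to conclude $\bY \indep \bX \mid \bY_h$. Your explicit marginalization over $\bY_1,\ldots,\bY_{h-1}$ and your remarks on how Theorems \ref{theorem: same loss} and \ref{Lemma: StoNet} justify the ``well trained'' qualifier are sensible elaborations of the same argument rather than a different approach.
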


The proof simply follows the above arguments and the properties of the StoNet.
Proposition \ref{prop:1} implies that the StoNet can be a useful and flexible tool for nonlinear SDR. However, the conventional optimization algorithm such as stochastic gradient descent (SGD) is no longer applicable for training the StoNet. In the next section, we propose to train the StoNet using an adaptive stochastic gradient MCMC algorithm. 
At the end of the paper, we discuss how to determine the dimension of $\bY_h$ via regularization at the output layer of the StoNet. 

\section{An Adaptive Stochastic Gradient MCMC algorithm} \label{adaptiveSGMCMC} 

\subsection{Algorithm Establishment}

Adaptive stochastic gradient MCMC algorithms have been developed in \cite{deng2019adaptive} and \cite{DengLiang2020Contour}, which work under the framework of stochastic approximation MCMC \cite{Albert90}. Suppose that we are interested in solving the mean field equation 
\begin{equation} \label{rooteq}
\mathbb{E} [ H({\boldsymbol Z},\btheta)] =\int H(\bZ,\btheta) \pi(\bZ|\btheta)d\bZ =0,
\end{equation}
where $\pi(\bZ|\btheta)$ denotes a probability density function parameterized by $\btheta$. The adaptive stochastic gradient MCMC algorithm works by iterating between the steps: (i) {\it sampling}, which is to generate a Monte Carlo sample $\bZ^{(k)}$ from a transition kernel that leaves $\pi(\bZ|\btheta^{(k)})$ as the equilibrium distribution; 
and (ii) {\it parameter updating}, which is to update $\btheta^{(k)}$ based on the current sample $\bZ^{(k)}$ in a stochastic approximation scheme. These algorithms are said ``adaptive'' as the transition kernel used in step (i) changes with iterations through the working estimate $\btheta^{(k)}$. 

By Theorem \ref{Lemma: StoNet}, the StoNet can be trained by solving the equation 
\begin{equation} \label{rooteq2}
 \mathbb{E}[ H(\bY_{mis},\btheta)]=\int  H(\bY_{mis},\btheta) \pi(\bY_{mis}|\btheta,\bX,\bY) d\bY_{mis} =0,
 \end{equation}
where $H(\bY_{mis},\btheta)=\nabla_{\btheta} \log\pi(\bY,\bY_{mis}|\bX,\btheta)$. Applying the adaptive stochastic gradient MCMC algorithm to (\ref{rooteq2}) leads to Algorithm 
\ref{stonetalgorithm}, where stochastic gradient Hamilton Monte Carlo (SGHMC) \cite{SGHMC2014} is used for simulating the latent variables $\bY_{mis}$. 
Algorithm \ref{stonetalgorithm} is expected to outperform the basic algorithm by \cite{deng2019adaptive}, where SGLD is used in the sampling step, due to the accelerated convergence of SGHMC over SGLD \cite{Nemeth2019StochasticGM}. 
In Algorithm \ref{stonetalgorithm}, we let $(\bY_0^{(s.k)},\bY_{h+1}^{(s.k)})=(\bX^{(s)},\bY^{(s)})$ denote a training sample $s$, and let
$\bY_{mis}^{(s.k)}=(\bY_1^{(s.k)},\ldots,\bY_h^{(s.k)})$ denote the latent variables imputed for the training sample $s$ at iteration $k$. 


\begin{algorithm}[htbp] 
\SetAlgoLined
 \textbf{Input}: total iteration number $K$,  Monte Carlo step number $t_{HMC}$, the learning rate sequence $\{\epsilon_{k,i}:t=1,2,\ldots,T;i=1,2,\ldots,h+1\}$, 
 and the step size sequence $\{\gamma_{k,i}: t=1,2,\ldots,T; i=1,2,\ldots,h+1\}$\;
 \textbf{Initialization}: Randomly initialize the network parameters $\hat{\btheta}^{(0)}=(\hat{\theta}_1^{(0)},\ldots, \hat{\theta}_{h+1}^{(0)})$\;
 \For{k=1,2,\dots,K}{
   \textbf{STEP 0: Subsampling}: Draw a mini-batch of data and denote it by $S_k$\;
   
   \textbf{STEP 1: Backward Sampling}\\
    For each observation $s\in S_k$, sample $\bY_i$'s in the order from layer $h$ to layer $1$. More explicitly, we sample $\bY_i^{(s,k)}$ from the distribution 
    \[
    \small
    \pi(\bY_i^{(s,k)}|\hat{\theta}_i^{(k-1)}, \hat{\theta}_{i+1}^{(k-1)},\bY_{i+1}^{(s,k)},\bY_{i-1}^{(s,k)})\propto \pi(\bY_{i+1}^{(s,k)}|\hat{\theta}_{i+1}^{(k-1)},\bY_{i}^{(s,k)})\pi(\bY_{i}^{(s,k)}|\hat{\theta}_{i}^{(k-1)},\bY_{i-1}^{(s,k)})
    \]
    by running SGHMC in $t_{HMC}$ steps:\\
    Initialize $\boldsymbol{v}_i^{(s,0)}=\boldsymbol{0}$, and initialize $\bY_i^{(s,k,0)}$ by the corresponding  $\tilde{\bY}_i$ calculated in (\ref{eq: nn reform}). \\
    \For{$l=1,2,\dots,t_{HMC}$}{
    \For{$i=h,h-1,\dots,1$}{
    \begin{equation}
    \label{eta_hmc}
    \small
    \begin{aligned} 
    \boldsymbol{v}_{i}^{(s, k, l)}=&(1-\epsilon_{k, i} \eta) \boldsymbol{v}_{i}^{(s, k, l-1)}+\epsilon_{k, i} \nabla_{\bY_{i}^{(s, k, l-1)}} \log \pi\left(\bY_{i}^{(s, k, l-1)} \mid \hat{\theta}_{i}^{(k-1)}, \bY_{i-1}^{(s, k, l-1)}\right) \\ 
    &+\epsilon_{k, i} \nabla_{\bY_{i}^{(s, k, l-1)}} \log \pi\left(\bY_{i+1}^{(s, k, l-1)} \mid \hat{\theta}_{i+1}^{(k-1)}, \bY_{i}^{(s, k, l-1)}\right)+\sqrt{2 \epsilon_{k, i} \eta} \be^{(s, k, l)}, \\ 
    \bY_{i}^{(s, k, l)}=& \bY_{i}^{(s, k, l-1)}+ \epsilon_{k, i} \boldsymbol{v}_{i}^{(s, k, l-1)} ,
    \end{aligned}
    \end{equation}
    where $\be^{s,k,l}\sim N(0,\boldsymbol{I}_{d_i})$, $\epsilon_{k,i}$ is the learning rate, and $\eta$ is the friction coefficient.  
    }}
    Set $\bY_i^{(s,k)}=\bY_i^{(s,k,t_{HMC})}$ for $i=1,2,\dots,h$.\\
    
    \textbf{STEP 2: Parameter Update}\\
    Update the estimates of $\hat{\btheta}^{(k-1)}= (\hat{\theta}_1^{(k-1)},\hat{\theta}_2^{(k-1)},\ldots,\hat{\theta}_{h+1}^{(k-1)})$ by 
  \[
  \small
  \hat{\theta}_i^{(k)} = \hat{\theta}_i^{(k-1)}+ \gamma_{k,i}\frac{n}{|S_k|} \sum_{s\in S_k}
   \nabla_{\theta_i} \log \pi(Y_i^{(s,k)}| \hat{\theta}_i^{(k-1)}, Y_{i-1}^{(s,k)}), \quad i=1,2,\ldots,h+1,
  \]
  where $\gamma_{k,i}$ is the step size used for updating $\theta_i$. 
 }
 \caption{An Adaptive SGHMC algorithm for training StoNet}
 \label{stonetalgorithm}
\end{algorithm}

 To make the computation for the StoNet scalable with respect to the training sample size, we train the parameter $\btheta$ with mini-batch data and then extract the SDR predictor $\bY_h$ with the full dataset; that is, we can run Algorithm \ref{stonetalgorithm} in two stages, namely, $\btheta$-training and SDR. In the $\btheta$-training stage, the algorithm is run with mini-batch data until convergence of $\btheta$ has been achieved; and in the SDR stage, the algorithm is run with full data for a small number of iterations. In this paper, we typically set the number of iterations/epochs of the SDR stage to 30. The proposed algorithm has the same order of computational complexity as the standard SGD algorithm, although it can be a little slower than SGD due to multiple iterations being performed at each backward sampling step.

 \subsection{Convergence Analysis of Algorithm \ref{stonetalgorithm}}
\noindent {\it Notations:} 
We let $\bD=(D_1,D_2,\ldots,D_n)$ denote a dataset of $n$ observations.
 For StoNet, $D_i$ has included both the input and output variables of the observation. We let $\bY_{mis}=\bZ=(Z_1,Z_2,\ldots,Z_n)$, where $Z_i$ is the latent variable corresponding to $D_i$, and let $f_{D_i}(z_i,\btheta)=-\log \pi(z_i|D_i,\btheta)$. Let $\bz=(z_1, z_2, \ldots,z_n)$ be a realization of $(Z_1,Z_2,\ldots,Z_n)$, and let $F_{\bD}(\bZ, \btheta)=\sum_{i=1}^n f_{D_i}(z_i,\btheta)$.
For simplicity, we assume $\epsilon_{k,i}=\epsilon_k$ for $i=1,2,\ldots,h$, and $\gamma_{k,i}=\gamma_k$ for $i=1,2,\ldots,h+1$.  

To facilitate theoretical study, one iteration of Algorithm \ref{stonetalgorithm} is 
rewritten in the new notations as follows.   
\begin{itemize} 
\item[(i)] ({\it Sampling}) Simulate the latent variable ${\boldsymbol Z}$ by setting 
 \begin{equation} \label{SGHMCeq000} 
 \begin{split}
   \bV^{(k+1)} & =(1-\epsilon_{k+1} \eta) \bV^{(k)} -\epsilon_{k+1} \nabla_{\bZ} \hat{F}_{\bD}(\bZ^{(k)},\btheta^{(k)}) + \sqrt{2 \eta \epsilon_{k+1}/\beta} \be_{k+1}, \\
   \bZ^{(k+1)}& =\bZ^{(k)}+\epsilon_{k+1} \bV^{(k)}, \\
  \end{split} 
 \end{equation}
 where $\eta$ is the friction coefficient, $\beta$ is the inverse temperature, $k$ indexes the iteration, $\epsilon_{k+1}$ is the learning rate, and $\nabla_{\bZ} \hat{F}_{\bD}(\bZ^{(k)},\btheta^{(k)})$ is an estimate of 
 $\nabla_{\bZ} F_{\bD}(\bZ^{(k)},\btheta^{(k)})$. 

\item[(ii)] ({\it Parameter updating}) Update the parameters $\btheta$ by setting  
\begin{equation} \label{SGHMCeq001}
 \btheta^{(k+1)}=\btheta^{(k)}+ \gamma_{k+1} H(\bZ^{(k+1)},\btheta^{(k)}), 
 \end{equation}
where $\gamma_{k+1}$ is the step size, $H(\bZ^{(k+1)}, \btheta^{(k)})=\frac{n}{|S_k|}\sum_{i\in S_k}f_{D_i}(\bZ^{(k+1)}, \btheta^{(k)})$, and $S_k$ denotes a minibatch of the full dataset. 
\end{itemize}


\begin{theorem} \label{theta_con}
Suppose Assumptions \ref{ass: 1}-\ref{ass: 5b} (in the supplementary material) hold. 
If we set $\epsilon_k=C_{\epsilon}/(c_e+k^{\alpha})$ and $\gamma_k=C_{\gamma}/(c_g+k^{\alpha})$ for some constants $\alpha \in (0,1)$, $C_{\epsilon}>0$, $C_{\gamma}>0$, $c_e\geq 0$ and $c_g \geq 0$,  then there exists an iteration $k_0$ and a constant $\lambda_0>0$ such that for any $k>k_0$, 
\begin{equation}
\mathbb{E}(\|\btheta^{(k)} - \btheta^*\|^2) \leq \lambda_0 \gamma_{k},
\end{equation}
where $\btheta^*$ denotes a solution to the equation (\ref{rooteq2}), and 
the explicit form of $\lambda_0$ is given in Theorem \ref{theta_convergence}.
\end{theorem}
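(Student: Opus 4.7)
The plan is to establish the recursion for the Lyapunov function $V(\btheta) = \|\btheta - \btheta^*\|^2$ and control each error source separately. Under the stochastic approximation framework, write
\begin{equation*}
H(\bZ^{(k+1)},\btheta^{(k)}) = h(\btheta^{(k)}) + \bigl[H(\bZ^{(k+1)},\btheta^{(k)}) - h(\btheta^{(k)})\bigr],
\end{equation*}
where $h(\btheta) = \int H(\bz,\btheta)\,\pi(\bz|\btheta)\,d\bz$ is the mean field, vanishing at $\btheta^*$ by definition. Taking expectations in the one-step expansion
\begin{equation*}
\|\btheta^{(k+1)} - \btheta^*\|^2 = \|\btheta^{(k)} - \btheta^*\|^2 + 2\gamma_{k+1}\langle \btheta^{(k)} - \btheta^*, H(\bZ^{(k+1)},\btheta^{(k)})\rangle + \gamma_{k+1}^2 \|H(\bZ^{(k+1)},\btheta^{(k)})\|^2,
\end{equation*}
the dominant term is the drift produced by $h(\btheta^{(k)})$, and the remaining contributions must be shown to be lower-order.

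First, I would invoke the regularity/stability assumptions (Lipschitz property of $H$, the bounded-parameter result from Lemma S1, and a Polyak--Lojasiewicz-type or local dissipativity condition $\langle \btheta - \btheta^*, h(\btheta)\rangle \le -c\|\btheta - \btheta^*\|^2$ for some $c>0$) to conclude that the pure mean-field term contributes a contraction $(1 - 2c\gamma_{k+1})\|\btheta^{(k)} - \btheta^*\|^2$. The residual term $\gamma_{k+1}^2 \mathbb{E}\|H(\bZ^{(k+1)},\btheta^{(k)})\|^2$ is $O(\gamma_{k+1}^2)$ provided $H$ has at most linear growth, which follows from the regularity conditions and compactness of $\Theta$.

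The main obstacle, and the step I would spend most effort on, is bounding the sampling bias $\mathbb{E}\langle \btheta^{(k)} - \btheta^*, H(\bZ^{(k+1)},\btheta^{(k)}) - h(\btheta^{(k)})\rangle$. Because $\bZ^{(k+1)}$ is produced by one step of SGHMC with learning rate $\epsilon_{k+1}$ targeting $\pi(\bz|\btheta^{(k)})$, it is not distributed as $\pi(\cdot|\btheta^{(k)})$. I would decompose this bias into (a) the stochastic-gradient noise from the minibatch, which is mean-zero conditional on the history and hence contributes only through its variance at order $O(\gamma_{k+1}^2)$; (b) the discretization bias of SGHMC, which standard results (e.g., Chen--Fox--Lyon-type bounds or the analysis in \cite{SGHMC2014} and \cite{DengLiang2020Contour}) show is $O(\epsilon_{k+1})$; and (c) the non-stationarity error introduced by the drift of $\btheta^{(k)}$ between sampling steps, controllable by the Lipschitz continuity of $H$ in $\btheta$ and by $\|\btheta^{(k+1)} - \btheta^{(k)}\| = O(\gamma_{k+1})$. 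Combining these gives a total bias bound of order $\epsilon_{k+1} + \gamma_{k+1}$, which after multiplication by $\gamma_{k+1}$ and using the balancing $\epsilon_k \asymp \gamma_k \asymp k^{-\alpha}$ yields $O(\gamma_{k+1}^2)$ cross terms.

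Finally, I would assemble everything into a recursion of the form
\begin{equation*}
\mathbb{E}\|\btheta^{(k+1)} - \btheta^*\|^2 \le (1 - 2c\gamma_{k+1})\,\mathbb{E}\|\btheta^{(k)} - \btheta^*\|^2 + C\gamma_{k+1}^2
\end{equation*}
valid for $k$ sufficiently large (so that $2c\gamma_{k+1} < 1$), and then apply a standard induction/Chung-type lemma on such recursions with $\gamma_k = C_\gamma/(c_g + k^\alpha)$ and $\alpha \in (0,1)$ to obtain $\mathbb{E}\|\btheta^{(k)} - \btheta^*\|^2 \le \lambda_0 \gamma_k$ for some explicit $\lambda_0$ depending on $c$, $C$, $C_\gamma$, and the initial condition. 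The explicit form of $\lambda_0$ would be recovered by keeping track of constants through this recursion, as referenced in the deferred Theorem \ref{theta_convergence}.
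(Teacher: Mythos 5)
There is a genuine gap at the step you yourself flag as the crux: bounding $\mathbb{E}\langle \btheta^{(k)}-\btheta^*,\, H(\bZ^{(k+1)},\btheta^{(k)})-h(\btheta^{(k)})\rangle$. You propose to control the sampling bias by a ``one-step SGHMC discretization bias of order $O(\epsilon_{k+1})$,'' but no such bound is available here: $\bZ^{(k+1)}$ is obtained from a few SGHMC updates started at $\bZ^{(k)}$, which is nowhere near stationarity for the current target $\pi(\cdot\mid\btheta^{(k)},\bD)$, so the conditional law of $\bZ^{(k+1)}$ differs from $\pi(\cdot\mid\btheta^{(k)},\bD)$ by an amount that does not shrink like $\epsilon_{k+1}$ per iteration. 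The Chen--Fox-type results you cite control the bias of long-run averages or of the stationary law of the discretized dynamics, not a single-step distance from an arbitrary initial point; the paper's own Theorem \ref{Z_convergence} quantifies closeness to the target only as $T_k=\sum_j\epsilon_j\to\infty$, and it is not used in the proof of Theorem \ref{theta_con}. This is precisely why the hypothesis list includes Assumption \ref{ass: 5b}: the paper (following the adaptive SA-MCMC analysis of Deng et al.) writes $H(\btheta_k,\bz_{k+1})-h(\btheta_k)=\mu_{\btheta_k}(\bz_{k+1})-\mathcal{T}_{\btheta_k}\mu_{\btheta_k}(\bz_{k+1})$ via the Poisson equation, turning the fluctuation into a martingale difference plus telescoping remainders that are controlled through the Lipschitz regularity of $\mu_{\btheta}$ and Cauchy--Schwarz, without ever requiring the chain to be close to stationarity at any fixed iteration. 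Your proposal never invokes this assumption, and without it (or an equivalent mixing/drift argument for the inhomogeneous SGHMC chain) the bias term cannot be reduced to $O(\gamma_{k+1}^2)$ as claimed, so the recursion you assemble does not follow from the stated assumptions.

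A second, smaller mismatch: you lean on compactness of $\Theta$ and cite Lemma S1 for boundedness, but the convergence analysis in the supplement is explicitly carried out with $\Theta=\mathbb{R}^{d_\theta}$, and the uniform $L_2$ bounds $\sup_k\mathbb{E}\|\btheta^{(k)}\|^2$, $\sup_k\mathbb{E}\|\bV^{(k)}\|^2$, $\sup_k\mathbb{E}\|\bZ^{(k)}\|^2$ are themselves the main technical content of the paper's proof (Lemma \ref{boundlemma}), established by a joint Lyapunov function combining $F_{\bD}$, the kinetic terms of the SGHMC dynamics, and $\|\btheta-\btheta^*\|^2$ under the dissipativity and smoothness assumptions. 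Treating these bounds as given and the linear growth of $H$ as routine skips the part of the argument that actually required new work relative to the SGLD-based analysis of Deng et al.; your final Chung-type recursion step, by contrast, is consistent with how the paper concludes once the bias and moment bounds are in hand.
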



Let $\mu_{\bD, k}$ denote the probability law of $(\bZ^{(k)}, \bV^{(k)})$ given the dataset $\bD$, let $\pi_{\bD}$ denote the target distribution $\pi(\bz|\btheta^*,\bD)$, let $T_k = \sum_{i=0}^{k-1}\epsilon_{i+1}$, and let $\mathcal{H}_{\rho}(\cdot,\cdot)$ denote a semi-metric for probability distributions. Theorem \ref{Z_convergence} establishes convergence of $\mu_{\bD, k}$. 


\begin{theorem} \label{Z_convergence}
Suppose Assumptions \ref{ass: 1}-\ref{ass: 4} (in the supplementary material) hold. Then  for any $k \in \mathbb{N}$, 
\[  
\small
\mathcal{W}_{2}(\mu_{\bD,T_k},\pi_{\bD}) 
 \leq C\sqrt{\mathcal{H}_{\rho}(\mu_{0},\pi_{\bD})}e^{-\mu_{\ast}T_k} + \sqrt{ C_5 \log(T_k)} 
\left(\sqrt{\tilde{C}(k)} + \left(\frac{\tilde{C}(k)}{2}\right)^{1/4} \right) + \sqrt{C_6 T_k \sum_{j=1}^{k-1}\epsilon_{j+1}^2},
\]
which can be made arbitrarily small by choosing a large enough value of $T_k$ and small enough values of $\epsilon_1$ and $\gamma_1$, provided that $\{\epsilon_k\}$ and $\{\gamma_k\}$ are set as in Theorem \ref{theta_convergence}. Here $C_5$ and $C_6$ denote some constants and an explicit form of $\tilde{C}(k)$ is given in Theorem \ref{Lemma: 2}. 
\end{theorem}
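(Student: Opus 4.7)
The plan is to decompose $\mathcal{W}_2(\mu_{\bD,T_k},\pi_{\bD})$ via the triangle inequality by introducing a reference continuous-time underdamped Langevin diffusion $(\bar\bZ(t),\bar\bV(t))$ driven by the \emph{true} parameter $\btheta^*$, namely $d\bar\bZ=\bar\bV\,dt$, $d\bar\bV=-\eta\bar\bV\,dt-\nabla_{\bZ}F_{\bD}(\bar\bZ,\btheta^*)dt+\sqrt{2\eta/\beta}\,dW_t$, whose marginal invariant distribution in $\bZ$ is $\pi_{\bD}$. Let $\bar\nu_t$ denote its law at time $t$ started from $\mu_0$. Then
\[
\mathcal{W}_2(\mu_{\bD,T_k},\pi_{\bD})\;\le\;\mathcal{W}_2(\mu_{\bD,T_k},\bar\nu_{T_k})\;+\;\mathcal{W}_2(\bar\nu_{T_k},\pi_{\bD}),
\]
and the first term splits further into a pure discretization piece and an adaptive-parameter piece. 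The three resulting contributions are matched to the three terms in the stated bound.

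For the ergodicity term $\mathcal{W}_2(\bar\nu_{T_k},\pi_{\bD})$, I would invoke a geometric contraction estimate for the underdamped Langevin diffusion under dissipativity and gradient-smoothness (Assumptions \ref{ass: 1}--\ref{ass: 4}), in the style of Eberle--Guillin--Zimmer or Cheng \emph{et al.} (2018). Applied to the initial law $\mu_0$ versus the stationary $\pi_{\bD}$ and expressed through the semi-metric $\mathcal{H}_\rho$ (a twisted $L_2$--Lyapunov distance chosen so that the underdamped dynamics becomes a strict contraction), this yields the first piece $C\sqrt{\mathcal{H}_\rho(\mu_0,\pi_{\bD})}\,e^{-\mu_\ast T_k}$.

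For the coupling distance $\mathcal{W}_2(\mu_{\bD,T_k},\bar\nu_{T_k})$, I would couple the SGHMC iterates $\{(\bZ^{(k)},\bV^{(k)})\}$ to the continuous diffusion synchronously through the same Brownian increments. Taylor-expanding the drift along each step, the single-step error at iteration $j$ decomposes into: (a) a standard Euler--Maruyama term of order $O(\epsilon_{j+1}^2)$, which upon summation and propagation by Gronwall produces $\sqrt{C_6 T_k\sum_{j=1}^{k-1}\epsilon_{j+1}^2}$; (b) a stochastic-gradient variance term of the same order controlled by the minibatch variance bound; and (c) a parameter-mismatch bias $\nabla_{\bZ}F_{\bD}(\bZ^{(j)},\btheta^{(j)})-\nabla_{\bZ}F_{\bD}(\bZ^{(j)},\btheta^*)$. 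The Lipschitz-in-$\btheta$ condition on the gradient bounds (c) by $L\|\btheta^{(j)}-\btheta^*\|$, so by Theorem \ref{theta_con} its squared contribution is at most $L^2\lambda_0\gamma_j$. Summing these $\btheta$-errors over $j$ with weights $\epsilon_{j+1}$ (and accumulating through the Lyapunov framework of the underdamped dynamics, which introduces a standard $\log(T_k)$ factor familiar from decreasing-step Langevin analysis) produces the middle term $\sqrt{C_5\log(T_k)}\bigl(\sqrt{\tilde C(k)}+(\tilde C(k)/2)^{1/4}\bigr)$, where $\tilde C(k)$ consolidates $\sum_{j}\gamma_j\epsilon_{j+1}$--type aggregates of the parameter error and the fourth-root term reflects passing from $L_2$ to $\mathcal{W}_2$ through Cauchy--Schwarz on the square-root of a squared-distance bound.

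The main obstacle is disentangling the mutual dependence between the parameter chain $\{\btheta^{(k)}\}$ and the latent-variable chain $\{(\bZ^{(k)},\bV^{(k)})\}$, which evolve jointly and adaptively. My strategy is to condition on the $\sigma$-algebra generated by $\{\btheta^{(j)}\}_{j\le k}$, apply the continuous-diffusion coupling to the latent chain with frozen (but random) parameter path, then take outer expectation and use Theorem \ref{theta_con} to control $\mathbb{E}\|\btheta^{(j)}-\btheta^*\|^2\le \lambda_0\gamma_j$ uniformly. Cauchy--Schwarz is then used to bring expectations inside $\mathcal{W}_2$. Once this conditioning is in place, the final bound follows by combining the three contributions and verifying that the choices $\epsilon_k=C_\epsilon/(c_e+k^\alpha)$ and $\gamma_k=C_\gamma/(c_g+k^\alpha)$ from Theorem \ref{theta_con} ensure $T_k\to\infty$ while $\tilde C(k)\to 0$ and $T_k\sum_j\epsilon_{j+1}^2\to 0$ simultaneously, so every error component can be driven arbitrarily small by first choosing $T_k$ large and then $\epsilon_1,\gamma_1$ small.
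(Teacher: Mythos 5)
Your overall skeleton matches the paper's: triangle inequality through the law of a continuous-time underdamped Langevin diffusion run at $\btheta^*$, exponential ergodicity of that diffusion (the paper invokes Theorem 19 of Gao et al., itself in the Eberle--Guillin--Zimmer style you mention) for the $C\sqrt{\mathcal{H}_{\rho}(\mu_0,\pi_{\bD})}e^{-\mu_*T_k}$ piece, a decomposition of the remaining error into discretization, stochastic-gradient noise, and parameter-mismatch contributions, and the use of Theorem \ref{theta_con} to control $\mathbb{E}\|\btheta^{(j)}-\btheta^*\|^2\leq\lambda_0\gamma_j$. However, there is a genuine gap in how you propose to bound $\mathcal{W}_2(\mu_{\bD,T_k},\nu_{\bD,T_k})$. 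A synchronous coupling with shared Brownian increments plus a Gronwall argument propagates the per-step errors through the Lipschitz drift over the whole horizon and therefore produces a prefactor of order $e^{cMT_k}$ (the underdamped dynamics with a merely dissipative, non-convex $F_{\bD}$ is not a contraction under synchronous coupling in $L_2$; the contraction results you cite require reflection-type couplings and hold only in specially twisted semi-metrics). Such a bound cannot yield the stated inequality with the benign $\sqrt{C_5\log(T_k)}$ factor, and it is incompatible with the theorem's logic, which needs $T_k$ large to kill the ergodicity term. Your explanation of the fourth-root term as ``Cauchy--Schwarz on the square root of a squared-distance bound'' is also not correct.

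The paper's mechanism is different and is the missing key idea: it introduces an auxiliary continuous-time interpolation $(\tilde{\bZ}(t),\tilde{\bV}(t))$ of the SGHMC iterates, applies the Girsanov theorem to bound the relative entropy $D(\tilde{\mathbb{P}}_{T_k}\Vert\mathbb{P}_{T_k})$ between the interpolated process and the true diffusion by $\tilde{C}(k)$ --- relative entropy accumulates additively along the path, so no Gronwall blow-up occurs, and this is exactly where the terms $C_1T_k^2\sum\epsilon_{j+1}^2$, $C_2\sum\epsilon_{j+1}\gamma_j$ (via Theorem \ref{theta_con}), $C_3\varsigma T_k$, and $C_4\sum\epsilon_{j+1}^2$ arise. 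It then converts entropy to Wasserstein distance via the Bolley--Villani weighted transportation-cost inequality $\mathcal{W}_2(\mu,\nu)\leq C_{\nu}[\sqrt{D(\mu\Vert\nu)}+(D(\mu\Vert\nu)/2)^{1/4}]$, whose constant is controlled by exponential moments of a Lyapunov function and satisfies $C_{\nu_{\bD,T_k}}^2\leq C_5\log(T_k)$; this is the true source of both the fourth-root term and the logarithmic factor. Finally, the third term $\sqrt{C_6T_k\sum_{j}\epsilon_{j+1}^2}$ is not an Euler-coupling error processed through Gronwall but the direct $L_2$ distance between the auxiliary interpolation at time $T_k$ and the discrete iterate, bounded using the uniform moment bounds from Lemma \ref{boundlemma}. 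Without the Girsanov-plus-entropy-transportation route (or an equivalent device avoiding exponential-in-$T_k$ constants), your argument does not establish the inequality as stated.
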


As implied by Theorem \ref{Z_convergence}, $\bZ^{(k)}$ converges weakly to the distribution $\pi(\bZ|\btheta^*,\bD)$ as $k\to \infty$, which ensure validity of the decomposition (\ref{eq:decomposition}) and thus the followed SDR. 

We note that our theory is very different from \cite{deng2019adaptive}. First, $\Theta$ is essentially assumed to be bounded in \cite{deng2019adaptive}, while our study is under the assumption $\Theta=\mathbb{R}^{d_{\theta}}$. Second, for weak convergence of latent variables, 
only the convergence of the ergodicity average is studied in \cite{deng2019adaptive}, while 
we study their convergence in 2-Wasserstein distance such that 
(\ref{eq:decomposition}) holds and SDR can be further applied.


\section{Numerical Studies}
\label{sec: Results}
In this section, we empirically evaluate the performance of the  StoNet on SDR tasks. 
We first compare the StoNet with an existing deep SDR method, which validates the StoNet as a SDR method. Then we compare the StoNet with some existing linear and nonlinear SDR methods 
on classification and regression problems. For each problem, we first apply the StoNet and the  linear and nonlinear SDR methods to project the training samples onto a lower dimensional subspace, and then we train a separate classification/regression model with the projected training samples. A good SDR method is expected to extract the response information contained in the input data as much as possible. The example for multi-label classification is presented in the supplement.


\subsection{A Validation Example for StoNet} 

We use the M1 example in \cite{Kapla2021FusingSD} to illustrate the non-identifiability issue suffered by the deep SDR methods developed in \cite{Kapla2021FusingSD} and \cite{Banijamali2018DeepVS}. The dataset consists of 100 independent observations. 
Each observation is generated from the model $\bY = \cos(\bX^T \bb)+\bepsilon$, where $\bX \in \mathbb{R}^{20}$ follows a multivariate Gaussian distribution,  $\bb \in \mathbb{R}^{20}$ is a vector with the first 6 dimensions equal to $\frac{1}{\sqrt{6}}$ and the other dimensions equal to 0, and $\bepsilon$ follows a generalized Gaussian distribution $GN (0, \sqrt{1/2}, 0.5)$.
We use the code \footnote{The code is available at \url{https://git.art-ist.cc/daniel/NNSDR/src/branch/master}.} provided by 
\cite{Kapla2021FusingSD} to conduct the experiment, which projects the data to one-dimensional space  by working with a refinement network of structure 20-1-512-1. Let $\bZ_1$ and $\bZ_2$ denote two SDR vectors produced by the method in two independent runs with different initializations of network weights. We then test the independence of $\bZ_1$ and $\bZ_2$ using the R package RCIT
\footnote{The package is available at \url{https://github.com/ericstrobl/RCIT}.}.
The test returns a $p$-value of 0.4068, which does not reject the null hypothesis that $\bZ_1$ and $\bZ_2$ are independent. 

We have also applied the proposed method to the same dataset, where the StoNet has a structure of 20-10-1-1 and  $\tanh$ is used as the activation function. In this way, the StoNet projects the 
data to one-dimensional space. The SDR vectors produced in two independent runs (with different initializations of network weights) of the method are collected and tested for their dependence.  
The test returns a $p$-value of 0.012, which suggests that the two SDR vectors are not independent.

This example suggests that if a complicated neural network model is used to fit 
the function $g(\cdot)$ in (\ref{deepSDReq}), then the dimension reduced data do not necessarily capture the information of the original data.

\subsection{Classification Examples} \label{lowdimexample}


We first test the StoNet on some binary classification examples taken from \cite{AdaBoost2001} and \cite{yamada2011SDR}.
The dimensions of these examples are generally low, 
ranging from 6 to 21.  
There are two steps for each example: first, we project the training samples onto a low-dimensional subspace with dimension $q=\lfloor p/2\rfloor$ or $q=\lfloor p/4\rfloor$ and then train a logistic regression model on the projected predictors for the binary classification task.
We trained two one-hidden-layer StoNets with $\lfloor p/2\rfloor$ and $\lfloor p/4\rfloor$ hidden units, respectively. 
For comparison, four state-of-the-art non-linear SDR methods, including LSMIE, 
GSIR, 
GSAVE 
and KDR
\footnote{
The code for LSMIE is available at \url{http://www.ms.k.u-tokyo.ac.jp/software.html\#LSDR}; 
the code for GSIR is from Chapter 13 of \cite{li2018sufficient}; 
the code for GSAVE is from Chapter 14 of \cite{li2018sufficient}; 
and the code for KDR is available at \url{https://www.ism.ac.jp/~fukumizu/software.html}. 
},
were trained to extract nonlinear sufficient predictors. In addition, three popular linear dimension reduction methods were taken as baselines for comparison, which include SIR,
SAVE and PCA\footnote{The codes for  SIR and SAVE are available 
in the package \emph{sliced} downloadable at \url{https://joshloyal.github.io/sliced/}; 
and 
the code for PCA is available in the package \emph{sklearn}.}.
The hyperparameters of these methods were determined with 5-fold cross-validation in terms of misclassification rates. 
Random partitioning of the dataset in cross-validation makes their results slightly different in different runs, even when the methods themselves are deterministic. Refer to the supplement for the parameter settings used in the experiments.

The results are summarized in Table \ref{tab: IDA}, which reports the mean and standard deviation of the misclassification rates averaged over 20 independent trials. 
Table \ref{tab: IDA} shows that StoNet compares favorably to the existing linear and nonlinear dimension reduction methods. 

\begin{table}[htbp]
\centering
\caption{Mean misclassification rates on test sets (with standard deviations given in the parentheses) over 20 independent trials for some binary classification examples. In each row, the best result and those comparable to the best one (in a $t$-test at a significance level of 0.05) are highlighted in boldface. }
\vspace{0.02in}
\label{tab: IDA}
\begin{adjustbox}{width=\textwidth}
\begin{tabular}{cccccccccc}
\toprule
\multicolumn{1}{l}{Datasets} & q  & StoNet                   & LSMIE                     & GSIR                     & GSAVE     & KDR    &SIR & SAVE & PCA    \\ \midrule
\multirow{2}{*}{thyroid}       & 1  & \textbf{0.0687(.0068)} & 0.2860(.0109)          & \textbf{0.0640(.0063)} & 0.0913(.0102) & 0.2847(.0110) &  0.1373(.0117) & 0.3000(.0110) & 0.3013(.0110)       \\ 
                               & 2  & \textbf{0.0693(.0068)} & 0.1733(.0113)          & \textbf{0.0667(.0071)} & 0.0947(.0103)  & 0.2713 (.0128) &  0.1373(.0130) & 0.3000(.0118) & 0.1467(.0143)     \\ \midrule
\multirow{2}{*}{breastcancer}  & 2  & \textbf{0.2578(.0074)} & 0.2812(.0110) & 0.2772(.0091) & 0.2740(.0069) & \textbf{0.2714(.0102)} &
0.2818(.0125) & 0.2870(.0075) & 0.2857(.0129)\\ 
                               & 4  & \textbf{0.2682(.0113)} & \textbf{0.2760(.0118)} & \textbf{0.2740(.0100)} & \textbf{0.2805(.0076)} &  \textbf{0.2740(.0105)}  
                               & \textbf{0.2831(.0110)} & 0.2922(.0147) & \textbf{0.2766(.0097)}\\ \midrule
\multirow{2}{*}{flaresolar}    & 2  & \textbf{0.3236(.0040)} & 0.3770(.0177)          & \textbf{0.3305(.0034)} & \textbf{0.3308(.0033)} &
0.4161(.0138) &
\textbf{0.3312(.0052)} &  0.4860(.0127) & \textbf{0.3313(.0046)}\\ 
                               & 4  & \textbf{0.3239 (.0043)} & 0.3346(.0043) & 0.3400(.0040)          & 0.3336(.0038) & 
                               0.3673(.0108) & 
                               \textbf{0.3328(.0049)} & 0.4302(.0133) & 0.3612(.0036)\\ \midrule
\multirow{2}{*}{heart}         & 3  & \textbf{0.1625(.0076)} & \textbf{0.1725(.0073)} & \textbf{0.1645(.0069)} & \textbf{0.1731(.0060)} &
0.1870(.0064) & 
\textbf{0.1720(.0088)} & 0.1910(.0053) & 0.1920 (.0123) \\ 
                               & 6  & \textbf{0.1625(.0062)} & \textbf{0.1695(.0073)} & \textbf{0.1650(.0068)} & 0.1754(.0063) &
                               \textbf{0.1715(.0075)} & 
                               0.1770(.0100) & \textbf{0.1720(.0073)} &0.1830(.0102)\\ \midrule
\multirow{2}{*}{german}        & 5  & \textbf{0.2368(.0050)} & 0.25(.0052)            & \textbf{0.2325(.0058)} & \textbf{0.2323(.0050)} & 
\textbf{0.2430(.0050)} & 
\textbf{0.2367(.0068)} & 0.2703(.0072) & 0.2777(.0070) \\ 
                               & 10 & \textbf{0.2356(.0047)} & 0.2443(.0056)          & \textbf{0.2327(.0046)} & \textbf{0.2312(.0047)} & 
                               \textbf{0.2347(.0075)} & 
                               \textbf{0.2360(.0068)} & 0.2447(.0062) & \textbf{0.2350(.0051)} \\ \midrule
\multirow{2}{*}{waveform}      & 5  & \textbf{0.1091(.0010)}          & 0.1336(.0013)          & 0.1140(.0015)          & \textbf{0.1095(.0016)} &
0.1269(.0031) &
0.1453(.0018) & 0.1427(.0020) & 0.1486(.0013)\\ 
                               & 10 & \textbf{0.1079(.0012)}          & 0.1369(.0018)          & 0.1117(.0009)          & \textbf{0.1070(.0013)} &
    0.1254(.0030) &                            
                                0.1444(.0017) & 0.1417(.0020) & 0.1430(.0020)\\ \bottomrule
\end{tabular}
\end{adjustbox}
\end{table}

We have also tested the methods on a multi-label classification problem with a sub-MNIST dataset. Refer to Section \ref{sectS1} (of the supplement) for the details of the experiments. The numerical results are summarized in Table \ref{tab: MNIST}, which indicates the superiority of StoNet over the existing nonlinear SDR methods in both computational efficiency and prediction accuracy. 

\begin{table}[htbp]
\centering
\caption{Misclassification rates on the test set for the MNIST example, where the best misclassification rates achieved by different methods at each dimension $q$ are specified by bold face. The CPU time (in seconds) was recorded on a computer of 2.2 GHz.}
\label{tab: MNIST}
\begin{tabular}{ccccccc}
\toprule
q           & StoNet      & LSMIE     & GSIR    & GSAVE    & Autoencoder  & PCA     \\ \midrule
392          & \textbf{0.0456} & - & 0.0596 & 0.0535    & 0.1965 & 0.1002      \\ 
0.0756196          & \textbf{0.0484} & - & 0.0686 & 0.0611   & 0.2268 & 0.0782       \\ 
98           & \textbf{0.0503} & - & 0.0756 & 0.0696  & 0.2733 & 0.0843     \\ 
49           & \textbf{0.0520} & - & 0.0816 & 0.0764   & 0.3112 & 0.0889     \\ 
10           & \textbf{0.0825} &-  & 0.0872 & 0.0901 & 0.4036 & 0.1644      \\ \midrule
Average Time(s) & 96.18  & $>24hours$    & 16005.59 & 22154.11 & 1809.18 &  5.11       \\ \bottomrule
\end{tabular}
\end{table}

\subsection{A Regression Example}


The dataset, relative location of CT slices on axial axis \footnote{This dataset can be downloaded from UCI Machine Learning repository.}, 
contains  53,500 CT images collected from 74 patients. There are  
384 features extracted from the CT images, and the response is the relative location of the CT slice on the axial axis of the human body which is between 0 and 180 (where 0 denotes 
the top of the head and 180 the soles of the feet).
Our goal is to predict the relative position of CT slices with the high-dimensional features.

Due to the large scale of the dataset and the high computation cost of the nonlinear SDR methods LSMIE, GSIR and GSAVE, we don't include them as baselines here. Similar to the previous examples, the experiment was conducted in two stages. 
First, we applied the dimension reduction methods to project the data onto a lower-dimensional subspace, and then trained a DNN on the projected data for making predictions. Note that for the StoNet, the dimension reduced data can be modeled by a linear regression in principle and the DNN is used here merely for 
fairness of comparison; while for autoencoder and PCA, the use of the DNN for modeling the dimension reduced data seems necessary for such a nonlinear regression problem. The mean squared error (MSE) and Pearson correlation were used as the evaluation metrics to assess the performance of prediction models. 
For this example, we have also trained a DNN with one  hidden layer and 100 hidden units as the comparison baseline. Refer to the supplementary material for the hyperparameter settings used in the experiments.


\begin{table}[htbp]
\caption{Mean MSE and Pearson correlation on the test sets (and their standard deviations in the parentheses) over 10 trails for the Relative location of CT slices on axial axis dataset.}
\label{CTtab}
\adjustbox{width=1.0\textwidth}{
\begin{tabular}{ccccccccc}
\toprule
 & \multicolumn{2}{c}{StoNet} &  &  \multicolumn{2}{c}{Autoencoder} & & \multicolumn{2}{c}{PCA}  
 \\ \cline{2-3} \cline{5-6} \cline{8-9}
 q   & MSE   & Corr & &  MSE   & Corr  &&  MSE & Corr \\ \midrule
 192 & \textbf{0.0002(.0000)} & \textbf{0.9986(.0001)} & & 0.0079(.0015) & 0.9267(.0147) & & 
 0.0027(.0000) & 0.9755(.0001) \\
 96  & \textbf{0.0002(.0000} & \textbf{0.9985(.0001)}  && 
 0.0106(.0024) & 0.9002(.0237) & & 
 0.0026(.0000) & 0.9756(.0001)\\
 48  & \textbf{0.0002(.0000)} & \textbf{0.9982(.0001)}   && 
 0.0143(.0035) & 0.8562(.0399)  & & 
 0.0034(.0000) & 0.9682(.0001)\\
 24  & \textbf{0.0002(.0000)} & \textbf{0.9980(.0001)} & &
 0.0185(.0033) & 0.8168(.0364)  & & 
 0.0042(.0000) & 0.9612(.0001) \\
 12  & \textbf{0.0002(.0000)} & \textbf{0.9980(.0001)}  && 
 0.0233(.0027) & 0.7579(.0338) & & 
 0.0053(.0000) & 0.9499(.0001)\\
 6   & \textbf{0.0002(.0000)} & \textbf{0.9980(.0001)}    && 
 0.0304(.0024) & 0.6668(.0300)  & & 
 0.0102(.0001) & 0.9023(.0002) \\ 
3   & \textbf{0.0004(.0000)} & \textbf{0.9965(.0002)}  && 
0.0384(.0030) & 0.5529(.0538) & & 
 0.0209(.0001) & 0.7858(.0004)
\\ \bottomrule
\end{tabular}
}
\end{table}

The results, which are summarized in Figure \ref{fig:slice localization} (in the supplement) and Table \ref{CTtab}, show that as the dimension $q$ decreases, the performance of Autoencoder degrades significantly. In contrast, StoNet can achieve stable and robust performance even when $q$ is reduced to 3. Moreover, for each value of $q$, StoNet outperforms Autoencoder and PCA significantly in both MSE and Pearson correlation.


\section{Conclusion} \label{sec:Discussion}


In this paper, we have proposed the StoNet as a new type of stochastic neural network under the rigorous probabilistic framework and used it as a method for nonlinear SDR. The StoNet, as an approximator to neural networks, possesses a layer-wise Markovian structure and SDR can be obtained by extracting the output of its last hidden layer.
The StoNet overcomes the limitations of the existing nonlinear SDR methods, such as inability in dealing with high-dimensional data and computationally inefficiency for large-scale data. 
We have also proposed an adaptive stochastic gradient MCMC algorithm for training the StoNet and studied its convergence theory. Extensive experimental results show that the StoNet method compares favorably with the existing state-of-the-art nonlinear SDR methods and is computationally more efficient for large-scale data.

 In this paper, we study SDR with a given network structure under the assumption that the network structure has been large enough for approximating the underlying true nonlinear function. To determine the optimal depth, layer width, etc., we can combine the proposed method with a sparse deep learning method, e.g. \cite{Han2015nips} and \cite{SunSLiang2021}. That is, we can start with an over-parameterized neural network, employ a sparse deep learning technique to learn the network structure, and then employ the proposed method to the learned sparse DNN for the SDR task. We particularly note that the work \cite{SunSLiang2021} ensures consistency of the learned sparse DNN structure, which can effectively avoid the over sufficiency issue, e.g., learning a trivial relation such as identity in the components.

 As an alternative way to avoid the over sufficiency issue, we can add a post sparsification step to the StoNet, i.e., applying a sparse SDR procedure (e.g., \cite{Lin2018OnCA} and \cite{Lin2019SparseSI} with a Lasso penalty) to the output layer regression of a learnt StoNet by noting that the regression formed at each node of the StoNet is a multiple index model. In this way, the StoNet, given its universal approximation ability, provides a simple method for determining the central space of SDR for general nonlinear regression models.

The StoNet has great potentials in machine learning applications. Like other stochastic neural networks  \cite{Glehre2016NoisyAF, Noh2017RegularizingDN, You2018AdversarialNL,SunLiangKstonet2022}, it can be used to improve generalization and adversarial robustness of the DNN. The StoNet can be easily extended to other neural network architectures such as convolutional neural network (CNN), recurrent neural network (RNN) and Long Short-Term Memory (LSTM) networks.
For CNN, the randomization technique (\ref{eq: stonet}) can be directly applied to the fully connected layers. The same technique can be applied to appropriate layers of the RNN and LSTM as well. 

 \section*{Acknowledgments} 
Liang's research is support in part by the NSF grants DMS-2015498 and DMS-2210819, and the NIH grant R01-GM126089.
 
 \newpage


 \newpage

\appendix

{\large \bf Supplementary Material}

\setcounter{table}{0}
\renewcommand{\thetable}{S\arabic{table}}
\setcounter{figure}{0}
\renewcommand{\thefigure}{S\arabic{figure}}
\setcounter{equation}{0}
\renewcommand{\theequation}{S\arabic{equation}}
\setcounter{lemma}{0}
\renewcommand{\thelemma}{S\arabic{lemma}}
\setcounter{theorem}{0}
\renewcommand{\thetheorem}{S\arabic{theorem}}
\setcounter{remark}{0}
\renewcommand{\theremark}{S\arabic{remark}}
\setcounter{section}{0}
\renewcommand{\thesection}{S\arabic{section}}
\setcounter{assumption}{0}
\renewcommand{\theassumption}{B\arabic{assumption}}

 This material is organized as follows. 
 Section \ref{sectS1} presents more numerical Results. 
 Section \ref{sectS2} proves Theorem 2.1 and Theorem 2.2. 
 Section \ref{Algtheory} proves Theorem 3.1 and Theorem 3.2.
 Section \ref{parsetting} presents parameter settings used in the numerical experiments.  
 
\section{More Numerical Results} \label{sectS1}

\subsection{A Multi-label Classification Example}
We validate the effectiveness of the StoNet on the MNIST handwritten digits classification task \cite{lecun1998gradient}.
The MNIST dataset contains 10 different classes (0 to 9) of images, including 60,000 images in the training set and 10,000 images in the test set. Each image is of  size $28\times28$ pixels with 256 gray levels.
Due to inscalability of the existing nonlinear SDR methods with respect to the sample size, we worked on a sub-training set which consisted of 20,000 images equally selected from 10 classes of the original training set. 

We applied StoNet, GSIR, GSAVE, autoencoder and PCA to obtain projections onto low-dimension subspaces with  the dimensions $q = 10$, 49, 98, 196, 392, and then trained a DNN on the dimension reduced data for the multi-label classification task. 
Note that for the StoNet, a multi-class logistic regression should work in principle for the dimension reduced data, and the DNN is used here for fairness of comparison; for some other methods such as autoencoder and PCA, the DNN seems necessary for modeling the dimension-reduced data for such a nonlinear classification problem. 
The StoNet consisted of one hidden layer with $q$ hidden units. All hyperparameters were determined based on 5-fold cross-validation in terms of misclassification rates. 
Refer to Section \ref{parsetting} of this material for the parameter settings used in the experiments.

\begin{figure}[htbp]
    \centering
    \includegraphics[width=0.6\textwidth]{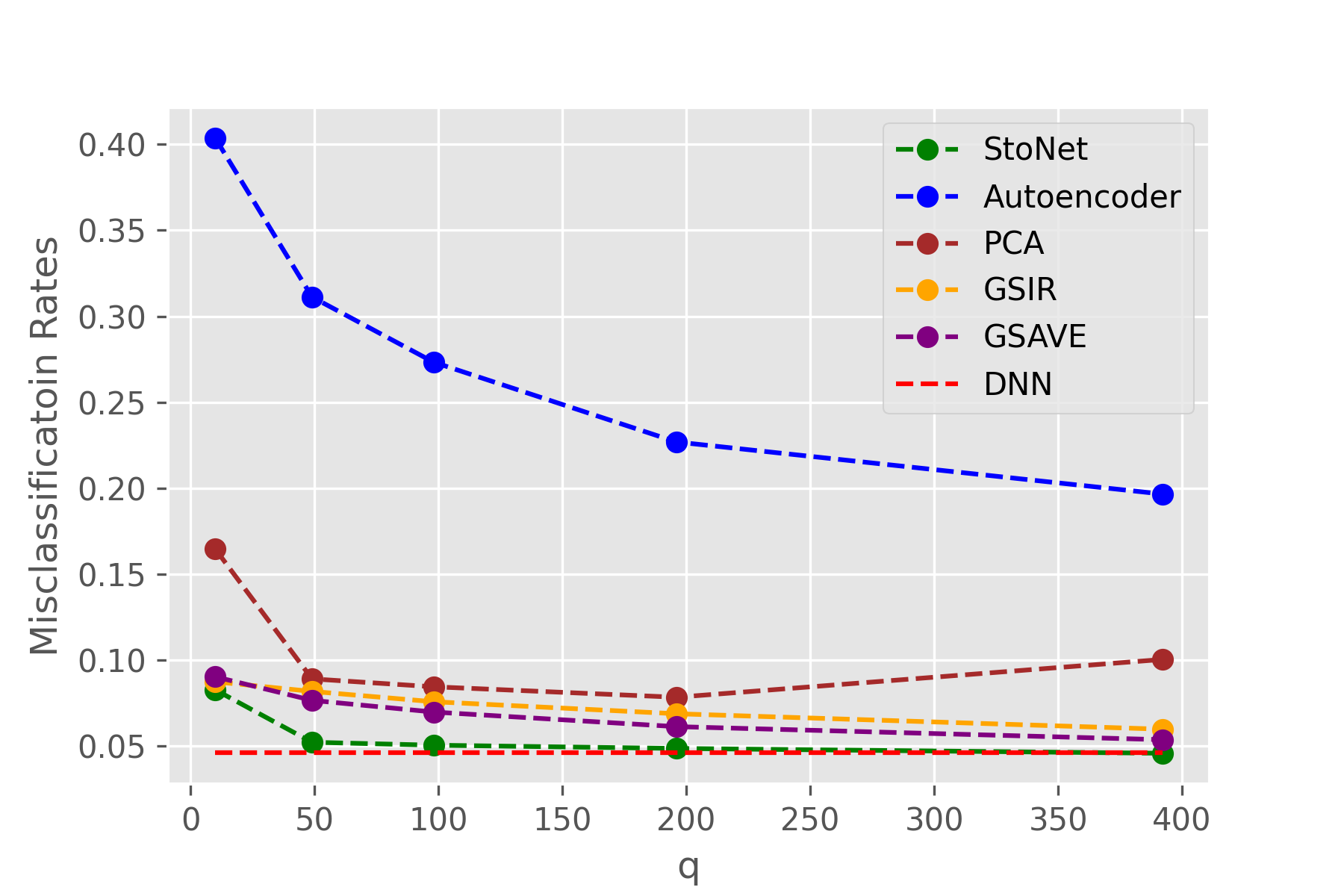}
    \caption{Misclassification rates versus dimension $q$. 
    The red dash line represent the baseline result by training a DNN with one hidden layer and 50 hidden units on the original dataset (mistclassification rate = $0.0459$).
   }
    \label{fig: MNIST}
\end{figure}

The experimental results are summarized in Figure \ref{fig: MNIST} and Table \ref{tab: MNIST} (of the main text). For the dataset, we also trained a DNN with one hidden layer and 50 hidden units as the comparison baseline, which achieved a prediction error rate of 0.0459. 
The comparison shows that the StoNet outperforms GSIR, GSAVE, autoencoder and PCA in terms of misclassification rates. Moreover,  StoNet is much
more efficient than GSIR, GSAVE and autoencoder in computational time. 
It is interesting to note that when the data was projected onto a subspace with dimension 392, StoNet even outperformed the DNN in prediction accuracy.
We have also tried LSMIE for this example, but lost interests finally as the method took more than 24 CPU hours on our computer.


\subsection{A Regression Example}

 Refer to Figure \ref{fig:slice localization} for the performance of different methods on the example.

\begin{figure}[htbp]
    \centering
    \includegraphics[width=0.6\textwidth]{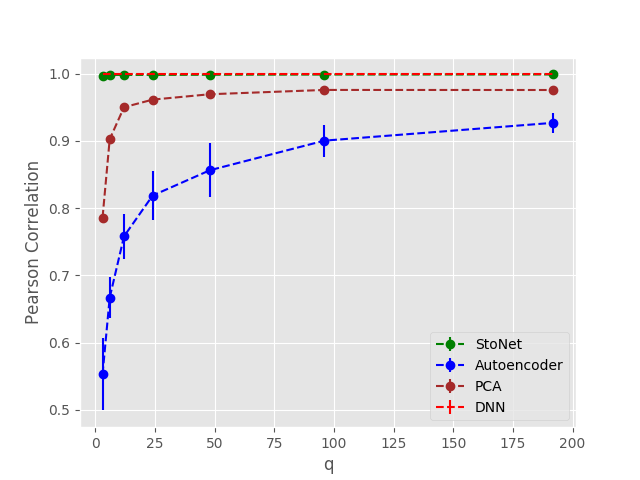}
    \caption{Pearson Correlation v.s. dimension $q$ for the regression example.
    The red dash line represent the baseline result by training a DNN with one hidden layer and 100 hidden units on the original dataset (Pearson correlation = $0.9987 (0.0000)$).
   }
    \label{fig:slice localization}
\end{figure}

\section{Proofs of Theorem 2.1 and Theorem 2.2} \label{sectS2}

\subsection{Proof of Theorem 2.1}

\begin{proof} \label{proof: same loss}
Since $\Theta$ is compact, it suffices to prove that the consistency holds for each value of $\btheta$. For simplicity of notation, we rewrite $\sigma_{n,i}$ by $\sigma_i$ in the remaining part of the proof.

Let $\bY_{mis}=(\bY_1,\bY_2,\dots,\bY_h)$, where $\bY_i$'s are latent variables as given in Equation (6) of the main text. 
Let $\boldsymbol{\tilde{Y}}=(\boldsymbol{\tilde{Y}}_1,\dots,\boldsymbol{\tilde{Y}}_h)$, where $\boldsymbol{\tilde{Y}}_i$'s are calculated by the neural network in Equation (5) of the main text. 
By Taylor expansion, we have
\begin{equation}\label{eq:taylor}
    \log\pi(\bY,\bY_{mis}|\bX, \btheta)=\log\pi(\bY,\tilde{\bY}|\bX, \btheta) + \boldsymbol{\epsilon}^T\nabla_{\bY_{mis}}\log\pi(\bY,\tilde{\bY}|\bX, \btheta) + O(\| \boldsymbol{\epsilon}\|^2),
\end{equation}
where $\boldsymbol{\epsilon}=\bY-\bY_{mis}=(\boldsymbol{\epsilon}_1,\boldsymbol{\epsilon}_2,\dots,\boldsymbol{\epsilon}_h)$, $\log\pi(\bY,\tilde{\bY}|\bX, \btheta)=\log\pi(\bY|\bX,\btheta)$ is the log-likelihood function of the neural network, and  $\nabla_{\bY_{mis}}\log\pi(\bY,\tilde{\bY}|\bX, \btheta)$ is evaluated according to the joint distribution given in Equation (10) of the main text.

Consider $\nabla_{\bY_i}\log\pi(\bY,\tilde{\bY}_i|\bX,\btheta)$. For its single latent variable, say $Y_i^{(k)}$, the output of the hidden unit $k$ at layer $i\in\{2 ,\dots,h\}$, we have
\begin{equation}
    \begin{split}
        \nabla_{Y_{i}^{(k)}}\log\pi(\bY,\tilde{Y}_i^{(k)}|\bX,\btheta)&=\frac{1}{\sigma_{i+1}^2}\sum_{j=1}^{d_{i+1}}(Y_{i+1}^{(j)}-b_{i+1}^{(j)}-\bw_{i+1}^{(j)}\psi(\tilde{\bY}_i))w_{i+1}^{(j,k)}\psi'(\tilde{Y}_{i}^{(k)})\\
        &-\frac{1}{\sigma_i^2}(\tilde{Y}_i^{(k)}-b_i^{(k)}-\bw_i^{(k)}\psi(\bY_{i-1}))
    \end{split}
\end{equation}
where $\bw_{i+1}^{(j)}$ denotes the vector of the weights from hidden unit $j$ at layer $i+1$ to the hidden units at layer $i$, and $w_{i+1}^{(j,k)}$ denotes the weight from hidden unit $j$ at layer $i+1$ to the hidden unit $k$ at hidden layer $i$.
Further, by noting that $Y_{i+1}^{(j)}=b_{i+1}^{(j)}+\bw_{i+1}^{(j)}\psi(\bY_i)+e_{i+1}^{(j)}$ and $Y_{i}^{(j)}=b_{i}^{(j)}+\bw_{i}^{(j)}\psi(\bY_{i-1})$, we have
\begin{equation}\label{eq: gradient}
    \begin{split}
    \nabla_{Y_i^{(k)}}\log\pi(\bY,\tilde{Y}_i^{(k)}|\bX,\btheta)&=\frac{1}{\sigma_{i+1}^2}\sum_{j=1}^{d_{i+1}}(e_{i+1}^{(j)}+\bw_{i+1}^{(j)}(\psi(\bY_i)-\psi(\tilde{\bY}_i)))w_{i+1}^{(j,k)}\psi'(\tilde{Y}_{i,k})\\
    &-\frac{1}{\sigma_i^2}\bw_{i}^{(k)}[\psi(\tilde{\bY}_{i-1})-\psi(\bY_{i-1}))].
    \end{split}
\end{equation}
For layer $i=1$, the calculation is similar, but the second term in (\ref{eq: gradient}) is reduced to $0$.
Then by Assumption 2.1-(i)\&(iv), we have
\begin{equation}\label{eq: gradient bound}
\begin{array}{l}
\left|\nabla_{Y_{i}^{(k)}} \log \pi\left(\boldsymbol{Y}, \tilde{Y}_{i}^{(k)} \mid \boldsymbol{X}, \boldsymbol{\theta}\right)\right| \\ \leq\left\{
\begin{array}{ll}\frac{1}{\sigma_{i+1}^{2}}\left\{\sum_{j=1}^{m_{i+1}} e_{i+1}^{(j)} w_{i+1}^{(j, k)} \psi^{\prime}\left(\tilde{Y}_{i}^{(k)}\right)+\left(c^{\prime} r\right)^{2} m_{i+1}\left\|\boldsymbol{\epsilon}_{i}\right\|\right\}+\frac{1}{\sigma_{i}^{2}} c^{\prime} r\left\|\boldsymbol{\epsilon}_{i-1}\right\|, & \text { if } i>1 \\ \frac{1}{\sigma_{i+1}^{2}}\left\{\sum_{j=1}^{m_{i+1}} e_{i+1}^{(j)} w_{i+1}^{(j, k)} \psi^{\prime}\left(\tilde{Y}_{i}^{(k)}\right)+\left(c^{\prime} r\right)^{2} m_{i+1}\left\|\boldsymbol{\epsilon}_{i}\right\|\right\}, & \text { if } i=1
\end{array}\right.
\end{array}
\end{equation}
Next, let's figure out the order of $\|\boldsymbol{\epsilon}_i\|$. The $k$th component of $\boldsymbol{\epsilon}_i$ is given by 
\begin{equation}\label{eq: epsilon}
Y_i^{(k)}-\tilde{Y}_i^{(k)} = 
\left\{
             \begin{array}{lr}
             e_i^{(k)}+\bw_i^{(k)}(\psi(\bY_{i-1})-\psi(\tilde{\bY}_{i-1})), & i > 1,   \\
             e_i^{(k)} & i = 1.  \\
             \end{array}
\right.
\end{equation}
Therefore, $\|\boldsymbol{\epsilon}_1\|=\|\be_1\|$; and for $i = 2,3,\dots,h$, the following inequalities hold:
\begin{equation}\label{eq: epsilon bound}
    \|\boldsymbol{\epsilon}_i\|\leq \|\be_i\|+c'rd_i\|\boldsymbol{\epsilon}_{i-1}\|, \quad and \quad  
    \|\boldsymbol{\epsilon}_i\|^2\leq 2\|\be_i\|^2+2(c'r)^2d_i^2\|\boldsymbol{\epsilon}_{i-1}\|^2.
\end{equation}
Since $\be_i$ and $\be_{i-1}$ are independent, by summarizing (\ref{eq: gradient bound}) and (\ref{eq: epsilon bound}), we have
\begin{equation}
    \begin{split}
        & \int \boldsymbol{\epsilon}^T\nabla_{\bY_{mis}}\log\pi(\boldsymbol{Y,\tilde{\bY}}|\bX,\btheta)\pi(\bY_{mis}|\bX,\btheta,\bY)d\bY_{mis} \leq O\left(\sum_{k=2}^{h+1}\frac{\sigma_{k-1}^2}{\sigma_{h+1}^2}d_{h+1}(\prod_{i=k}^hd_i^2)d_{k-1}\right)\\
        & + O\left(\sum_{k=2}^h\frac{\sigma_{k-1}^2}{\sigma_h^2}d_h(\prod_{i=k}^{h-1}d_i^2)d_{k-1}\right)+\dots+O\left(\frac{\sigma_1^2}{\sigma_2^2}d_2d_1\right)=o(1),
    \end{split}
\end{equation}
which, by (\ref{eq:taylor}) and Assumption 2.1-$(v)$, implies the mean value
\begin{equation}
    \mathbb{E}[\log\pi(\bY,\bY_{mis}|\bX,\btheta)-\log\pi(\bY|\bX,\theta)]\rightarrow 0, \quad \forall \btheta\in \Theta
\end{equation}

Further, it is easy to verify
\begin{equation}
    \int |\boldsymbol{\epsilon}^T\nabla_{\bY_{mis}}\log\pi(\bY,\tilde{\bY}|\bX,\btheta)|^2\pi(\bY_{mis}|\bX,\btheta,\bY)d\bY_{mis} < \infty,
\end{equation}
which, together with (\ref{eq:taylor}) and (\ref{eq: epsilon bound}), implies
\begin{equation}
    \mathbb{E}|\log\pi(\bY,\bY_{mis}|\bX,\btheta)-\log\pi(\bY|\bX,\btheta)|^2 < \infty.
\end{equation}
Therefore, the weak law of large numbers (WLLN) applies, and the proof can be concluded.
\end{proof}

\subsection{Proof of Theorem 2.2} \label{Theorem 2.2 proof}
To prove Theorem 2.2, we first prove Lemma \ref{Lemma: pre}, from which Theorem 2.2 can be directly derived.

\begin{lemma}\label{Lemma: pre}
Consider a function $Q(\btheta,\bX_n)$. Suppose that the following conditions are satisfied: 
\begin{enumerate}
    \item[(i)]  $Q(\btheta,\bX_n)$ is continuous in $\btheta$ and there exists a function $Q^*(\btheta)$, which is continuous in $\btheta$ and uniquely maximized at $\btheta^*$.
    \item[(ii)] For any $\epsilon>0$, $\sup_{\btheta\in\Theta\backslash B(\epsilon)}Q^*(\btheta)$ exists, where $B(\epsilon)=\{\boldsymbol{\theta: \|\btheta-\btheta^*\|<\epsilon}\}$; Let $\delta=Q^*(\btheta^*)-\sup_{\btheta\in\Theta\backslash B(\epsilon)}Q^*(\btheta)$.
    \item[(iii)]$\sup_{\btheta\in\Theta}|Q(\btheta,\bX_n)-Q^*(\btheta)|\overset{p}{\rightarrow} 0$ as $n\rightarrow \infty$.
\end{enumerate}
Let $\hat{\theta}_n=\arg\max_{\btheta\in\Theta}Q(\btheta,\bX_n)$. Then $\|\hat{\btheta}_n - \btheta^*\|\overset{p}{\rightarrow} 0$.
\end{lemma}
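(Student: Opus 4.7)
The plan is to recognize Lemma \ref{Lemma: pre} as the classical consistency theorem for $M$-estimators (the ``argmax'' theorem, cf.\ van der Vaart, \emph{Asymptotic Statistics}, Thm.\ 5.7). The conclusion follows from a standard sandwich argument combining the uniform-in-probability convergence in (iii) with the well-separated-maximum property in (ii). Once the lemma is in hand, Theorem 2.2 drops out by plugging in $Q(\btheta,\bX_n) = \frac{1}{n}\sum_{i=1}^n \log\pi(\bY^{(i)}, \bY_{mis}^{(i)}|\bX^{(i)}, \btheta)$ and $Q^*(\btheta) = \mathbb{E}\log\pi(\bY|\bX,\btheta)$.

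First, I fix an arbitrary $\epsilon > 0$ and let $\delta > 0$ be the separation constant from (ii), so that $Q^*(\btheta) \leq Q^*(\btheta^*) - \delta$ for every $\btheta \in \Theta \setminus B(\epsilon)$. Next, I define the ``good'' event $A_n = \{\sup_{\btheta \in \Theta} |Q(\btheta, \bX_n) - Q^*(\btheta)| < \delta/3\}$, which by (iii) satisfies $P(A_n) \to 1$. On $A_n$, the optimality of $\hat{\btheta}_n$ together with the uniform bound gives $Q(\hat{\btheta}_n, \bX_n) \geq Q(\btheta^*, \bX_n) > Q^*(\btheta^*) - \delta/3$. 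If I then suppose for contradiction that $\hat{\btheta}_n \notin B(\epsilon)$, combining (ii) with the same uniform bound yields $Q(\hat{\btheta}_n, \bX_n) < Q^*(\hat{\btheta}_n) + \delta/3 \leq Q^*(\btheta^*) - 2\delta/3$, contradicting the lower bound. Hence $\hat{\btheta}_n \in B(\epsilon)$ on $A_n$, so $P(\|\hat{\btheta}_n - \btheta^*\| < \epsilon) \to 1$; since $\epsilon > 0$ was arbitrary, this gives $\hat{\btheta}_n \overset{p}{\rightarrow} \btheta^*$.

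To derive Theorem 2.2 from the lemma, conditions (i) and (ii) are exactly Assumption 2.2, while for (iii) I would split the uniform error by the triangle inequality into the StoNet--DNN gap (which goes to zero uniformly over the compact $\Theta$ by Theorem 2.1) plus the DNN empirical vs.\ population log-likelihood gap (which vanishes uniformly by the WLLN in eq.\ (9), valid under Assumption 2.1(i)--(ii) on a compact parameter set).

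The main obstacle is essentially bookkeeping: the argument is structurally light, and the only non-trivial ingredient---the uniform convergence in (iii)---is already secured by Theorem 2.1 plus a uniform WLLN on the compact $\Theta$. The well-separated-maximum hypothesis (ii) is precisely what rules out the pathological scenario in which $\hat{\btheta}_n$ could drift away from $\btheta^*$ even as the empirical and population objective values agree; without it the implication ``close values $\Rightarrow$ close maximizers'' would fail, so its presence is the load-bearing assumption of the proof.
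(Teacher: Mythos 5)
Your proposal is correct and follows essentially the same argument as the paper's proof: a sandwich argument combining the uniform convergence in (iii) with the well-separated maximum in (ii), the only cosmetic differences being your use of a $\delta/3$ margin and a contradiction framing versus the paper's two events with $\delta/2$ margins. Your sketch of how Theorem 2.2 then follows (via Theorem 2.1 plus the uniform WLLN on the compact $\Theta$) also matches the paper's intent.
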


\begin{proof}
Consider two events:
\begin{enumerate}
    \item[(a)] $\sup_{\btheta\in\Theta\backslash B(\epsilon)}|Q(\btheta,\bX_n)-Q^*(\btheta)|<\delta/ 2$, and
    \item[(b)]$\sup_{\btheta\in\Theta}|Q(\btheta,\bX_n)-Q^*(\btheta)|<\delta/2$.
\end{enumerate}
From event (a), we can deduce that for any $\btheta\in\Theta\backslash B(\epsilon)$, $Q(\btheta,\bX_n)<Q^*(\btheta)+\delta/2\leq Q^*(\boldsymbol{\theta^*})-\delta+\delta/2\leq Q^*(\boldsymbol{\theta^*})-\delta/2$.
From event (b), we can deduce that for any $\btheta\in B(\epsilon)$, $Q(\boldsymbol{\theta, \bX_n})>Q^*(\btheta)-\delta/2$ and thus $Q(\btheta^*,\bX_n)>Q^*(\btheta^*)-\delta/2$.

If both events hold simultaneously, then we must have $\hat{\btheta}_n\in B(\epsilon)$ as $n\rightarrow \infty$. By condition $(iii)$, the probability that both events hold tends to 1. Therefore, $P(\hat{\btheta}_n\in B(\epsilon))\rightarrow 1$.
\end{proof}



\section{Proofs of Theorem \ref{theta_con} and Theorem \ref{Z_convergence}} \label{Algtheory}

Since our goal  is to obtain the SDR predictor $\bY_h$ for all observations in $\bD$, we proved the convergence of Algorithm \ref{stonetalgorithm} for the case that the full training dataset is used at each iteration. If the algorithm is used for other purposes, say estimation of $\btheta$ only, a mini-batch of data can be used at each iteration. Extension of our proof for the mini-batch case will be discussed in Remark \ref{rem2}.  To complete the proof, we make the following assumptions.

\begin{assumption}\label{ass: 1} 
The function $F_{\bD}(\cdot,\cdot)$ takes nonnegative real values, and there exist constants $A, B \geq 0$, such that 
$|F_{\bD}(\boldsymbol{0}, \btheta^*)|\leq A$, $ \|\nabla_{\bZ} F_{\bD}(\boldsymbol{0}, \btheta^*)\|\leq B$, 
 $\|\nabla_{\btheta} F_{\bD}(\boldsymbol{0}, \btheta^*)\|\leq B$, and $\|H(\boldsymbol{0}, \btheta^*)\|\leq B$.
\end{assumption}

\begin{assumption}\label{ass: 2} (Smoothness)
$F_{\bD}(\cdot, \cdot)$ is $M$-smooth \black{and $H(\cdot, \cdot)$ is $M$-Lipschitz}: there exists some constant $M>0$ such that for any $\bZ, \bZ'\in\mathbb{R}^{d_{z}}$ and any $\btheta, \btheta'\in \Theta$,
\begin{equation}
\nonumber
\begin{split}
& \|\nabla_{\bZ} F_{\bD}(\bZ, \btheta)-\nabla_{\bZ} F_{\bD}(\bZ', \btheta')\|\leq M\|\bZ-\bZ'\| + M\|\btheta - \btheta' \|, \\
& \|\nabla_{\btheta} F_{\bD}(\bZ, \btheta)-\nabla_{\btheta} F_{\bD}(\bZ', \btheta')\|\leq M\|\bZ-\bZ'\| + M\|\btheta - \btheta' \|, \\
& \black{ \|H(\bZ, \btheta)-H(\bZ', \btheta')\|\leq M\|\bZ-\bZ'\| + M\|\btheta - \btheta' \|}.
\end{split}
\end{equation}
\end{assumption}

\begin{assumption}\label{ass: 3} (Dissipativity)
For any $\btheta\in \Theta$, the function $F_{\bD}(\cdot, \btheta^*)$ is $(m, b)$-dissipative: there exist some constants $m>\frac{1}{2}$ and $b\geq 0$ such that
$\langle \bZ, \nabla_{\bZ} F_{\bD}(\bZ, \btheta^*)\rangle\geq m\|\bZ\|^2 -b$.
\end{assumption}

The smoothness and dissipativity conditions are regular for studying the convergence of stochastic gradient MCMC algorithms, and they have been used in many papers such as \cite{raginsky2017non} and \cite{gao2021global}. As implied by the definition of 
$F_{\bD}(\bZ,\btheta)$, the values of $M$, $m$ and $b$ increase linearly with the sample size $n$. Therefore, we can impose a nonzero lower bound on $m$ to facilitate the proof 
of Lemma \ref{boundlemma}.

\begin{assumption}\label{ass: 3b}  (Gradient noise)
There exists a constant $\varsigma \in [0,1)$ such that for any $\bZ$ and $\btheta$,
$\mathbb{E} \| \nabla_{\bZ} \hat{F}_{\bD}(\bZ, \btheta)- \nabla_{\bZ} F_{\bD}(\bZ, \btheta)\|^2 \leq 2 \varsigma (M^2 \|\bZ\|^2 +M^2 \|\btheta-\btheta^*\|^2+B^2)$.
\end{assumption}

Introduction of the extra constant $\varsigma$ facilitates our study. For the full data case, we have $\varsigma = 0$, i.e., the gradient $\nabla_{\bZ} F_{\bD}(\bZ,\btheta)$ can be evaluated accurately. 

\begin{assumption}\label{ass: 5}
The step size $\{\gamma_{k}\}_{k\in \mathbb{N}}$ is a positive decreasing sequence such that $\gamma_{k} \rightarrow 0$ and  $\sum_{k=1}^{\infty} \gamma_{k} = \infty$.
In addition, let $h(\btheta) = \mathbb{E}(H({\boldsymbol Z}, \btheta))$, then 
there exists $\delta > 0$ such that  for any $\btheta \in \Theta$,
$\langle \btheta - \btheta^*, h(\btheta)) \rangle \geq \delta \|\btheta - \btheta^* \|^2$, and 
$\liminf_{k\rightarrow \infty} 2\delta\frac{\gamma_k}{\gamma_{k+1}} + \frac{\gamma_{k+1} - \gamma_{k}}{\gamma_{k+1}^2} > 0$.
\end{assumption}

As shown by \cite{Albert90} (p.244),  
Assumption \ref{ass: 5} can be satisfied by setting   $\gamma_k=\tilde{a}/(\tilde{b}+k^{\alpha})$ for some constants $\tilde{a}>0$, $\tilde{b} \geq 0$, and $\alpha \in (0,1 \wedge 2\delta \tilde{a})$. 
 By (\ref{SGHMCeq001}), $\delta$ increases linearly with the sample size $n$. Therefore, if we set $\tilde{a}= \Omega(1/n)$ then $2\delta \tilde{a}>1$ can be satisfied, where $\Omega(\cdot)$ denotes the order of the lower bound of a function. In this paper, we simply choose $\alpha \in (0,1)$ by assuming that 
 $\tilde{a}$ has been set appropriately with $2 \delta \tilde{a} \geq 1$ held.

 \begin{assumption} \label{ass: 5b} (Solution of Poisson equation) 
For any $\btheta\in\Theta$, $\bz \in \mZ$, and a function $V(\bz)=1+\|\bz\|$, 
there exists a function $\mu_{\btheta}$ on $\mZ$ that solves the Poisson equation $\mu_{\btheta}(\bz)-\mathcal{T}_{\btheta}\mu_{\btheta}(\bz)={H}(\btheta,\bz)-h(\btheta)$, 
where $\mathcal{T}_{\btheta}$ denotes a probability transition kernel with
$\mathcal{T}_{\btheta}\mu_{\btheta}(\bz)=\int_{\mZ} \mu_{\btheta}(\bz')\mathcal{T}_{\btheta}(\bz,\bz') d\bz'$,  such that 
\begin{equation} \label{poissoneq0}
 {H}(\btheta_k,\bz_{k+1})=h(\btheta_k)+\mu_{\btheta_k}(\bz_{k+1})-\mathcal{T}_{\btheta_k}\mu_{\btheta_k}(\bz_{k+1}), \quad k=1,2,\ldots.
\end{equation}
Moreover, for all $\btheta, \btheta'\in \Theta$ and $\bz\in \mZ$, we have 
$\|\mu_{\btheta}(\bz)-\mu_{\btheta'}(\bz)\|   \leq \varsigma_1 \|\btheta-\btheta'\| V(\bz)$ and
$\|\mu_{\btheta}(\bz) \| \leq \varsigma_2 V(\bz)$ for some constants $\varsigma_1>0$ and $\varsigma_2>0$. 
\end{assumption}

This assumption is also regular for studying the convergence of stochastic gradient MCMC algorithms, see e.g., \cite{Whye2016ConsistencyAF} and \cite{deng2019adaptive}. Alternatively, one can assume 
that the MCMC algorithms satisfy the drift condition, and then Assumption \ref{ass: 5b} can be verified, see e.g., \cite{AndrieuMP2005}. 

\subsection{Proof of Theorem \ref{theta_con}}

Theorem \ref{theta_convergence} concerns the convergence of $\btheta^{(k)}$, which is a  complete version of Theorem \ref{theta_con}. 

\begin{theorem} \label{theta_convergence} (A complete version of Theorem \ref{theta_con}) 
Suppose Assumptions \ref{ass: 1}-\ref{ass: 5b} hold. 
If we set $\epsilon_k=C_{\epsilon}/(c_e+k^{\alpha})$ and $\gamma_k=C_{\gamma}/(c_g+k^{\alpha})$ for some constants $\alpha \in (0,1)$, $C_{\epsilon}>0$, $C_{\gamma}>0$, $c_e\geq 0$ and $c_g \geq 0$,  then there exists an iteration $k_0$ and a constant $\lambda_0>0$ such that for any $k>k_0$, 
\begin{equation}
\mathbb{E}(\|\btheta^{(k)} - \btheta^*\|^2) \leq \lambda_0 \gamma_{k},
\end{equation}
where $\lambda_0=\lambda_0^{\prime}+ 6\sqrt{6}C_{\btheta}^{\frac{1}{2}} ((3M^2+\zeta_2)C_{\bZ}+3M^2 C_{\btheta}+ 3B^2+\zeta_2^2)^{\frac{1}{2}}$ for some 
constants $\lambda_0^{\prime}$, $C_{\btheta}$ and $C_{\bZ}$. 
\end{theorem}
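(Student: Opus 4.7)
The plan is to follow the classical stochastic approximation MCMC framework (Benveniste--M\'etivier--Priouret, Andrieu--Moulines--Priouret), using the Lyapunov function $V(\btheta) := \|\btheta-\btheta^*\|^2$ and the Poisson equation in Assumption \ref{ass: 5b} to handle the fact that the transition kernel driving the latent variable $\bZ^{(k)}$ is itself drifting with $\btheta^{(k)}$. Writing $v_k := \mathbb{E}[V(\btheta^{(k)})]$, the target is a one-step inequality of the form $v_{k+1} \leq (1 - 2\delta\gamma_{k+1})v_k + A\gamma_{k+1}^2 + R_k$, where $R_k$ collects the Poisson-equation remainder terms and $A$ is an explicit constant. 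A standard comparison lemma applied to this inequality, together with the profile $\gamma_k = C_\gamma/(c_g + k^\alpha)$ with $\alpha\in(0,1)$ and $2\delta C_\gamma > 1$ (guaranteed by Assumption \ref{ass: 5}), then yields $v_k \leq \lambda_0\gamma_k$ for $k$ larger than some $k_0$.

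The first technical step is to establish uniform a priori $L^2$ bounds $\sup_k \mathbb{E}\|\bZ^{(k)}\|^2 \leq C_{\bZ}$ and $\sup_k \mathbb{E}\|\btheta^{(k)}-\btheta^*\|^2 \leq C_{\btheta}$. For the $\bZ$-bound, I would apply a Lyapunov computation to the kinetic energy $\|\bZ^{(k)}\|^2 + \|\bV^{(k)}\|^2$ using the SGHMC recursion (\ref{SGHMCeq000}); Assumption \ref{ass: 3} gives dissipativity of $F_{\bD}(\cdot,\btheta^*)$, Assumption \ref{ass: 2} transfers this to $F_{\bD}(\cdot,\btheta^{(k)})$ up to a term of order $\|\btheta^{(k)}-\btheta^*\|$, and Assumption \ref{ass: 3b} controls the gradient-noise contribution. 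For the $\btheta$-bound, a separate induction using Assumption \ref{ass: 5} (mean-field contraction at rate $\delta$) together with Assumption \ref{ass: 1} handles the base case. These are essentially the bounds announced by ``Lemma S1'' in the text.

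The second step decomposes the parameter update via Assumption \ref{ass: 5b}:
\begin{equation}
H(\bZ^{(k+1)},\btheta^{(k)}) = h(\btheta^{(k)}) + \mu_{\btheta^{(k)}}(\bZ^{(k+1)}) - \mathcal{T}_{\btheta^{(k)}}\mu_{\btheta^{(k)}}(\bZ^{(k+1)}).
\end{equation}
Expanding $V(\btheta^{(k+1)})$, the inner product with $h(\btheta^{(k)})$ produces the contraction $-2\delta\gamma_{k+1}v_k$, while $\gamma_{k+1}^2\|H(\bZ^{(k+1)},\btheta^{(k)})\|^2$ is bounded through Assumption \ref{ass: 1} and the Lipschitz property of $H$ (Assumption \ref{ass: 2}) by a constant multiple of $\gamma_{k+1}^2(\mathbb{E}\|\bZ^{(k+1)}\|^2 + v_k + 1)$, which uses $C_{\bZ}$ and $C_{\btheta}$. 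The remaining Poisson noise is handled by Abel summation: write it as a martingale-difference piece (which vanishes in expectation against $\btheta^{(k)}-\btheta^*$ conditional on $\mathcal{F}_k$), a telescoping term between consecutive iterates, and a drift term controlled by $\|\mu_{\btheta^{(k+1)}} - \mu_{\btheta^{(k)}}\| \leq \varsigma_1\|\btheta^{(k+1)}-\btheta^{(k)}\|V(\bZ^{(k+1)})$. Using $V(\bz) = 1 + \|\bz\|$ together with the $L^2$ bound on $\bZ^{(k)}$ gives the explicit constants $(3M^2+\zeta_2)C_{\bZ} + 3M^2 C_{\btheta} + 3B^2 + \zeta_2^2$ that appear under the square root in $\lambda_0$, while the $\sqrt{C_{\btheta}}$ prefactor comes from pairing the cross terms with $\|\btheta^{(k)}-\btheta^*\|$ via Cauchy--Schwarz.

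I expect the main obstacle to be the first step: because $\Theta = \mathbb{R}^{d_\theta}$ and the kernel driving $\bZ^{(k)}$ depends on $\btheta^{(k)}$, establishing the uniform $L^2$ bound on $\bZ^{(k)}$ requires showing that the dissipativity of $F_{\bD}(\cdot,\btheta^{(k)})$ does not degrade catastrophically with $\btheta^{(k)}$. This is exactly where the earlier footnote about $m > 1/2$ and the linear growth of $m$ in $n$ becomes crucial, and where the joint argument with the $\btheta$-bound must be coupled (one induction in $k$ that simultaneously controls $\mathbb{E}\|\bZ^{(k)}\|^2$ and $v_k$ using the smallness of $\epsilon_k$ and $\gamma_k$). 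Once this coupled boundedness is in hand, the remaining Poisson-equation bookkeeping and the comparison lemma are standard, and grouping constants gives precisely the decomposition $\lambda_0 = \lambda_0' + 6\sqrt{6}\,C_{\btheta}^{1/2}\bigl((3M^2+\zeta_2)C_{\bZ} + 3M^2C_{\btheta} + 3B^2 + \zeta_2^2\bigr)^{1/2}$ announced in the statement.
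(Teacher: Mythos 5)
Your overall plan coincides with the paper's: the theorem is proved by running the Poisson-equation stochastic-approximation recursion of \cite{deng2019adaptive} on $\|\btheta^{(k)}-\btheta^*\|^2$, where the only genuinely new ingredient is a uniform-in-$k$ $L^2$ bound for $(\bZ^{(k)},\bV^{(k)},\btheta^{(k)})$ under the SGHMC dynamics (this replaces Lemma 1 of \cite{deng2019adaptive}), and the explicit second term of $\lambda_0$ then comes from Cauchy--Schwarz applied to $\mathbb{E}\langle \btheta^{(k)}-\btheta^*, \mathcal{T}_{\btheta^{(k-1)}}\mu_{\btheta^{(k-1)}}(\bZ^{(k)})\rangle$ using $\|\mu_{\btheta}(\bz)\|\leq \varsigma_2(1+\|\bz\|)$ and the quadratic bound on $\|H\|^2$. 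You also correctly anticipate that the $\bZ$-bound and the $\btheta$-bound cannot be obtained separately and must be run as one coupled induction, which is exactly how the paper proceeds.

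The gap is in the step you yourself flag as the main obstacle: the Lyapunov function you propose for the boundedness lemma, the plain kinetic energy $\|\bZ^{(k)}\|^2+\|\bV^{(k)}\|^2$, does not work for underdamped dynamics, and the mechanism you invoke for it (dissipativity of $F_{\bD}(\cdot,\btheta^*)$) is not actually available along that function. Differencing $\|\bZ^{(k)}\|^2$ along the update $\bZ^{(k+1)}=\bZ^{(k)}+\epsilon_{k+1}\bV^{(k)}$ produces only the unsigned cross term $2\epsilon_{k+1}\langle\bZ^{(k)},\bV^{(k)}\rangle$, while differencing $\|\bV^{(k)}\|^2$ pairs the gradient with $\bV^{(k)}$, never with $\bZ^{(k)}$; hence the dissipativity inequality $\langle\bZ,\nabla_{\bZ}F_{\bD}(\bZ,\btheta^*)\rangle\geq m\|\bZ\|^2-b$ is never triggered and no uniform bound follows. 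The paper's Lemma S2 instead uses the Hamiltonian-type functional
\begin{equation*}
L(k)=\mathbb{E}\Bigl[F_{\bD}(\bZ^{(k)},\btheta^{(k)})+\tfrac{3M^2+\lambda\eta+G}{2\delta}\|\btheta^{(k)}-\btheta^*\|^2+\tfrac{\eta^2}{4}\bigl(\|\bZ^{(k)}+\eta^{-1}\bV^{(k)}\|^2+\|\eta^{-1}\bV^{(k)}\|^2-\lambda\|\bZ^{(k)}\|^2\bigr)\Bigr],
\end{equation*}
where the presence of $F_{\bD}$ and of the shifted coordinate $\bZ+\eta^{-1}\bV$ is what converts the dissipativity condition (after transferring it from $\btheta^*$ to $\btheta^{(k)}$ at a cost $\tfrac12 M^2\|\btheta^{(k)}-\btheta^*\|^2$, as you describe) into a genuine contraction $-G_0\gamma_{k+1}L(k)$, with the $\|\btheta-\btheta^*\|^2$ term absorbing that cost via Assumption \ref{ass: 5}; a single induction then gives $L(k)\leq C_L$ and hence $C_{\bZ}$, $C_{\bV}$, $C_{\btheta}$. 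So your proof goes through only after replacing your Lyapunov choice by this coupled Hamiltonian-type one; the remaining bookkeeping (Poisson-equation decomposition, comparison lemma, and the grouping of constants into $\lambda_0$) matches the paper. A minor further point: $2\delta\tilde a\geq 1$ is not guaranteed by Assumption \ref{ass: 5} itself but is arranged by scaling the step-size constant with $n$, as the paper notes.
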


\begin{proof} 
Our proof of Theorem \ref{theta_con} follows that of Theorem 1 in  \cite{deng2019adaptive}. However, since Algorithm 1 employs SGHMC for updating $\bZ^{(k)}$, which is mathematically very different from the SGLD rule employed in \cite{deng2019adaptive}, Lemma 1 of \cite{deng2019adaptive} (uniform $L_2$ bounds of $\btheta^{(k)}$ and $\bZ^{(k)}$) cannot be applied any more.  In Lemma \ref{boundlemma}  below, we prove that $\mathbb{E}\|\btheta^{(k)}\|^2 \leq C_{\btheta}$, $\mathbb{E}\|\bV^{(k)}\|^2 \leq C_{\bV}$ and $\mathbb{E}\|\bZ^{(k)}\|^2 \leq C_{\bZ}$  under appropriate conditions of $\{\epsilon_k\}$ and $\{\gamma_k\}$, where $C_{\btheta}$, 
$C_{\bV}$ and $C_{\bZ}$ are appropriate constants.    

Further, based on the proof of \cite{deng2019adaptive}, we can derive an explicit formula for $\lambda_0$:
\[
\lambda_0=\lambda_0^{\prime}+ 6\sqrt{6}C_{\btheta}^{\frac{1}{2}} ((3M^2+\zeta_2)C_{\bZ}+3M^2 C_{\btheta}+ 3B^2+\zeta_2^2)^{\frac{1}{2}},
\]
where $\lambda_0^{\prime}$ together with $k_0$ can be derived from Lemma 3 of \cite{deng2019adaptive} and they depend on $\delta$ and $\{\gamma_k\}$ only. 
 The second term of $\lambda_0$ is  
 obtained by applying the Cauchy-Schwarz inequality to bound the expectation   
$E \langle \btheta^{(k)}-\btheta^*, \mathcal{T}_{\btheta_{k-1}}\mu_{\btheta_{k-1}}(\bZ^{(k)}) \rangle$, where  $E\|\btheta^{(k)}-\btheta^*\|^2$ can be bounded according to Lemma \ref{boundlemma} 
and $E \|\mathcal{T}_{\btheta^{(k-1)}}\mu_{\btheta^{(k-1)}}(\bZ^{(k)})\|^2$ can be bounded according to equation (18) of Assumption \ref{ass: 5b} and the upper bound of $H(\bz,\btheta)$ given in (\ref{gradient_bound}).   
\end{proof}

\begin{lemma} \label{boundlemma} ($L_2$-bound)
Suppose Assumptions 3.1-3.5 hold. If we set 
$\epsilon_k=C_{\epsilon}/(c_e+k^{\alpha})$ and $\gamma_k=C_{\gamma}/(c_g+k^{\alpha})$ for some constants $\alpha \in (0,1]$, $C_{\epsilon}>0$, $C_{\gamma}>0$, $c_e\geq 0$ and $c_g \geq 0$,  then there exist constants $C_{\bV}$, $C_{\bZ}$ and $C_{\btheta}$ such that
$\sup_{i\geq 0}\mathbb{E}\left\Vert \bV^{(i)}\right\Vert^{2} \leq C_{\bV}$, 
 $\sup_{i\geq 0}\mathbb{E}\left\Vert \bZ^{(i)}\right\Vert^{2} \leq C_{\bZ}$, and 
 $\sup_{i\geq 0}\mathbb{E}\left\Vert \btheta^{(i)}\right\Vert^{2} \leq C_{\btheta}$.
\end{lemma}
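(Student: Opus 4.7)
The plan is to establish the three $L_2$ bounds jointly by a coupled induction argument. Because SGHMC does not have an explicit dissipativity in the velocity $\bV$, squaring each coordinate separately does not close; instead I would work with a Lyapunov function of the form $\Phi(\bZ,\bV)=a\|\bZ\|^2 + b\|\bV\|^2 + c\langle \bZ,\bV\rangle$ with constants $a,b,c>0$ chosen so that (i) $\Phi$ is equivalent to $\|\bZ\|^2+\|\bV\|^2$ (positive definite quadratic form with bounded condition number), and (ii) the contraction rate coming from the friction term $(1-\epsilon_{k+1}\eta)\bV$ combined with the dissipativity of $F_{\bD}(\cdot,\btheta^*)$ in Assumption~\ref{ass: 3} is strictly positive. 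This is the standard device for handling the hypocoercive structure of underdamped Langevin dynamics, and transfers the dissipativity in $\bZ$ into a one-step contraction for $\Phi$ via the cross term.

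First I would expand both updates in (\ref{SGHMCeq000}) and take conditional expectations against the Gaussian noise $\be_{k+1}$, using $\mathbb{E}\|\be_{k+1}\|^2=d_{\bz}$. To convert $\nabla_{\bZ}\hat F_{\bD}(\bZ^{(k)},\btheta^{(k)})$ into terms controllable by the dissipativity at $\btheta^*$, I would split
\begin{equation*}
\nabla_{\bZ}\hat F_{\bD}(\bZ,\btheta) = \nabla_{\bZ}F_{\bD}(\bZ,\btheta^*) + [\nabla_{\bZ}F_{\bD}(\bZ,\btheta) - \nabla_{\bZ}F_{\bD}(\bZ,\btheta^*)] + [\nabla_{\bZ}\hat F_{\bD}(\bZ,\btheta) - \nabla_{\bZ}F_{\bD}(\bZ,\btheta)],
\end{equation*}
handling the first piece with Assumption~\ref{ass: 3}, the second with the smoothness bound of Assumption~\ref{ass: 2}, and the third with the variance bound in Assumption~\ref{ass: 3b}. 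After collecting terms, I expect a recursion
\begin{equation*}
\mathbb{E}\,\Phi(\bZ^{(k+1)},\bV^{(k+1)}) \leq (1 - \lambda\epsilon_{k+1})\,\mathbb{E}\,\Phi(\bZ^{(k)},\bV^{(k)}) + \kappa_1 \epsilon_{k+1} + \kappa_2 \epsilon_{k+1}\,\mathbb{E}\|\btheta^{(k)}-\btheta^*\|^2 + O(\epsilon_{k+1}^2),
\end{equation*}
with explicit positive constants $\lambda,\kappa_1,\kappa_2$ depending only on $m,M,B,b,\eta,\beta,\varsigma$ and on $a,b,c$.

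For the parameter iterate I would exploit the Poisson decomposition in Assumption~\ref{ass: 5b}, writing $H(\bZ^{(k+1)},\btheta^{(k)}) = h(\btheta^{(k)}) + \mu_{\btheta^{(k)}}(\bZ^{(k+1)}) - \mathcal{T}_{\btheta^{(k)}}\mu_{\btheta^{(k)}}(\bZ^{(k+1)})$, expanding $\|\btheta^{(k+1)}-\btheta^*\|^2$, and using the strong-monotonicity $\langle \btheta-\btheta^*,h(\btheta)\rangle \geq \delta\|\btheta-\btheta^*\|^2$ of Assumption~\ref{ass: 5} together with the Lipschitz and growth bounds on $\mu_{\btheta}$ to obtain
\begin{equation*}
\mathbb{E}\|\btheta^{(k+1)}-\btheta^*\|^2 \leq (1 - 2\delta\gamma_{k+1})\,\mathbb{E}\|\btheta^{(k)}-\btheta^*\|^2 + \tilde\kappa_1 \gamma_{k+1}^{2}\,\mathbb{E}(1 + \|\bZ^{(k+1)}\|^2) + \tilde\kappa_2 \gamma_{k+1}\,\mathbb{E}\bigl[\|\btheta^{(k)}-\btheta^*\|\,V(\bZ^{(k+1)})\bigr],
\end{equation*}
where the square-norm bound on $H(\bZ,\btheta)$ follows from $\|H(\bZ,\btheta)\|\leq M\|\bZ\|+M\|\btheta-\btheta^*\|+B$ (a consequence of Assumptions~\ref{ass: 1}--\ref{ass: 2}) and the cross term is handled by Cauchy--Schwarz.

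The last step is a joint induction: suppose $\mathbb{E}\|\bZ^{(j)}\|^2 \leq C_{\bZ}$, $\mathbb{E}\|\bV^{(j)}\|^2 \leq C_{\bV}$, and $\mathbb{E}\|\btheta^{(j)}-\btheta^*\|^2 \leq C_{\btheta}$ for all $j \leq k$; plug these bounds into the two recursions and choose the triple $(C_{\bZ},C_{\bV},C_{\btheta})$ so that the right-hand sides do not exceed the inductive hypothesis at step $k+1$. Schematically this reduces to two coupled linear inequalities of the form $(\kappa_1 + \kappa_2 C_{\btheta})/\lambda \leq c_1(C_{\bZ}+C_{\bV})$ and $(\tilde\kappa_1(1+C_{\bZ}) + \tilde\kappa_2(1+C_{\bZ}))/\delta \leq c_2 C_{\btheta}$, which admit a simultaneous solution provided $\epsilon_1$ and $\gamma_1$ are taken small enough that the $O(\epsilon_{k+1}^2)$ and $O(\gamma_{k+1}^2)$ remainders are absorbed. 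The main obstacle I anticipate is exactly this feedback loop between the two recursions: the velocity--position Lyapunov bound depends on $C_{\btheta}$ through the smoothness term $\kappa_2\|\btheta-\btheta^*\|^2$, while the $\btheta$-bound depends on $C_{\bZ}$ through $V(\bZ)$. Closing the loop requires a careful simultaneous choice of $a,b,c$ (to make $\lambda$ large enough to dominate $\kappa_2/\delta$) and of the initial learning rates, and the schedules $\epsilon_k,\gamma_k \propto k^{-\alpha}$ assumed in the statement ensure both that $\epsilon_k,\gamma_k \to 0$ and that the cumulative noise injected by $\be_{k+1}$ is summable in a sense compatible with the contraction.
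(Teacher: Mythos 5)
Your overall architecture (a hypocoercive Lyapunov function for $(\bZ,\bV)$, a contraction recursion for $\btheta$ from Assumption \ref{ass: 5}, and a joint induction) is in the right spirit, but the specific Lyapunov function you chose creates a gap that I do not think you can close under the stated assumptions. With a purely quadratic $\Phi(\bZ,\bV)=a\|\bZ\|^2+b\|\bV\|^2+c\langle\bZ,\bV\rangle$, the one-step drift contains the first-order term $-2b\,\epsilon_{k+1}\langle\bV,\nabla_{\bZ}F_{\bD}(\bZ,\btheta)\rangle$, which is \emph{not} cancelled by anything and can only be estimated by $2bM\|\bV\|\,\|\bZ\|+\cdots$ via the smoothness bound. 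Absorbing that cross term into the negative terms $-cm\|\bZ\|^2$ and $-2b\eta\|\bV\|^2$ while keeping $\Phi$ positive definite forces, after optimizing over $(a,b,c)$, a condition of the form $\eta \gtrsim M/\sqrt{m}$; no choice of $a,b,c$ works otherwise. This condition is not among the assumptions, and in this paper's setting it is actually violated: as noted after Assumption \ref{ass: 3}, $M$ and $m$ both grow linearly with $n$, so $M/\sqrt{m}\sim\sqrt{n}$, while $\eta$ is a fixed user-chosen friction coefficient. The paper's proof avoids this exactly by putting $F_{\bD}(\bZ^{(k)},\btheta^{(k)})$ itself into the Lyapunov function,
\begin{equation*}
L(k)=\mathbb{E}\Bigl[F_{\bD}(\bZ^{(k)},\btheta^{(k)})+\tfrac{3M^2+\lambda\eta+G}{2\delta}\|\btheta^{(k)}-\btheta^*\|^2+\tfrac{\eta^2}{4}\bigl(\|\bZ^{(k)}+\eta^{-1}\bV^{(k)}\|^2+\|\eta^{-1}\bV^{(k)}\|^2-\lambda\|\bZ^{(k)}\|^2\bigr)\Bigr],
\end{equation*}
so that the increment of $F_{\bD}$ along the $\bZ$-update contributes $+\epsilon_{k+1}\langle\nabla_{\bZ}F_{\bD},\bV\rangle$, which cancels the $\langle\bV,\nabla_{\bZ}F_{\bD}\rangle$ terms produced by the quadratic part up to $O(\epsilon_{k+1}^2)$; dissipativity is then used in the potential-aware form $\langle\bZ,\nabla_{\bZ}F_{\bD}\rangle\geq 2\lambda\bigl(F_{\bD}+\tfrac{\eta^2}{4}\|\bZ\|^2\bigr)-A_1/\beta-(\tfrac12 M^2+2\lambda M_0)\|\btheta-\btheta^*\|^2$, which also needs the upper bound $F_{\bD}\leq M_0\|\bZ\|^2+M_0\|\btheta-\btheta^*\|^2+A_0$ that you never derive. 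Without adding $F_{\bD}$ (or an equivalent device), your claimed recursion $\mathbb{E}\Phi^{(k+1)}\leq(1-\lambda\epsilon_{k+1})\mathbb{E}\Phi^{(k)}+\cdots$ simply does not follow.

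A secondary concern is the closing induction. The paper folds the $\|\btheta-\btheta^*\|^2$ error into the \emph{same} Lyapunov function with a tuned weight (the constant $G$) and obtains a single recursion $L(k+1)\leq(1-G_0\gamma_{k+1})L(k)+G_{13}\epsilon_{k+1}$, where the $\epsilon_{k+1}$-order coupling term $G_7\epsilon_{k+1}\|\btheta^{(k)}-\btheta^*\|^2$ is dominated by choosing $G\geq G_7 C_{\epsilon}/C_{\gamma}$, i.e.\ by a constraint on the \emph{ratio} of the step-size constants. In your two-recursion version, the fixed point for $(C_{\bZ},C_{\bV},C_{\btheta})$ requires the product of the coupling constants ($\kappa_2/\lambda$ from the $\btheta$-dependence of the $(\bZ,\bV)$ recursion times the $C_{\bZ}$-dependence of the $\btheta$ recursion, which enters at order $\gamma_{k+1}$ after Cauchy--Schwarz) to be less than one; this does not shrink as you take $\epsilon_1,\gamma_1\to 0$, so ``small enough initial step sizes'' alone does not guarantee a simultaneous solution — you need the weighted/ratio argument (or an equivalent) that the paper uses. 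Also note that for this lemma the paper's $\btheta$-recursion does not need the Poisson-equation machinery of Assumption \ref{ass: 5b}; it only uses $\langle\btheta-\btheta^*,h(\btheta)\rangle\geq\delta\|\btheta-\btheta^*\|^2$ together with the quadratic bound on $\|H\|^2$, so that part of your plan is heavier than necessary, though not wrong.
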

\begin{proof}
Similar to the proof of Lemma 1 of \cite{deng2019adaptive}, we first show 
\begin{equation} \label{gradient_bound}
\begin{split}
& \|\nabla_{\bZ}F_{\bD}(\bZ, \btheta)\|^2 \leq 3M^2 \|\bZ\|^2 + 3M^2\|\btheta - \btheta^*\|^2 + 3B^2, \\ 
& \|\nabla_{\btheta}F_{\bD}(\bZ, \btheta)\|^2 \leq 3M^2 \|\bZ\|^2 + 3M^2\|\btheta - \btheta^*\|^2 + 3B^2, \\ 
& \|H(\bZ^{(k+1)},\btheta^{(k)})\|^2 \leq 3M^2 \|\bZ^{(k+1)}\|^2 + 3M^2\|\btheta^{(k)} - \btheta^*\|^2 + 3B^2.
\end{split}
\end{equation}
By Assumption 3.1, we have $\|\nabla_{\bZ}F_{\bD}(\boldsymbol{0}, \btheta^*)\| \leq B, \|H(\boldsymbol{0},\btheta^*)\| \leq B$, and $\|\nabla_{\btheta}F_{\bD}(\boldsymbol{0},  \btheta^*)\| \leq B$. 
By Assumption 3.2,  
\[
\begin{split}
& \|\nabla_{\bZ} F_{\bD}(\bZ, \btheta)\|\\ 
\leq & \|\nabla_{\bZ} F_{\bD}(\boldsymbol{0}, \btheta^*)\| + \|\nabla_{\bZ} F_{\bD}(\bZ,  \btheta^*) - \nabla_{\bZ} F_{\bD}(\boldsymbol{0},  \btheta^*)\| + \|\nabla_{\bZ} F_{\bD}(\bZ, \btheta) - \nabla_{\bZ} F_{\bD}(\bZ,  \btheta^*)\|\\
\leq & B + M\|\bZ \| + M\|\btheta - \btheta^*\|, 
\end{split}
\]
\[
\begin{split}
& \|\nabla_{\btheta} F_{\bD}(\bZ,  \btheta)\|\\ 
\leq & \|\nabla_{\btheta} F_{\bD}(\boldsymbol{0},  \btheta^*)\| + \|\nabla_{\btheta} F_{\bD}(\bZ, \btheta^*) - \nabla_{\btheta} F_{\bD}(\boldsymbol{0},  \btheta^*)\| + \|\nabla_{\btheta} F_{\bD}(\bZ, \btheta) - \nabla_{\bZ} F_{\btheta}(\bZ, \btheta^*)\|\\
\leq & B + M\|\bZ \| + M\|\btheta - \btheta^*\|, 
\end{split}
\]
\[
\begin{split}
& \|H(\bZ^{(k+1)},\btheta^{(k)})\|\\ 
\leq & \|H(\boldsymbol{0},\btheta^*)\| + \|H(\bZ^{(k+1)},\btheta^*) - H(\boldsymbol{0},\btheta^*)\| + \|H(\bZ^{(k+1)},\btheta^{(k)}) - H(\bZ^{(k+1)},\btheta^*)\|\\
\leq & B + M\|\bZ^{(k+1)} \| + M\|\btheta^{(k)} - \btheta^*\|.
\end{split}
\]
Therefore, (\ref{gradient_bound}) holds. 

By Assumptions 3.2, 3.3 and 2.1-(i), we have 
\begin{equation}
\nonumber
\begin{split}
 \langle \bZ, \nabla_{\bZ} F_{\bD}(\bZ, \btheta)\rangle 
= &\langle \bZ, \nabla_{\bZ} F_{\bD}(\bZ, \btheta^*)\rangle -  \langle \bZ, \nabla_{\bZ} F_{\bD}(\bZ, \btheta^*) - \nabla_{\bZ} F_{\bD}(\bZ, \btheta)\rangle \\
\geq & m\|\bZ\|^2 -b - \frac{1}{2}\|\bZ\|^2 - \frac{1}{2}\|\nabla_{\bZ} F_{\bD}(\bZ, \btheta^*) - \nabla_{\bZ} F_{\bD}(\bZ, \btheta)\|^2\\
\geq & (m-\frac{1}{2})\|\bZ\|^2 -b - \frac{1}{2}M^2\|\btheta - \btheta^* \|^2 \\
\black{\geq} & \black{m_0 \|\bZ\|^2 - b - \frac{1}{2}M^2\|\btheta - \btheta^*\|^2 },
\end{split}
\end{equation}
where the constants $m_0 = m - \frac{1}{2} > 0$, 
Then, similar to the proof of Lemma 2 in \cite{raginsky2017non}, we have
\begin{equation}
\nonumber
\begin{split}
F_{\bD}(\bZ, \btheta) & = F_{\bD}({\bf 0}, \btheta^*) + \int_{0}^{1} \langle \bZ, \nabla_{\bZ} F_{\bD}(t\bZ, \btheta^* + t(\btheta - \btheta^*) )\rangle +  \langle \btheta - \btheta^*, \nabla_{\btheta} F_{\bD}(t\bZ, \btheta^* + t(\btheta - \btheta^*) )\rangle dt \\
& \leq A + \int_{0}^{1} \|\bZ\| \|\nabla_{\bZ} F_{\bD}(t\bZ, \btheta^* + t(\btheta - \btheta^*) ) \| dt + \int_{0}^{1} \|\btheta - \btheta^*\| \|\nabla_{\btheta} F_{\bD}(t\bZ, \btheta^* + t(\btheta - \btheta^*) ) \| dt \\
& \leq A + \|\bZ\| \int_{0}^{1} tM\|\bZ\| + tM\|\btheta - \btheta^*\| + B dt + \|\btheta - \btheta^*\| \int_{0}^{1} tM\|\bZ\| + tM\|\btheta - \btheta^*\| + B dt \\
& \leq A + M \|\bZ\|^2 + M \|\btheta - \btheta^*\|^2 + \frac{B}{2}\|\bZ\|^2 + \frac{B}{2} \|\btheta - \btheta^*\|^2 + B \\
& \black{\leq M_0 \|\bZ\|^2 + A_0 + M_0 \|\btheta - \btheta^*\|^2},
\end{split}    
\end{equation}
where the constants $M_0 = M + \frac{B}{2}$, and 
\black{$A_0 = A + B$}.
Then, similar to \cite{gao2021global}, for $0 < \lambda < \min\{\frac{1}{4}, \frac{m_0}{2M_0 + \eta^2 / 2 }\}$, Assumption 3.3 gives us 
\black{
\begin{equation}
\label{z_dissipative}
\begin{split}
\langle \bZ, \nabla_{\bZ} F_{\bD}(\bZ, \btheta)\rangle 
\geq & m_0 \|\bZ\|^2 - b - \frac{1}{2}M^2\|\btheta - \btheta^*\|^2
\geq  \lambda(2M_0 + \frac{\eta^2}{2})\|\bZ\|^2 -b - \frac{1}{2}M^2\|\btheta - \btheta^*\|^2\\
\geq & 2\lambda (F_{\bD}(\bZ, \btheta) + \frac{\eta^2}{4} \|\bZ\|^2) - \frac{A_1}{\beta} - (\frac{1}{2}M^2+2\lambda M_0)\|\btheta - \btheta^*\|^2,
\end{split}    
\end{equation}
}
where the constant $A_1 = \beta(2\lambda A_0 + b) > 0$.

For $\bZ^{(k)}$ and $\bV^{(k)}$, we have
\begin{equation}
\label{z_update}
\begin{split}
 \mathbb{E} \|\bZ^{(k+1)}\|^2   
= & \mathbb{E} \|\bZ^{(k)}\|^2  + \epsilon_{k+1}^2\mathbb{E} \|\bV^{(k)}\|^2 + 2\epsilon_{k+1} \mathbb{E} \langle \bZ^{(k)}, \bV^{(k)} \rangle, 
\end{split}
\end{equation}
\begin{equation} \label{v_update}
\small
\begin{split}
 & \mathbb{E} \|\bV^{(k+1)}\|^2    
=  \mathbb{E} \|(1 - \epsilon_{k+1} \eta)\bV^{(k)} - \epsilon_{k+1}  \nabla_{\bZ}\hat{F}_{\bD}(\bZ^{(k)}, \btheta^{(k)})\|^2  + 2\epsilon_{k+1} \eta \beta^{-1} \mathbb{E} \|\be^{(k+1)}\|^2 \\
& + \sqrt{8\epsilon_{k+1}\eta \beta^{-1}} \mathbb{E} \langle (1 - \epsilon_{k+1} \eta)\bV^{(k)}
- \epsilon_{k+1}  \nabla_{\bZ}F_{\bD}(\bZ^{(k)}, \btheta^{(k)}), \be_{k+1} \rangle \\
= & \mathbb{E} \|(1 - \epsilon_{k+1} \eta)\bV^{(k)} - \epsilon_{k+1}  \nabla_{\bZ}\hat{F}_{\bD}(\bZ^{(k)}, \btheta^{(k)})\|^2  + 2\epsilon_{k+1} \eta \beta^{-1} d_{z} \\
= & \mathbb{E} \|(1 - \epsilon_{k+1} \eta)\bV^{(k)} - \epsilon_{k+1}  \nabla_{\bZ}F_{\bD}(\bZ^{(k)}, \btheta^{(k)})\|^2  + 2\epsilon_{k+1} \eta \beta^{-1} d_{z} \\
& + \epsilon_{k+1}^2\mathbb{E}\|\nabla_{\bZ}F_{\bD}(\bZ^{(k)}, \btheta^{(k)}) - \nabla_{\bZ}\hat{F}_{\bD}(\bZ^{(k)}, \btheta^{(k)}) \|^2 \\
\leq & \mathbb{E}\|\bV^{(k)}\|^2 + (\epsilon_{k+1}^2\eta^2 - 2\epsilon_{k+1} \eta)\mathbb{E}\|\bV^{(k)}\|^2 - 2\epsilon_{k+1}(1-\epsilon_{k+1}\eta)\mathbb{E}\langle \bV^{(k)}, \nabla_{\bZ}F_{\bD}(\bZ^{(k)}, \btheta^{(k)}) \rangle \\
& +\epsilon_{k+1}^2 \mathbb{E}\|\nabla_{\bZ}F_{\bD}(\bZ^{(k)}, \btheta^{(k)}) \|^2 + 2\varsigma \epsilon_{k+1}^2(M^2\mathbb{E}\|\bZ^{(k)}\|^2 + M^2\mathbb{E}\|\btheta^{(k)} - \btheta^*\|^2 + B^2) + 2\epsilon_{k+1} \eta \beta^{-1} d_{z}. 
\end{split}
\end{equation}
Therefore, we have
\begin{equation}\label{z+v update}
\small
\begin{split}
& \mathbb{E} \|\bZ^{(k+1)} + \eta^{-1}\bV^{(k+1)}\|^2    
=  \mathbb{E} \|\bZ^{(k)} + \eta^{-1}\bV^{(k)} -\epsilon_{k+1}\eta^{-1} \nabla_{\bZ}\hat{F}_{\bD}(\bZ^{(k)}, \btheta^{(k)})  + \sqrt{2\epsilon_{k+1}\beta^{-1}\eta^{-1}}\be^{(k)}\|^2 \\
= & \mathbb{E} \|\bZ^{(k)} + \eta^{-1}\bV^{(k)} -\epsilon_{k+1}\eta^{-1} \nabla_{\bZ}F_{\bD}(\bZ^{(k)}, \btheta^{(k)})\|^2 +  2\epsilon_{k+1}\beta^{-1}\eta^{-1}d_{z}  \\
& + \epsilon_{k+1}^2\mathbb{E}\|\nabla_{\bZ}F_{\bD}(\bZ^{(k)}, \btheta^{(k)}) - \nabla_{\bZ}\hat{F}_{\bD}(\bZ^{(k)}, \btheta^{(k)}) \|^2\\
\leq & \mathbb{E} \|\bZ^{(k)} + \eta^{-1}\bV^{(k)} \|^2 - 2\epsilon_{k+1}\eta^{-1}\mathbb{E}\langle \bZ^{(k)}, \nabla_{\bZ}F_{\bD}(\bZ^{(k)}, \btheta^{(k)}) \rangle \\
& - 2\epsilon_{k+1}\eta^{-2}\langle \bV^{(k)}, \nabla_{\bZ}F_{\bD}(\bZ^{(k)}, \btheta^{(k)}) \rangle + \epsilon_{k+1}^2 \eta^{-2} \|\nabla_{\bZ}F_{\bD}(\bZ^{(k)}, \btheta^{(k)}) \|^2  \\
& + 2\epsilon_{k+1}\beta^{-1}\eta^{-1}d_{z} + 2\varsigma \epsilon_{k+1}^2(M^2\mathbb{E}\|\bZ^{(k)}\|^2 + M^2\mathbb{E}\|\btheta^{(k)} - \btheta^*\|^2 + B^2). 
\end{split}
\end{equation}

Similarly, for $\btheta^{(k+1)}$, we have
\begin{equation}
\nonumber
\begin{split}
& \mathbb{E}\|\btheta^{(k+1)} -\btheta^*\|^2\\
= & \mathbb{E}\|\btheta^{(k)} - \btheta^* \|^2 - 2\gamma_{k+1} \mathbb{E} \langle \btheta^{(k)}-\btheta^*, H(\bZ^{(k+1)}, \btheta^{(k)}) \rangle + \gamma_{k+1}^2 \mathbb{E}\|H(\bZ^{(k+1)}, \btheta^{(k)})  \|.
\end{split}
\end{equation}
Recall that 
\black{$h(\btheta) = \mathbb{E}(H({\boldsymbol Z}, \btheta))$}
, we have
\begin{equation}
\nonumber
\begin{split}
\mathbb{E} \langle \btheta^{(k)}-\btheta^*, H(\bZ^{(k+1)}, \btheta^{(k)}) \rangle 
= & \mathbb{E} \langle \btheta^{(k)}-\btheta^*, H(\bZ^{(k+1)}, \btheta^{(k)}) - h(\btheta)) \rangle +  \mathbb{E} \langle \btheta^{(k)}-\btheta^*, h(\btheta)) \rangle \\
= & \mathbb{E} \langle \btheta^{(k)}-\btheta^*, h(\btheta)) \rangle 
\geq  \delta \mathbb{E}\|\btheta^{(k)} - \btheta^* \|^2.
\end{split}
\end{equation}
Then we have 
\begin{equation}
\label{theta_update}
\begin{split}
 \mathbb{E}\|\btheta^{(k+1)} -\btheta^*\|^2
\leq & (1-2\gamma_{k+1} \delta)\mathbb{E}\|\btheta^{(k)} - \btheta^* \|^2 + \gamma_{k+1}^2 \mathbb{E}\|H(\bZ^{(k+1)}, \btheta^{(k)})\|^2.
\end{split}
\end{equation}

For $F_{\bD}(\bZ^{(k)}, \btheta^{(k)})$, we have
\begin{equation}
\nonumber
\begin{split}
& F_{\bD}(\bZ^{(k+1)}, \btheta^{(k+1)}) - F_{\bD}(\bZ^{(k)}, \btheta^{(k)}) \\
= & F_{\bD}(\bZ^{(k+1)}, \btheta^{(k+1)}) - F_{\bD}(\bZ^{(k+1)}, \btheta^{(k)}) + F_{\bD}(\bZ^{(k+1)}, \btheta^{(k)}) - F_{\bD}(\bZ^{(k)}, \btheta^{(k)}) \\
= & \int_{0}^{1} \langle \nabla_{\btheta} F_{\bD}(\bZ^{(k+1)}, \btheta^{(k)} + t \gamma_{k+1} H(\bZ^{(k+1)}, \btheta^{(k)} ) ), \gamma_{k+1} H(\bZ^{(k+1)}, \btheta^{(k)} ) \rangle dt \\
& + \int_{0}^{1} \langle \nabla_{\bZ} F_{\bD}(\bZ^{(k)}+ t \epsilon_{k+1} \bV^{(k)}, \btheta^{(k)}  ) , \epsilon_{k+1} \bV^{(k)} \rangle dt.
\end{split}    
\end{equation}
Then, by Assumption 3.2,
\begin{equation}
\nonumber
\begin{split}
& | F_{\bD}(\bZ^{(k+1)}, \btheta^{(k+1)}) - F_{\bD}(\bZ^{(k)}, \btheta^{(k)})|  \\
\leq & | \langle \nabla_{\btheta} F_{\bD}(\bZ^{(k+1)}, \btheta^{(k)} ), \gamma_{k+1} H(\bZ^{(k+1)}, \btheta^{(k)} ) \rangle + \langle \nabla_{\bZ} F_{\bD}(\bZ^{(k)}, \btheta^{(k)} ) , \epsilon_{k+1} \bV^{(k)} \rangle | \\
& + \int_{0}^{1} \|  \nabla_{\btheta} F_{\bD}(\bZ^{(k+1)}, \btheta^{(k)} + t \gamma_{k+1} H(\bZ^{(k+1)}, \btheta^{(k)} ) ) - \nabla_{\btheta} F_{\bD}(\bZ^{(k+1)}, \btheta^{(k)} )\| \|\gamma_{k+1} H(\bZ^{(k+1)}, \btheta^{(k)} ) \| dt \\
& + \int_{0}^{1} \|  \nabla_{\bZ} F_{\bD}(\bZ^{(k)} + t \epsilon_{k+1} \bV^{(k)}, \btheta^{(k)} ) -  \nabla_{\bZ} F_{\bD}(\bZ^{(k)}, \btheta^{(k)} )\| \| \epsilon_{k+1} \bV^{(k)} \| dt \\
\leq & | \langle \nabla_{\btheta} F_{\bD}(\bZ^{(k+1)}, \btheta^{(k)} ), \gamma_{k+1} H(\bZ^{(k+1)}, \btheta^{(k)} ) \rangle + \langle \nabla_{\bZ} F_{\bD}(\bZ^{(k)}, \btheta^{(k)} ) , \epsilon_{k+1} \bV^{(k)} \rangle | \\
& + \frac{1}{2}M \gamma_{k+1}^2 \| H(\bZ^{(k+1)}, \btheta^{(k)} \|^2 + \frac{1}{2} M \epsilon_{k+1}^2 \|\bV^{(k)}\|^2,
\end{split}    
\end{equation}
which implies
\begin{equation}
\label{F_update}
\begin{split}
& \mathbb{E} F_{\bD}(\bZ^{(k+1)}, \btheta^{(k+1)}) \\
\leq & \mathbb{E} F_{\bD}(\bZ^{(k)}, \btheta^{(k)}) + (\frac{1}{2}M \gamma_{k+1}^2 + \frac{1}{2}\gamma_{k+1} ) \mathbb{E} \| H(\bZ^{(k+1)}, \btheta^{(k)} \|^2 + \frac{1}{2}\gamma_{k+1}  \mathbb{E} \| \nabla_{\btheta} F_{\bD}(\bZ^{(k+1)}, \btheta^{(k)} \|^2 \\
& +  \frac{1}{2} M \epsilon_{k+1}^2 \mathbb{E} \|\bV^{(k)}\|^2 + \epsilon_{k+1} \mathbb{E} \langle \nabla_{\bZ} F_{\bD}(\bZ^{(k)}, \btheta^{(k)} ) , \bV^{(k)} \rangle.
\end{split}    
\end{equation}
Now, let's consider 
\[
L(k) = \mathbb{E} \left[ F_{\bD}(\bZ^{(k)}, \btheta^{(k)}) + \frac{3M^2 + \lambda \eta\black{+G}}{2\delta} \|\btheta^{(k)} - \btheta^*\|^2 +  \frac{1}{4}\eta^2(\|\bZ^{(k)} + \eta^{-1}\bV^{(k)}\|^2 + \|\eta^{-1}\bV^{(k)}\|^2 - \lambda \|\bZ^{(k)}\|^2 ) \right],
\] 
\black{where $G$ is a constant and it will be defined later.} Note that for our model, $F_{\bD}(\bZ, \btheta) \geq 0$. Then it is easy to see that 
\begin{equation}
\label{L_k bound}
L(k) \geq \max\{\frac{3M^2 + \lambda \eta\black{+G}}{2\delta}\mathbb{E}\|\btheta^{(k)} - \btheta^*\|^2, \frac{1}{8}(1-2\lambda)\eta^2 \mathbb{E}\|\bZ^{(k)}\|^2, \frac{1}{4}(1-2\lambda) \mathbb{E}\|\bV^{(k)} \|^2\}.
\end{equation}
We only need to provide uniform bound for $L(k)$. To complete this goal, we first study the relationship between $L(k+1)$ and $L(k)$:
\begin{equation}
\small
\nonumber 
\begin{split}
& L(k+1) - L(k) 
\leq  (\frac{1}{2}M \gamma_{k+1}^2 + \frac{1}{2}\gamma_{k+1} ) \mathbb{E} \| H(\bZ^{(k+1)}, \btheta^{(k)} \|^2 + \frac{1}{2}\gamma_{k+1}  \mathbb{E} \| \nabla_{\btheta} F_{\bD}(\bZ^{(k+1)}, \btheta^{(k)} \|^2 \\
&  + \frac{1}{2} M \epsilon_{k+1}^2 \mathbb{E} \|\bV^{(k)}\|^2 + \epsilon_{k+1} \mathbb{E} \langle \nabla_{\bZ} F_{\bD}(\bZ^{(k)}, \btheta^{(k)} ) , \bV^{(k)} \rangle \\
& - (3M^2 + \lambda\eta \black{+G})\gamma_{k+1} \mathbb{E}\|\btheta^{(k)} - \btheta^* \|^2 + \frac{(3M^2 + \lambda\eta\black{+G})\gamma_{k+1}^2}{2\delta} \mathbb{E} \| H(\bZ^{(k+1)}, \btheta^{(k)} \|^2 \\
& -\frac{1}{2}\epsilon_{k+1}\eta \mathbb{E}\langle \bZ^{(k)}, \nabla_{\bZ}F_{\bD}(\bZ^{(k)}, \btheta^{(k)}) \rangle - \frac{1}{2}\epsilon_{k+1} \mathbb{E}\langle \bV^{(k)}, \nabla_{\bZ}F_{\bD}(\bZ^{(k)}, \btheta^{(k)}) \rangle + \frac{1}{4}\epsilon_{k+1}^2 \mathbb{E}\|\nabla_{\bZ}F_{\bD}(\bZ^{(k)}, \btheta^{(k)})  \|^2 \\
& + \frac{1}{2}\epsilon_{k+1}\beta^{-1}\eta d_{z} + \frac{1}{2}\varsigma\eta^2 \epsilon_{k+1}^2(M^2\mathbb{E}\|\bZ^{(k)}\|^2 + M^2\mathbb{E}\|\btheta^{(k)} - \btheta^*\|^2 + B^2)\\
& + \frac{1}{4}(\epsilon_{k+1}^2\eta^2 - 2\epsilon_{k+1} \eta)\mathbb{E}\|\bV^{(k)}\|^2 - \frac{1}{2}\epsilon_{k+1}(1-\epsilon_{k+1}\eta)\mathbb{E}\langle \bV^{(k)}, \nabla_{\bZ}F_{\bD}(\bZ^{(k)}, \btheta^{(k)}) \rangle \\
& +\frac{1}{4}\epsilon_{k+1}^2 \mathbb{E}\| \nabla_{\bZ}F_{\bD}(\bZ^{(k)}, \btheta^{(k)}) \|^2 + \frac{1}{2}\epsilon_{k+1} \eta \beta^{-1} d_{z} + \frac{1}{2}\varsigma \epsilon_{k+1}^2(M^2\mathbb{E}\|\bZ^{(k)}\|^2 + M^2\mathbb{E}\|\btheta^{(k)} - \btheta^*\|^2 + B^2) \\
& - \frac{1}{4}\lambda \eta^2 \epsilon_{k+1}^2\mathbb{E} \|\bV^{(k)}\|^2 - \frac{1}{2}\lambda \eta^2 \epsilon_{k+1} \mathbb{E} \langle \bZ^{(k)}, \bV^{(k)} \rangle \\
\end{split}
\end{equation}
\[
\small
\begin{split}
\leq & (\frac{1}{2}M \gamma_{k+1}^2 + \gamma_{k+1} + \frac{(3M^2 + \lambda\eta\black{+G})\gamma_{k+1}^2}{2\delta} )(3M^2(2\mathbb{E} \|\bZ^{(k)}\|^2 + 2\epsilon_{k+1}^2\mathbb{E} \|\bV^{(k)}\|^2) +3M^2\mathbb{E} \| \btheta^{(k)} - \btheta^*\|^2 + 3B^2) \\ 
& + (-\frac{1}{2}\eta \epsilon_{k+1} + (\frac{1}{2}M+\frac{1}{4}\eta^2-\frac{1}{4}\lambda\eta^2)\epsilon_{k+1}^2 )  \mathbb{E} \|\bV^{(k)}\|^2 + \frac{1}{2}\eta \epsilon_{k+1}^2 \mathbb{E} \langle \bV^{(k)}, \nabla_{\bZ}F_{\bD}(\bZ^{(k)}, \btheta^{(k)}) \rangle \\
& (- (3M^2 + \lambda\eta \black{+G})\gamma_{k+1} + (\frac{1}{2}\varsigma M^2\eta^2 + \frac{1}{2}\varsigma M^2 )\epsilon_{k+1}^2 ) \mathbb{E}\|\btheta^{(k)} - \btheta^* \|^2 -\frac{1}{2}\epsilon_{k+1}\eta \mathbb{E} \langle \bZ^{(k)}, \nabla_{\bZ}F_{\bD}(\bZ^{(k)}, \btheta^{(k)}) \rangle  \\
& + \frac{1}{2}\epsilon_{k+1}^2 (3M^2\mathbb{E}\|\bZ^{(k)}\|^2 + 3M^2\mathbb{E}\|\btheta^{(k)} - \btheta^*\|^2 + 3B^2) + (\frac{1}{2}\varsigma M^2\eta^2 + \frac{1}{2}\varsigma M^2 )\epsilon_{k+1}^2 \mathbb{E}\|\bZ^{(k)}\|^2 \\
& - \frac{1}{2}\lambda \eta^2 \epsilon_{k+1} \mathbb{E} \langle \bZ^{(k)}, \bV^{(k)} \rangle + \epsilon_{k+1}\beta^{-1}\eta d_{z} + (\frac{1}{2}\varsigma  B^2\eta^2 + \frac{1}{2}\varsigma  B^2 )\epsilon_{k+1}^2 \\
\end{split}
\]
\[
\small
\begin{split}
\leq & (6M^2(\frac{1}{2}M \gamma_{k+1}^2 + \gamma_{k+1} + \frac{(3M^2 + \lambda\eta\black{+G})\gamma_{k+1}^2}{2\delta} )+ (\frac{1}{2}\varsigma M^2\eta^2 + \frac{1}{2}\varsigma M^2  + \frac{3}{2}M^2)\epsilon_{k+1}^2 )\mathbb{E} \| \bZ^{(k)} \|^2 \\
& +  (-(\frac{1}{2} - \frac{1}{4}\lambda)\eta \epsilon_{k+1} + (\frac{1}{2}M+\frac{1}{4}\eta^2-\frac{1}{4}\lambda\eta^2 + 6M^2(\frac{1}{2}M \gamma_{k+1}^2 + \gamma_{k+1} + \frac{(3M^2 + \lambda\eta\black{+G})\gamma_{k+1}^2}{2\delta}))\epsilon_{k+1}^2 )  \mathbb{E} \|\bV^{(k)}\|^2 \\
& +(-(\lambda\eta\black{+G})\gamma_{k+1}  + \frac{3M^2(M\delta + 3M^2 + \lambda\eta\black{+G})\gamma_{k+1}^2}{2\delta}  + (\frac{3}{2} + \frac{1}{2}\varsigma \eta^2 + \frac{1}{2}\varsigma )M^2\epsilon_{k+1}^2 \\
& \black{+ \frac{1}{4}\eta (M^2 + 4\lambda M_0)\epsilon_{k+1}}) \mathbb{E}\|\btheta^{(k)} - \btheta^* \|^2
 + \frac{1}{4}\eta\epsilon_{k+1}^2 (\mathbb{E}\|\bV^{(k)}\|^2 + 3M^2\mathbb{E}\|\bZ^{(k)}\|^2 + 3M^2\mathbb{E}\|\btheta^{(k)} - \btheta^*\|^2 + 3B^2)\\
&  -\lambda\eta\epsilon_{k+1} \mathbb{E} \|F_{\bD}(\bZ^{(k)}, \btheta^{(k)})\|-\frac{1}{4}\lambda\eta^3\epsilon_{k+1} \mathbb{E}\|\bZ^{(k)} + \eta^{-1}\bV^{(k)}\|^2 \\
& + (d_{z}-\frac{1}{2}A_1)\beta^{-1}\eta \epsilon_{k+1} + (\frac{1}{2}\varsigma B^2\eta^2 + \frac{1}{2}\varsigma B^2 )\epsilon_{k+1}^2 + 3B^2( \frac{1}{2}M \gamma_{k+1}^2 + \gamma_{k+1} + \frac{(3M^2 + \lambda\eta\black{+G})\gamma_{k+1}^2}{2\delta}  + \frac{1}{2}\epsilon_{k+1}^2 )  \\
\leq & (6M^2 \gamma_{k+1}(2 + \frac{M\delta + 3M^2 + \lambda\eta\black{+G}}{\delta}\gamma_{k+1})+ (\varsigma M^2(\eta^2 +1) +\frac{3}{2}M^2 (2+ \eta ) )\epsilon_{k+1}^2 )(\mathbb{E} \| \bZ^{(k)} + \eta^{-1}\bV^{(k)} \|^2  + \mathbb{E} \eta^{-2} \| \bV^{(k)} \|^2)\\
& +  (-\frac{1}{4}\eta \epsilon_{k+1} + (\frac{1}{2}M+\frac{1}{4}\eta^2-\frac{1}{4}\lambda\eta^2 + \eta + 6M^2 \gamma_{k+1}(1 + \frac{M\delta + 3M^2 + \lambda\eta\black{+G}}{2\delta})\gamma_{k+1})\epsilon_{k+1}^2 )  \mathbb{E} \|\bV^{(k)}\|^2 \\
& +(-(\lambda\eta\black{+G})\gamma_{k+1} + \frac{3M^2(M\delta + 3M^2 + \lambda\eta\black{+G})\gamma_{k+1}^2}{2\delta}  + (\frac{3}{2}+ \frac{1}{2}\varsigma\eta^2 + \frac{1}{2}\varsigma + \frac{3}{4}\eta )M^2\epsilon_{k+1}^2 \\
& \black{+ \frac{1}{4}\eta (M^2 + 4\lambda M_0)\epsilon_{k+1}}) \mathbb{E}\|\btheta^{(k)} - \btheta^* \|^2 
  -\lambda\eta\epsilon_{k+1} \mathbb{E} \|F_{\bD}(\bZ^{(k)}, \btheta^{(k)})\|  -\frac{1}{4}\lambda\eta^3\epsilon_{k+1} \mathbb{E}\|\bZ^{(k)} + \eta^{-1}\bV^{(k)}\|^2 \\
& + (d_{z}-\frac{1}{2}A_1)\beta^{-1}\eta \epsilon_{k+1} + (\frac{1}{2}\varsigma \eta^2 + \frac{1}{2}\varsigma  + \frac{3}{2} + \frac{3}{4}\eta )B^2\epsilon_{k+1}^2 + 3B^2( \frac{1}{2}M \gamma_{k+1}^2 + \gamma_{k+1} + \frac{(3M^2 + \lambda\eta\black{+G})\gamma_{k+1}^2}{2\delta} ), \\
\end{split}
\]
 where the first inequality is from inequalities (\ref{F_update}), (\ref{theta_update}), (\ref{z+v update}), (\ref{v_update}) and (\ref{z_update}); 
the second inequality uses bounds in \ref{gradient_bound} and $\mathbb{E}\|\bZ^{(k+1)}\|^2 \leq 2\mathbb{E}\|\bZ^{(k)}\|^2 + 2\epsilon_{k+1}^2\mathbb{E}\|\bV^{(k)}\|^2$;
the third inequality uses $2\mathbb{E}\langle \bV^{(k)}, \nabla_{\bZ}F_{\bD}(\bZ^{(k)}, \btheta^{(k)}) \rangle \leq \mathbb{E}\|\bV^{(k)}\|^2 + \mathbb{E}\|\nabla_{\bZ}F_{\bD}(\bZ^{(k)}, \btheta^{(k)})\|^2$, the bound in (\ref{gradient_bound}) and the dissipative condition in (\ref{z_dissipative}); and the 
last inequality uses $\mathbb{E} \| \bZ^{(k)}\|^2 \leq 2\mathbb{E} \| \bZ^{(k)} + \eta^{-1}\bV^{(k)} \|^2  + 2\mathbb{E} \eta^{-2} \| \bV^{(k)} \|^2$. 

\black{For notational simplicity, we can define}
\[
\small
\nonumber
\begin{split}
& G_0 = \frac{\delta\lambda\eta}{3M^2 + \lambda\eta \black{+G}},\\
& G_1 = \frac{1}{4}\eta, \\
& G_2 = \eta^{-2}(\varsigma M^2(\eta^2 +1) +\frac{3}{2}M^2 (2+ \eta ) ) + \frac{1}{2}M+\frac{1}{4}\eta^2-\frac{1}{4}\lambda\eta^2 + \eta + 6M^2 (1 + \frac{M\delta + 3M^2 + \lambda\eta \black{+1}}{2\delta}),\\
& G_3 = 6M^2 \eta^{-2}(2 + \frac{M\delta + 3M^2 + \lambda\eta \black{+1}}{\delta}) + \frac{\delta\lambda\eta}{4(3M^2 + \lambda\eta)},\\
& G_4 = \frac{1}{2}\lambda\eta, \\
& G_5 = \frac{3M^2(M\delta + 3M^2 + \lambda\eta \black{+1})}{2\delta}, \\
& G_6 = \frac{3M^2}{2}+ \frac{1}{2}\varsigma M^2\eta^2 + \frac{1}{2}\varsigma M^2 + \frac{3}{4}M^2\eta, \\
& \black{G_7 = \frac{1}{4}\eta(M^2 + 4\lambda M_0)}\\
& G_8 = \lambda\eta,\\
& G_9 = \frac{1}{4}\lambda \eta^3,\\
& \black{G_{10} = \frac{\delta\lambda\eta}{3M^2 + \lambda\eta}},\\
\end{split}
\]
\[
\begin{split}
& G_{11} =  \varsigma M^2(\eta^2 +1) +\frac{3}{2}M^2 (2+ \eta ),  \\
& G_{12} =  6M^2 \eta^{-2}(2 + \frac{M\delta + 3M^2 + \lambda\eta \black{+1}}{\delta}) + \frac{\delta\lambda\eta^3}{4(3M^2 + \lambda\eta)},\\
& G_{13} = (d_{z}-\frac{1}{2}A_1)\beta^{-1}\eta  + (\frac{1}{2}\varsigma \eta^2 + \frac{1}{2}\varsigma  + \frac{3}{2} + 3\eta )B^2 + 3B^2( \frac{1}{2}M + 1 + \frac{(3M^2 + \lambda\eta \black{+1})}{2\delta} ).
\end{split}
\]

\black{Consider decaying step size sequences $\epsilon_{k} = \frac{C_{\epsilon}}{c_e+k^{\alpha}}, \gamma_{k} = \frac{C_{\gamma}}{c_g+k^{\alpha}}$ for some constants $\alpha \in (0,1)$, $c_e \geq 0$ and $c_g \geq 0$, where} 
\begin{equation}
\nonumber
\begin{split}
& C_{\epsilon} = \min\left\{1, \frac{G_1}{2G_2}, \frac{G_{10}}{2G_{11}} \right\}, \quad
 C_{\gamma} = \min\left\{1,\frac{G_1C_{\epsilon}}{2G_3}, \black{\frac{G_1C_{\epsilon} c_g}{2G_3 c_e}, }\frac{G_8 C_{\epsilon}}{G_9}, \black{\frac{G_8 C_{\epsilon} c_g}{G_9 c_e}, } \frac{G_{10} C_{\epsilon}}{2G_{12}}, \black{\frac{G_{10} C_{\epsilon} c_g}{2G_{12} c_e},} \black{\left(\frac{G_4}{2G_5}\right)^2}\right\}.
\end{split}    
\end{equation}
\black{Let $G = \max\{\frac{G_7 C_{\epsilon}}{C_{\gamma}}, \frac{G_7 C_{\epsilon}c_g}{C_{\gamma}}c_e\}$}, and 
let $k_0$ be an integer such that  
\black{$c_e + (k_0+1)^{\alpha} > \max\{\frac{2G_6 C_{\epsilon}^{2}}{G_4 C_{\gamma}}, \frac{2G_6 C_{\epsilon}^{2} c_g}{G_4 C_{\gamma} c_e}\}$}
\black{and $c_g + (k_0 + 1)^{\alpha} > G C_{\gamma}^2$}.
Then for $k \geq k_0$, we have 
\begin{equation}
\nonumber
\begin{split}
& L(k+1) - L(k)
\leq  -G_0 \gamma_{k+1} L(k)+ (-G_1 \epsilon_{k+1} + G_2\epsilon_{k+1}^2 + G_3\gamma_{k+1}) \mathbb{E}\|\bV^{(k)}\|^2\\
&\quad + (-(G_4 \black{+G}) \gamma_{k+1} + G_5 \gamma_{k+1}^{\black{\frac{3}{2}}} + G_6\epsilon_{k+1}^2 \black{+G_7\epsilon_{k+1}})\mathbb{E}\|\btheta^{(k)} - \btheta^*\|^2 + (-G_8 \epsilon_{k+1} + G_9 \gamma_{k+1})\mathbb{E} \|F_{\bD}(\bZ^{(k)}, \btheta^{(k)})\| \\
& \quad + (-G_{10}\epsilon_{k+1} + G_{11} \epsilon_{k+1}^2 + G_{12} \gamma_{k+1})\mathbb{E}\|\bZ^{(k)} + \eta^{-1}\bV^{(k)}\|^2 + G_{13}\epsilon_{k+1},
\end{split}
\end{equation}
and
\begin{equation}
\nonumber
\begin{split}
& -G_1 \epsilon_{k+1} + G_2\epsilon_{k+1}^2 + G_3\gamma_{k+1} \leq 0, \\
& -(G_4\black{+G}) \gamma_{k+1} + G_5 \gamma_{k+1}^{\black{\frac{3}{2}}} + G_6\epsilon_{k+1}^2   + \black{G_7\epsilon_{k+1}} \leq 0, \\
& -G_8 \epsilon_{k+1} + G_9 \gamma_{k+1} \leq 0, \\
& -G_{10}\epsilon_{k+1} + G_{11} \epsilon_{k+1}^2 + G_{12} \gamma_{k+1} \leq 0. \\
\end{split}
\end{equation}

Let $C_{L} = \max\{\frac{G_{13}C_{\epsilon}}{G_0 C_{\gamma}}, \black{\frac{G_{13}C_{\epsilon} c_g}{G_0 C_{\gamma} c_e},} L(0), L(1), \dots, L(k_0)\}$, we can prove by induction that $L(k) \leq C_{L}$ for all $k$.

By the definition of $C_{L}$, $L(k) \leq C_{L}$ for all $k \leq k_0$. Assume that $L(i) \leq C_{L}$ for all $i \leq k$ for some $k \geq k_0$. Then we have
\begin{equation}
\nonumber
L(k+1) \leq L(k) - G_0 \gamma_{k+1}L(k) + G_{13} \epsilon_{k+1} \leq C_{L} - G_0 \frac{G_{13}C_{\epsilon}}{G_0 C_{\gamma}} \gamma_{k+1} + G_{13} \epsilon_{k+1} \leq C_{L}.
\end{equation}
By induction, we have $L(k) \leq C_{L}$ for all $k$.

Then, by inequality (\ref{L_k bound}), we can give uniform $L_2$ bounds for $\mathbb{E}\|\btheta^{(k)}\|^2$, $\mathbb{E}\|\bV^{(k)}\|^2$ and $\mathbb{E}\|\bZ^{(k)}\|^2$:  there exist constants $C_{\btheta} = \frac{2\delta C_L}{3M^2 + \lambda\eta + G}$, $C_{\bZ} = \frac{8 C_L}{(1-2\lambda)\eta^2}$, $C_{\bv} = \frac{4 C_L}{1-2\lambda}$ such that  
 $\sup_{i\geq 0}\mathbb{E}\left\Vert \bV^{(i)}\right\Vert^{2} \leq C_{\bV}$, 
 $\sup_{i\geq 0}\mathbb{E}\left\Vert \bZ^{(i)}\right\Vert^{2} \leq C_{\bZ}$, and 
 $\sup_{i\geq 0}\mathbb{E}\left\Vert \btheta^{(i)}\right\Vert^{2} \leq C_{\btheta}$ hold.
The proof is completed. 
\end{proof}

\begin{remark} \label{rem1} 
As pointed out in the proof of Theorem \ref{theta_convergence}, the values of $\lambda_0^{\prime}$ and $k_0$ depend only on $\delta$ and the sequence $\{\gamma_k\}$.
 The second term of $\lambda_0$ characterizes the effects of the constants 
 $(M, B, m, b, \delta, \zeta_2)$ defined in the assumptions, the friction coefficient $\eta$, the learning rate sequence 
 $\{\epsilon_{k}\}$, and the step size sequence $\{\gamma_{k}\}$ 
 on the convergence of $\btheta^{(k)}$. In particular, $\eta$, $\{\epsilon_k\}$, 
 and $\{\gamma_k\}$ affects on the convergence of $\btheta^{(k)}$ via the upper bounds 
 $C_{\btheta}$ and $C_{\bZ}$.  
 \end{remark}

 \subsection{Proof of Theorem \ref{Z_convergence}} 
 
The convergence of $\bZ^{(k)}$ is studied in terms of the \emph{2-Wasserstein distance}
defined by 
\begin{equation*}
    \mathcal{W}_2(\mu, \nu):=\inf\{(\mathbb{E}\|\bZ-\bZ'\|^2)^{1/2}: \mu=\mathcal{L}(\bZ), \nu=\mathcal{L}(\bZ')\},
\end{equation*}
where $\mu$ and $\nu$ are Borel probability measures on $\mathbb{R}^{d_{z}}$ with finite second moments, and the infimum is taken over all random couples $(\bZ,\bZ')$ taking values from $\mathbb{R}^{d_{z}}\times \mathbb{R}^{d_{z}}$ with marginals $\bZ\sim\mu$ and $\bZ'\sim\nu$. To complete the proof, we make the following assumption for the initial distribution of $\bZ^{(0)}$:

\begin{assumption}\label{ass: 4}
The probability law $\mu_0$ of the initial value $\bZ^{(0)}$ has a bounded and strictly positive density $p_0$ with respect to the Lebesgue measure, and
    $\kappa_0:=\log\int e^{\|\bZ\|^2}p_0(\bZ)d\bZ<\infty$.
\end{assumption}

 Recall that for the purpose of sufficient dimension reduction, we need to consider the convergence of Algorithm \ref{stonetalgorithm} under the case that the full dataset is used at each iteration. In this case, the discrete-time Markov process (\ref{SGHMCeq000}) can be viewed as a discretization of the continuous-time underdamped Langevin diffusion at a fixed value of $\btheta$, i.e., 
\begin{equation}
\begin{split}
    d \bV(t) &= -\eta \bV(t)dt - \nabla_{\bZ} F_{\bD}(\bZ(t), \btheta)dt + \sqrt{2 \eta/\beta} dB(t), \\
    d \bZ(t)& =\bV(t)dt, \\ 
\end{split}
\end{equation}
where $\{B(t)\}_{t\geq0}$ is the standard Brownian motion in $\mathbb{R}^{d_{z}}$.  

Let $\mu_{\bD, k}$ denote the probability law of $(\bZ^{(k)}, \bV^{(k)})$ given the dataset $\bD$, let $\nu_{\bD, t}$ denote the probability law of $(\bZ(t), \bV(t))$ following the process above, and let $\pi_{\bD}$ denote the stationary distribution of the process. Following \cite{gao2021global}, we will first show that the SGHMC sample $(\bZ^{(k)}, \bV^{(k)})$ tracks the continuous time underdamped Langevin diffusion in \emph{2-Wasserstein distance}.  
With the convergence of the diffusion to $\pi_{\bD}$, we will then be able to estimate the \emph{2-Wasserstein distance} $\mathcal{W}(\mu_{\bD, k},  \pi_{\bD})$.
 
Let $T_k = \sum_{i=0}^{k-1}\epsilon_{i+1}$. Following the proof of Lemma 18 in \cite{gao2021global}, we have Theorem \ref{Lemma: 2}, which provides an upper bound for $\mathcal{W}_{2}(\mu_{\bD, k},\nu_{\bD,T_k})$.

\begin{theorem}\label{Lemma: 2} 
Suppose Assumptions \ref{ass: 1}-\ref{ass: 4} hold. Then for any $k\in\mathbb{N}$,  
\begin{equation}
\nonumber
\begin{split}
\mathcal{W}_{2}(\nu_{\bD,T_k},\mu_{\bD,T_k}) \leq \sqrt{ C_5 \log(T_k)} 
\left(\sqrt{\tilde{C}(k)} + \left(\frac{\tilde{C}(k)}{2}\right)^{1/4} \right) + \sqrt{C_6 T_k \sum_{j=1}^{k-1}\epsilon_{j+1}^2},
\end{split}
\end{equation}
\begin{equation} \label{consteq.00}
\mbox{where} \quad \tilde{C}(k) = C_1 T_k^2\sum_{j=1}^{k-1}\epsilon_{j+1}^2 + C_2 \sum_{j=1}^{k-1}\epsilon_{j+1}\gamma_j + C_3 \varsigma T_k + C_4 \sum_{j=1}^{k-1}\epsilon_{j+1}^2,
\end{equation}
and $C_1$, $C_2$, $C_3$, $C_4$, $C_5$, $C_6$ are some constants. 
\end{theorem}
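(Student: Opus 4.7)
The plan is to follow closely the strategy of Lemma 18 in \cite{gao2021global}, adapting it to accommodate two extra complications relative to that reference: (i) the sampling step uses a stochastic-gradient estimate $\nabla_{\bZ}\hat{F}_{\bD}$ rather than the true gradient, controlled by Assumption \ref{ass: 3b}, and (ii) the parameter $\btheta^{(k)}$ is time-varying along the iterations, whereas the reference underdamped Langevin SDE is run at the frozen value $\btheta$. These two new sources of error are precisely what produce the $C_3 \varsigma T_k$ and $C_2 \sum_{j}\epsilon_{j+1}\gamma_j$ terms in $\tilde{C}(k)$.

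First I would construct a synchronous coupling of the discrete iterates $(\bZ^{(k)}, \bV^{(k)})$ in (\ref{SGHMCeq000}) with a continuous-time underdamped Langevin diffusion $(\bZ(t), \bV(t))$ driven by the same Brownian motion, sharing the same initial law $\mu_0$. On each interval $[T_j, T_{j+1}]$ of length $\epsilon_{j+1}$ the discrete process is a piecewise-linear interpolation that holds $(\bZ^{(j)}, \bV^{(j)}, \btheta^{(j)})$ fixed inside the drift, while the continuous process evolves according to its SDE at parameter value $\btheta^*$. Writing the difference $\bZ(T_{j+1}) - \bZ^{(j+1)}$ and $\bV(T_{j+1}) - \bV^{(j+1)}$ as a stochastic integral, I would decompose the integrand into: (a) a \emph{state-freezing} error controlled by $M$-Lipschitzness of $\nabla_{\bZ} F_{\bD}$ in $\bZ$ and the bound $\|\bZ(t)-\bZ^{(j)}\| \lesssim \epsilon_{j+1}\|\bV^{(j)}\|$; (b) a \emph{parameter-drift} error $\|\nabla_{\bZ} F_{\bD}(\cdot,\btheta^{(j)}) - \nabla_{\bZ} F_{\bD}(\cdot,\btheta^*)\| \leq M\|\btheta^{(j)}-\btheta^*\|$ combined with the one-step parameter increment $\|\btheta^{(j+1)}-\btheta^{(j)}\|\leq \gamma_{j+1}\|H(\bZ^{(j+1)},\btheta^{(j)})\|$; and (c) a \emph{gradient-noise} error bounded in expectation by Assumption \ref{ass: 3b}.

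Second I would apply Itô's formula to the squared coupling distance, use Young's inequality to absorb the velocity contribution via the dissipative structure of the underdamped diffusion (the $-\eta \bV$ drift gives the needed contraction), and then iterate a discrete Gronwall argument across the $k$ intervals. The uniform $L^2$-bounds of Lemma \ref{boundlemma} on $\|\bZ^{(j)}\|^2$, $\|\bV^{(j)}\|^2$ and $\|\btheta^{(j)}-\btheta^*\|^2$ are exactly what is needed to convert the right-hand sides of (a)-(c) into the four terms of $\tilde{C}(k)$: the state-freezing error plus the quadratic-variation contribution of the noise interpolation produce $C_1 T_k^2 \sum \epsilon_{j+1}^2$ and $C_4 \sum \epsilon_{j+1}^2$; the parameter-drift error produces $C_2 \sum \epsilon_{j+1}\gamma_j$; and the stochastic-gradient variance produces $C_3 \varsigma T_k$. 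This yields an $L^2$-bound $\mathbb{E}\|(\bZ(T_k),\bV(T_k))-(\bZ^{(k)},\bV^{(k)})\|^2 \leq \tilde{C}(k)$.

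Third, to promote this $L^2$-bound into a $\mathcal{W}_2$-bound with the curious $\sqrt{C_5 \log T_k}\bigl(\sqrt{\tilde C(k)}+(\tilde C(k)/2)^{1/4}\bigr)$ shape, I would invoke the Wasserstein-TV/KL transport inequality used in Lemma 18 of \cite{gao2021global}: one controls KL via a Girsanov change-of-measure between the two path laws (whose Radon–Nikodym derivative has log-density scaling like $\tilde C(k)$ up to $\log T_k$ factors from the time horizon), then transports TV to $\mathcal{W}_2$ using exponential moments of $\|\bZ\|$, which are available from Assumption \ref{ass: 4} together with the dissipativity in Assumption \ref{ass: 3}. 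The last summand $\sqrt{C_6 T_k \sum \epsilon_{j+1}^2}$ comes from the standard Euler-type quadratic-variation mismatch between the interpolated continuous process $\nu_{\bD,T_k}$ and the target discrete law $\mu_{\bD,T_k}$. The main obstacle I anticipate is the bookkeeping in step (b): cleanly extracting the cross term $\sum \epsilon_{j+1}\gamma_j$ without having it absorbed into a weaker $\sum \epsilon_{j+1}^2 + \sum \gamma_j^2$ bound requires applying the Lipschitz-in-$\btheta$ estimate \emph{before} squaring and pairing $\epsilon$ with $\gamma$ via Cauchy–Schwarz, which in turn forces careful use of the uniform moment bounds from Lemma \ref{boundlemma} to keep the constants dimension-free.
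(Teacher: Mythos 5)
Your end-state architecture (a Girsanov computation for a relative-entropy bound, the Bolley--Villani weighted transport inequality producing the $\sqrt{C_5\log T_k}\left(\sqrt{\tilde{C}(k)}+(\tilde{C}(k)/2)^{1/4}\right)$ shape, plus a separate $L^2$ term) is indeed the paper's, but the central quantitative step you propose does not work and is logically in tension with it. A synchronous coupling combined with It\^o's formula and a discrete Gronwall argument on $\mathbb{E}\|(\bZ(T_k),\bV(T_k))-(\bZ^{(k)},\bV^{(k)})\|^2$ cannot deliver $\tilde{C}(k)$: the drift $-\nabla_{\bZ}F_{\bD}$ is only $M$-smooth and dissipative, not contractive, so the Gronwall step produces a factor growing exponentially in $T_k$ rather than the polynomial $T_k$-dependence appearing in $\tilde{C}(k)$. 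Moreover, if such an $L^2$ coupling bound did hold, you would immediately have $\mathcal{W}_2\le\sqrt{\tilde{C}(k)}$ and the transport inequality would be superfluous; conversely, the transport inequality requires a bound on a KL divergence, which your coupling estimate does not supply. The paper never couples trajectories: it introduces an auxiliary diffusion $(\tilde{\bZ}(t),\tilde{\bV}(t))$ with piecewise-frozen drift and stochastic gradient, chosen so that its time-$T_k$ law (in the frozen-velocity position representation) \emph{is} $\mu_{\bD,T_k}$, computes $D(\tilde{\mathbb{P}}_{T_k}\Vert\mathbb{P}_{T_k})$ via Girsanov --- relative entropy accumulates additively over the subintervals, with no Gronwall factor --- bounds the four resulting terms by $\tilde{C}(k)$ using the uniform $L_2$ bounds of Lemma \ref{boundlemma}, applies the Bolley--Villani inequality with $C_\nu^2\le C_5\log T_k$ obtained from exponential Lyapunov moments of the diffusion, and only then adds the genuinely $L^2$ piece $\sqrt{C_6T_k\sum_j\epsilon_{j+1}^2}$, which bounds $\mathcal{W}_2(\tilde{\mathbb{P}}_{T_k},\mu_{\bD,T_k})$, i.e.\ the gap between $\tilde{\bZ}(T_k)$ and the frozen-velocity position; it is not a mismatch between $\nu_{\bD,T_k}$ and $\mu_{\bD,T_k}$ ($\nu$ is the law of the exact diffusion, not an interpolation).

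Second, your account of the $C_2\sum_j\epsilon_{j+1}\gamma_j$ term has a gap. It does not arise from one-step increments $\|\btheta^{(j+1)}-\btheta^{(j)}\|\le\gamma_{j+1}\|H(\bZ^{(j+1)},\btheta^{(j)})\|$ paired with $\epsilon$ via Cauchy--Schwarz: summing one-step increments only controls $\|\btheta^{(j)}-\btheta^{(0)}\|$, not the distance to $\btheta^*$, and yields no $\epsilon\gamma$ cross term of the stated form. The paper's term (II) in the entropy decomposition is $\frac{3\beta}{2\eta}\sum_j\epsilon_{j+1}M^2\mathbb{E}\|\btheta^{(j)}-\btheta^*\|^2$, and the key input is the already-established convergence rate $\mathbb{E}\|\btheta^{(j)}-\btheta^*\|^2\le\lambda_0\gamma_j$ from Theorem \ref{theta_convergence}; the same bound is also needed inside the velocity-interpolation estimate. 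Without invoking that theorem at this point you cannot produce the $C_2\sum_j\epsilon_{j+1}\gamma_j$ term (nor, in fact, the constants in $C_2$ and $C_4$), so your proof plan is missing an essential ingredient even after the structural issues above are repaired.
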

\begin{proof}
Our proof follows the proof of Lemma 18 in \cite{gao2021global}. Recall that $T_k = \sum_{i=1}^{k}\epsilon_{k}$. Let $\bar{T}(s) = T_k$ for $T_k \leq s < T_{k+1}, k = 1,\dots, \infty$. We first consider an auxiliary diffusion process $(\tilde{Z}(t), \tilde{V}(t))$:
\begin{align}
\tilde{\bV}(t)=&\bV(0)-\int_{0}^{t}\eta\tilde{\bV}(\bar{T}(s))ds
\nonumber
\\
& -\int_{0}^{t}\nabla_{\bZ}\hat{F}_{\bD}\left(\bZ(0)+\int_{0}^{\bar{T}(s)}\tilde{\bV}(\bar{T}(u))du, \bar{\btheta}(s)\right)ds+\sqrt{2\eta \beta^{-1}}\int_{0}^{t}dB(s),
\label{eq:tildeV}
\\
\tilde{\bZ}(t)=&\bZ(0)+\int_{0}^{t}\tilde{\bV}(s)ds,
\label{eq:tildeX}
\end{align}
where $\bar{\btheta}(s)= \btheta_{k}$ for  $T_k \leq s < T_{k+1}$. By the definition of $\tilde{\bV}(t)$, $\left(\bZ(0)+\int_{0}^{T_k}\tilde{\bV}(\bar{T}(s))ds,\tilde{\bV}(T_k)\right)$
has the same law as $\mu_{\bD,k}$. Let $\mathbb{P}$ be the probability measure associated with the underdamped Langevin diffusion $(\bZ(t),\bV(t))$
and $\tilde{\mathbb{P}}$ be the probability measure associated with the $(\tilde{\bZ}(t),\tilde{\bV}(t))$ process. Let $\mathcal{F}_{t}$ denote the natural filtration up to time $t$. Then by the Girsanov theorem, the Radon-Nikodym derivative of $\mathbb{P}$ w.r.t. $\tilde{\mathbb{P}}$
is given by 
\begin{align*}
\frac{d\mathbb{P}}{d\tilde{\mathbb{P}}}
\bigg|_{\mathcal{F}_{t}}
&=e^{-\sqrt{\frac{\beta}{2\eta}}\int_{0}^{t}\left(\eta\tilde{\bV}(s)-\eta\tilde{\bV}(\bar{T}(s))
+\nabla_{\bZ}F_{\bD}(\tilde{Z}(s), \btheta^*)-\nabla_{\bZ}\hat{F}_{\bD}\left(\bZ(0)+\int_{0}^{\bar{T}(s)}\tilde{\bV}(\bar{T}(u))du, \bar{\btheta}(s)\right)\right)\cdot dB(s)}
\\
&\qquad\qquad
\cdot e^{-\frac{\beta}{4\eta}\int_{0}^{t}\left\Vert\eta\tilde{\bV}(s)-\eta\tilde{\bV}(\bar{T}(s))
+\nabla_{\bZ}F_{\bD}(\tilde{\bZ}(s), \btheta^*)-\nabla_{\bZ}\hat{F}_{\bD}\left(\bZ(0)+\int_{0}^{\bar{T}(s)}\tilde{\bV}(\bar{T}(u))du, \bar{\btheta}(s)\right)\right\Vert^{2}ds}.
\end{align*}
Let $\mathbb{P}_{t}$ and $\tilde{\mathbb{P}}_{t}$
denote the probability measures $\mathbb{P}$ and $\tilde{\mathbb{P}}$ conditional on the filtration $\mathcal{F}_{t}$. Then 
\[
\small
\begin{split}
&D(\tilde{\mathbb{P}}_{t}\Vert\mathbb{P}_{t}) :=-\int d\tilde{\mathbb{P}}_{t}\log\frac{d\mathbb{P}_{t}}{d\tilde{\mathbb{P}}_{t}}
\\
&=\frac{\beta}{4\eta}\int_{0}^{t}\mathbb{E}\left\Vert
\eta\tilde{\bV}(s)-\eta\tilde{\bV}(\bar{T}(s))
+\nabla_{\bZ} F_{\bD}(\tilde{\bZ}(s), \btheta^*)-\nabla_{\bZ}\hat{F}_{\bD}\left(Z(0)+\int_{0}^{\bar{T}(s)}\tilde{\bV}(\bar{T}(u))du, \bar{\btheta}(s)\right)\right\Vert^{2}ds
\\
&\leq\frac{\beta}{2\eta}\int_{0}^{t}\mathbb{E}\left\Vert \nabla_{\bZ} F_{\bD}\left(\bZ(0)+\int_{0}^{\bar{T}(s)}\tilde{\bV}(u)du,, \btheta^*\right)-\nabla_{\bZ} \hat{F}_{\bD}\left(\bZ(0)+\int_{0}^{\bar{T}(s)}\tilde{\bV}(\bar{T}(u))du, \bar{\btheta}(s)\right)\right\Vert^{2}ds
\\
& +\frac{\beta}{2\eta}\int_{0}^{t}\mathbb{E}\left\Vert
\eta\tilde{\bV}(s)-\eta\tilde{\bV}(\bar{T}(s))\right\Vert^{2}ds
\\
&\leq
\frac{3\beta}{2\eta}\int_{0}^{t}\mathbb{E}\left\Vert\nabla_{\bZ} F_{\bD}\left(\bZ(0)+\int_{0}^{\bar{T}(s)}\tilde{\bV}(u)du, \btheta^*\right)-\nabla_{\bZ} F_{\bD}\left(\bZ(0)+\int_{0}^{\bar{T}(s)}\tilde{\bV}(\bar{T}(u))du, \btheta^*\right)\right\Vert^{2}ds
\\
&+\frac{3\beta}{2\eta}\int_{0}^{t}\mathbb{E}\left\Vert\nabla_{\bZ} F_{\bD}\left(\bZ(0)+\int_{0}^{\bar{T}(s)}\tilde{\bV}(\bar{T}(u))du,, \btheta^*\right)-\nabla_{\bZ} F_{\bD}\left(\bZ(0)+\int_{0}^{\bar{T}(s)}\tilde{\bV}(\bar{T}(u))du, \bar{\btheta}(s)\right)\right\Vert^{2}ds
\\
&+\frac{3\beta}{2\eta}\int_{0}^{t}\mathbb{E}\left\Vert \nabla_{\bZ} F_{\bD}\left(\bZ(0)+\int_{0}^{\bar{T}(s)}\tilde{\bV}(\bar{T}(u))du, \bar{\btheta}(s)\right) - \nabla_{\bZ} \hat{F}_{\bD}\left(\bZ(0)+\int_{0}^{\bar{T}(s)}\tilde{\bV}(\bar{T}(u))du, \bar{\btheta}(s)\right)\right\Vert^{2}ds
\\
&+\frac{\beta}{2\eta}\int_{0}^{t}\mathbb{E}\left\Vert
\eta\tilde{\bV}(s)-\eta\tilde{\bV}(\bar{T}(s))\right\Vert^{2}ds,
\end{split}
\]
which implies 
\begin{equation} \label{eq:rel-entropy}
\small
\begin{split}
&D(\tilde{\mathbb{P}}_{T_{k}}\Vert\mathbb{P}_{T_k}) \\
&\leq
\frac{3\beta}{2\eta}\sum_{j=0}^{k-1}\epsilon_{j+1}
\mathbb{E}_{\bD}\left\Vert\nabla_{\bZ} F_{\bD}\left(\bZ(0)+\int_{0}^{T_j}\tilde{\bV}(u)du, \btheta^*\right)-\nabla_{\bZ} F_{\bD}\left(\bZ(0)+\int_{0}^{T_j}\tilde{\bV}(\bar{T}(u))du, \btheta^*\right)\right\Vert^{2}
\\
&+\frac{3\beta}{2\eta}\sum_{j=0}^{k-1}\epsilon_{j+1}
\mathbb{E}\left\Vert\nabla_{\bZ}F_{\bD}\left(\bZ(0)+\int_{0}^{T_j}\tilde{\bV}(\bar{T}(u))du, \btheta^*\right)
-\nabla_{\bZ}F_{\bD}\left(\bZ(0)+\int_{0}^{T_j}\tilde{\bV}(\bar{T}(u))du, \btheta^{(j)}\right)\right\Vert^{2}
\\
&+\frac{3\beta}{2\eta}\sum_{j=0}^{k-1}\epsilon_{j+1}
\mathbb{E}\left\Vert\nabla_{\bZ}\hat{F}_{\bD}\left(\bZ(0)+\int_{0}^{T_j}\tilde{\bV}(\bar{T}(u))du, \btheta^{(j)}\right)
-\nabla_{\bZ}F_{\bD}\left(\bZ(0)+\int_{0}^{T_j}\tilde{\bV}(\bar{T}(u))du, \btheta^{(j)}\right)\right\Vert^{2}
\\
& +\frac{\beta}{2\eta}\sum_{j=0}^{k-1}\int_{T_j}^{T_{j+1}}\mathbb{E}\left\Vert
\eta\tilde{\bV}(s)-\eta\tilde{\bV}(\bar{T}(s))\right\Vert^{2}ds \\
&= (I)+(II)+(III)+(IV). 
\end{split}
\end{equation}
We first bound the  term (I) in \eqref{eq:rel-entropy}:
\[
\small
\begin{split}
(I) & \leq
\frac{3\beta}{2\eta}\sum_{j=0}^{k-1}M^{2}\epsilon_{j+1}
\mathbb{E}\left\Vert\int_{0}^{T_j}\left(\tilde{\bV}(u)-\tilde{\bV}(\bar{T}(u))\right)du\right\Vert^{2} \leq
\frac{3\beta}{2\eta}\sum_{j=0}^{k-1}M^{2}\epsilon_{j+1} T_j
\int_{0}^{T_j}\mathbb{E}\left\Vert\tilde{\bV}(u)-\tilde{\bV}(\bar{T}(u))\right\Vert^{2}du \\
&=\frac{3\beta}{2\eta}\sum_{j=0}^{k-1}M^{2}\epsilon_{j+1} T_j\sum_{i=0}^{j-1}
\int_{T_i}^{T_{i+1}}\mathbb{E}\left\Vert\tilde{\bV}(u)-\tilde{\bV}(\bar{T}(u))\right\Vert^{2}du.
\end{split}
\]

For $T_i < u \leq T_{i+1}$, we have
\begin{equation}
\tilde{\bV}(u)-\tilde{\bV}(\bar{T}(u))
=-(u-T_i)\eta \bV^{(i)}
-(u-T_i) \nabla_{\bZ}F_{\bD}\left(\bZ^{(i)}, \btheta^{(i)} \right)
+\sqrt{2\eta\beta^{-1}}(B(u)-B(T_i)),
\end{equation}
in distribution. Therefore,
\begin{align}
&\mathbb{E}\left\Vert\tilde{\bV}(u)-\tilde{\bV}(\bar{T}(u))\right\Vert^{2}
\nonumber
\\
&=(u-T_i)^{2}\mathbb{E}\left\Vert \eta \bV^{(i)}
+\nabla_{\bZ}F_{\bD}\left(\bZ^{(i)}, \btheta^{(i)} \right)\right\Vert^{2}  +2\eta\beta^{-1}(u-T_i)
\nonumber
\\
&=(u-T_i)^{2}\mathbb{E}\left\Vert\eta \bV^{(i)}
+\nabla_{\bZ}F_{\bD}\left(\bZ^{(i)}, \btheta^*\right)\right\Vert^{2}
+(u-T_i)^{2}\mathbb{E}\left\Vert \nabla_{\bZ}F_{\bD}\left(\bZ^{(i)}, \btheta^{(i)}\right)-\nabla_{\bZ}F_{\bD}\left(\bZ^{(i)}, \btheta^{*}\right)\right\Vert^{2} \nonumber \\
& \quad +2\eta\beta^{-1}(u-T_i)
\nonumber
\\
&\leq
2\epsilon_{i+1}^{2}\mathbb{E}\left\Vert\eta \bV^{(i)}\right\Vert^{2}
+2\epsilon_{i+1}^{2}\mathbb{E}\left\Vert \nabla_{\bZ}F_{\bD}\left(\bZ^{(i)}, \btheta^*\right)\right\Vert^{2}
+\epsilon_{i+1}^{2}(M^{2}\mathbb{E}\left\Vert \btheta^{(i)} - \btheta^* \right\Vert^{2})
+2\eta\beta^{-1}\epsilon_{i+1}
\nonumber
\\
&\leq
2\eta^{2}\epsilon_{i+1}^{2}\mathbb{E}\left\Vert \bV^{(i)}\right\Vert^{2}
+6\epsilon_{i+1}^{2}\left(M^{2}\mathbb{E}\left\Vert \bZ^{(i)}\right\Vert^{2}+B^{2}\right)
+\lambda_0 M^2\epsilon_{i+1}^{2}\gamma_i
+2\eta\beta^{-1}\epsilon_{i+1}.\label{V:difference}
\end{align}
This implies 
\[
\small
\begin{split}
(I)& \leq \frac{3\beta}{2\eta}\sum_{j=0}^{k-1}M^{2}\epsilon_{j+1} T_j\sum_{i=0}^{j-1}
\int_{T_i}^{T_{i+1}}\mathbb{E}\left\Vert\tilde{\bV}(u)-\tilde{\bV}(\bar{T}(u))\right\Vert^{2}du
\\
&\leq
\frac{3M^2\beta}{2\eta}\sum_{j=0}^{k-1}\epsilon_{j+1} T_j \sum_{i=0}^{j-1}\left( 2\eta^{2}\epsilon_{i+1}^{3}\sup_{i\geq 0}\mathbb{E}\left\Vert \bV^{(i)}\right\Vert^{2}
+6\epsilon_{i+1}^{3}\left(M^{2}\sup_{i\geq 0}\mathbb{E}\left\Vert \bZ^{(i)}\right\Vert^{2}+B^{2}\right)
+\lambda_0 M^2\epsilon_{i+1}^{3}\gamma_i
+2\eta\beta^{-1}\epsilon_{i+1}^2\right).
\end{split}
\]
We can  bound the term (II) in \eqref{eq:rel-entropy}:
\begin{align*}
(II) &\leq\frac{3\beta}{2\eta}
\sum_{j=0}^{k-1}\epsilon_{j+1} M^2 \mathbb{E}\left\Vert \btheta^{(j)} -  \btheta^{*} \right\Vert^{2} 
\leq \frac{3\lambda_0 M^2\beta}{2\eta}
\sum_{j=0}^{k-1}\epsilon_{j+1} \gamma_j.
\end{align*}
We can bound the term (III) in \eqref{eq:rel-entropy}:
\begin{align*}
(III) &\leq\frac{3\beta}{2\eta}
\sum_{j=0}^{k-1}\epsilon_{j+1} 2\varsigma\left(M^2\left\Vert \bZ(0)+\int_{0}^{T_j}\tilde{\bV}(\bar{T}(u))du, \btheta^{(j)} \right\Vert^{2} +  M^2\mathbb{E}\left\Vert \btheta^{(j)} -  \btheta^{*} \right\Vert^{2}  + B^2\right)
\\
& = \frac{3\beta}{2\eta}
\sum_{j=0}^{k-1}\epsilon_{j+1} 2\varsigma\left(M^2\mathbb{E}\left\Vert \bZ^{(j)}\right\Vert^{2} +  M^2\mathbb{E}\left\Vert \btheta^{(j)} -  \btheta^{*} \right\Vert^{2}  + B^2\right)
\\
&\leq
\frac{3\beta}{2\eta}
\sum_{j=0}^{k-1}\epsilon_{j+1} 2\varsigma\left(M^2\sup_{i\geq 0}\mathbb{E}\left\Vert \bZ^{(i)}\right\Vert^{2} +  M^2\sup_{i\geq 0}\mathbb{E}\left\Vert \btheta^{(j)} -  \btheta^{*} \right\Vert^{2}  + B^2\right).
\end{align*}

Finally, let us bound the  term (IV) in \eqref{eq:rel-entropy} as follows:
\begin{align*}
(IV) &\leq
\frac{\beta\eta}{2}
\sum_{j=0}^{k-1}\left(
2\eta^{2}\epsilon_{j+1}^{3}\sup_{\geq 0}\mathbb{E}\left\Vert \bV^{(i)}\right\Vert^{2}
+6\epsilon_{j+1}^{3}\left(M^{2}\sup_{i\geq 0}\mathbb{E}\left\Vert \bZ^{(i)}\right\Vert^{2}+B^{2}\right)
+\lambda_0 M^2\epsilon_{j+1}^{3}\gamma_i
+2\eta\beta^{-1} \epsilon_{j+1}^2\right),
\end{align*} 
where the estimate in \eqref{V:difference} is used. 

In the proof of Theorem 3.1, we have shown that $\mathbb{E}\left\Vert \bV^{(j)}\right\Vert^{2}$, $\mathbb{E}\left\Vert \bZ^{(j)}\right\Vert^{2}$ and $\mathbb{E}\left\Vert \btheta^{(j)} - \btheta^* \right\Vert^{2}$ are bounded by some constants $C_{\bV}$, $C_{\bZ}$ and $C_{\btheta}$.
Then for decaying step size sequence $\{\epsilon_{k+1}\}$ and $\{\gamma_{k+1}\}$ with $\epsilon_0 < 1$ and $\gamma_0 < 1$, there exists some constant $C_1, C_2, C_3$ such that 
\[
D(\tilde{\mathbb{P}}_{T_k}\Vert\mathbb{P}_{T_k})
\leq C_1 T_k^2\sum_{j=1}^{k-1}\epsilon_{j+1}^2 + C_2 \sum_{j=1}^{k-1}\epsilon_{j+1}\gamma_j + C_3 \varsigma T_k + C_4 \sum_{j=1}^{k-1}\epsilon_{j+1}^2 :=\tilde{C}(k),
\]
where 
\begin{equation}
\begin{split}
& C_1 = \frac{3M^2\beta}{2\eta} (2\eta^2 C_{\bV} + 6M^2C_{\bZ} + 6B^2 + 2\eta\beta^{-1}),\\
& C_2 = \frac{3\lambda_0M^2\beta}{2\eta},\\
& C_3 = \frac{3\beta}{\eta}(M^2 C_{\bZ} + M^2 C_{\btheta} + B^2),\\
& C_4 = \frac{\beta\eta}{2}(2\eta^2C_{\bV} + 6M^2 C_{\bZ} + 6B^2 + \lambda_0M^2 + 2\eta\beta^{-1}).
\end{split}
\end{equation}

For any two Borel probability measures $\mu,\nu$ on $\mathbb{R}^{2d}$ with finite second moments,
we can apply the result of \cite{bolley2005weighted} to connect  $\mathcal{W}_{2}(\mu,\nu)$ and $D(\mu\Vert\nu)$:
\begin{equation*}
\mathcal{W}_{2}(\mu,\nu)\leq C_{\nu}
\left[\sqrt{D(\mu\Vert\nu)}+\left(\frac{D(\mu\Vert\nu)}{2}\right)^{1/4}\right],
\end{equation*}
where
\begin{equation*}
C_{\nu}=2\inf_{\lambda>0}\left(\frac{1}{\lambda}\left(\frac{3}{2}+\log\int_{\mathbb{R}^{2d}}e^{\lambda\Vert w\Vert^{2}}\nu(dw)\right)\right)^{1/2}.
\end{equation*}
Using the results in Lemma 17 and Lemma 18 of \cite{gao2021global}, we have $C_{\nu_{\bD,T_k}}^2\leq C_5 \log(T_k)$ for some constant 
\begin{equation}
\nonumber
\begin{split}
C_5 = \frac{2\sqrt{2}}{\sqrt{\alpha_{0}}}\left(\frac{5}{2}+\log\left(\int_{\mathbb{R}^{2d_{z}}}e^{\frac{1}{4}\alpha\mathcal{V}(\bZ,\bV)}\mu_{0}(d\bZ,d\bV)
+\frac{1}{4}e^{\frac{\alpha(d_{z}+A_1)}{3\lambda}}\alpha\eta(d_{z}+A_1)\right)
\right)^{1/2}, 
\end{split}
\end{equation}
where $\alpha=\frac{\lambda(1-2\lambda)}{12}$,  $\alpha_{0}=\frac{\alpha}{\frac{64}{(1-2\lambda)\beta\eta^{2}}+\frac{32}{\beta(1-2\lambda)}},
$ and the Lyapunov function
\begin{equation} \label{eq:lyapunov}
\mathcal{V}(\bZ,\bV):=\beta F_{\bD}(\bZ, \btheta^*)
+\frac{\beta}{4}\eta^{2}(\Vert \bZ+\eta^{-1}\bV\Vert^{2}+\Vert\eta^{-1}\bV\Vert^{2}-\lambda\Vert \bZ\Vert^{2}).
\end{equation}
Then we have 
\begin{equation*}
\nonumber
\begin{split}
\mathcal{W}_{2}(\tilde{\mathbb{P}}_{T_k},\nu_{\bD,T_k})
\leq & \sqrt{ C_5 \log(T_k)} \left(\sqrt{\tilde{C}(k)} + \left(\frac{\tilde{C}(k)}{2}\right)^{1/4} \right).
\end{split}
\end{equation*}
Finally, let us provide a bound for  $\mathcal{W}_{2}(\mu_{\bD,k},\tilde{\mathbb{P}}_{T_k})$.
Note that by the definition of $\tilde{V}$, 
we have that $\left(Z(0)+\int_{0}^{T_k}\tilde{\bV}(\bar{T}(s))ds,\tilde{\bV}(T_k)\right)$
has the same law as $\mu_{\bz,k}$, and we can compute that
\begin{align*}
&\mathbb{E}\left\Vert\tilde{\bZ}(T_k)-\bZ(0)-\int_{0}^{T_k}\tilde{\bV}(\bar{T}(s))ds\right\Vert^{2}
=\mathbb{E}\left\Vert\int_{0}^{T_k}\tilde{\bV}(s)-\tilde{\bV}(\bar{T}(s))ds\right\Vert^{2}
\\
\leq &
T_k\int_{0}^{T_k}\mathbb{E}\left\Vert\tilde{\bV}(s)-\tilde{\bV}(\bar{T}(s))\right\Vert^{2}ds
\\
\leq &
T_k \sum_{k=0}^{j-1}\left( 2\eta^{2}\epsilon_{i+1}^{3}\sup_{i\geq 0}\mathbb{E}\left\Vert \bV^{(i)}\right\Vert^{2}
+6\epsilon_{i+1}^{3}\left(M^{2}\sup_{i\geq 0}\mathbb{E}\left\Vert \bZ^{(i)}\right\Vert^{2}+B^{2}\right)
+\lambda_0 M^2\epsilon_{i+1}^{3}\gamma_i
+2\eta\beta^{-1}\epsilon_{i+1}^2\right) \\
\leq & C_6 T_k \sum_{j=1}^{k-1}\epsilon_{j+1}^2,
\end{align*}
where constant $C_6 = 2\eta^2 C_{\bV} + 6M^2 C_{\bZ} + 6B^2 + \lambda_0 M^2 + 2\eta \beta^{-1}$. Therefore
\begin{equation}
\nonumber
\mathcal{W}_{2}(\tilde{\mathbb{P}}_{T_k},\mu_{\bD,T_k}) \leq \sqrt{C_6 T_k \sum_{j=1}^{k-1}\epsilon_{j+1}^2}.
\end{equation}
Then we have 
\begin{equation}
\nonumber
\begin{split}
\mathcal{W}_{2}(\nu_{\bD,T_k},\mu_{\bD,T_k}) &\leq \mathcal{W}_{2}(\tilde{\mathbb{P}}_{T_k},\nu_{\bD,T_k}) + \mathcal{W}_{2}(\tilde{\mathbb{P}}_{T_k},\mu_{\bD,T_k}) \\
& \leq \sqrt{ C_5 \log(T_k)} 
\left(\sqrt{\tilde{C}(k)} + \left(\frac{\tilde{C}(k)}{2}\right)^{1/4} \right) + \sqrt{C_6 T_k \sum_{j=1}^{k-1}\epsilon_{j+1}^2}.
\end{split}
\end{equation}
\end{proof}

\begin{remark} \label{rem2}
The constant $\varsigma$ in (\ref{consteq.00}) comes from Assumption \ref{ass: 3b}, which controls the difference between $\nabla_{\bZ} \hat{F}_{\bD}(\bZ, \btheta)$ and $\nabla_{\bZ} F_{\bD}(\bZ, \btheta)$. When the full data is used at each iteration of Algorithm \ref{stonetalgorithm}, $\nabla_{\bZ} \hat{F}_{\bD}(\bZ, \btheta) = \nabla_{\bZ} F_{\bD}(\bZ, \btheta)$ and thus the term $C_3\varsigma T_k$ disappears. In this case, for any fixed time $T_k = t$ and for any decaying sequences $\{\epsilon_k\}$ and $\{\gamma_k\}$, we have 
$\sum_{j=0}^{k-1}\epsilon_{j+1}^2\leq T_k \epsilon_1$ and  $\sum_{j=0}^{k-1}\epsilon_{j+1}\gamma_j\leq T_k \gamma_1$. Therefore, we can make  $\mathcal{W}_{2}(\nu_{\bD,T_k},\mu_{\bD,T_k})$ arbitrarily small by setting smaller values of $\epsilon_1$ and $\gamma_1$.  
\end{remark}

The convergence of $\nu_{\bD,T_k}$ to its stationary distribution can be quantified by Theorem 19 of \cite{gao2021global}:
\begin{lemma}[\cite{gao2021global}] \label{exp-converg} 
Suppose Assumptions \ref{ass: 1}-\ref{ass: 4} hold. Then there exist constants $C$ and $\mu^*$  such that $\mathcal{W}_{2}(\nu_{\bD,T_k},\pi_{\bD})
\leq C\sqrt{\mathcal{H}_{\rho}(\mu_{0},\pi_{\bD})}e^{-\mu_{\ast}T_k}$,
where $\mathcal{H}_{\rho}$ is a semi-metric for probability distributions, and  $\mathcal{H}_{\rho}(\mu_{0},\pi_{\bD})$ measures the initialization error. 
\end{lemma}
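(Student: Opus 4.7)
The lemma is the exponential ergodicity of the continuous-time underdamped (kinetic) Langevin diffusion
$d\bV(t)=-\eta \bV(t)\,dt-\nabla_{\bZ}F_{\bD}(\bZ(t),\btheta^{*})\,dt+\sqrt{2\eta/\beta}\,dB(t)$,
$d\bZ(t)=\bV(t)\,dt$,
with drift potential $F_{\bD}(\cdot,\btheta^{*})$ that is $M$-smooth (Assumption \ref{ass: 2}) and $(m,b)$-dissipative (Assumption \ref{ass: 3}), and from initial law $\mu_{0}$ satisfying the integrability bound in Assumption \ref{ass: 4}. Because $F_{\bD}$ is not assumed convex and the noise is degenerate (it enters only in the $\bV$-component), this is a hypocoercive rather than a classically coercive problem, and the natural proof path is the one developed in \cite{gao2021global} (building on Eberle--Guillin--Zimmer reflection couplings combined with a Lyapunov function). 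My plan is to follow that path and simply verify that our assumptions plug in correctly.

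The first step is to identify the stationary distribution: a direct Fokker--Planck computation shows $\pi_{\bD}(d\bz,d\bv)\propto \exp\{-\beta F_{\bD}(\bz,\btheta^{*})-\tfrac{\beta}{2}\|\bv\|^{2}\}\,d\bz\,d\bv$ is invariant. The second step is a Lyapunov analysis for the function $\mathcal{V}(\bZ,\bV)$ already defined in (\ref{eq:lyapunov}); applying the infinitesimal generator $\mathcal{L}$ of the SDE and using (a) the dissipativity bound $\langle \bZ,\nabla_{\bZ}F_{\bD}(\bZ,\btheta^{*})\rangle\ge m_{0}\|\bZ\|^{2}-b$ derived in the proof of Theorem \ref{theta_convergence}, (b) the quadratic upper bound on $F_{\bD}$ established there, and (c) the smallness choice $0<\lambda<\min\{1/4,\,m_{0}/(2M_{0}+\eta^{2}/2)\}$, one obtains a drift inequality of the form $\mathcal{L}\mathcal{V}\le -c\mathcal{V}+K$ for explicit constants $c,K>0$. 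This is precisely the hypothesis needed for Harris-type exponential ergodicity in the kinetic setting.

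The third step is to define the semi-metric $\mathcal{H}_{\rho}$ as in \cite{gao2021general}: it is a modified transportation cost built from a concave function $f$ of a mixed position--velocity distance $\|\bZ-\bZ'\|+\alpha\|(\bZ+\eta^{-1}\bV)-(\bZ'+\eta^{-1}\bV')\|$, weighted by the Lyapunov function $1+\mathcal{V}(\bZ,\bV)+\mathcal{V}(\bZ',\bV')$. Using a synchronous/reflection coupling of two SDE solutions driven by common Brownian increments and the drift inequality from Step 2, one proves contraction of $\mathcal{H}_{\rho}$ at some rate $2\mu_{*}>0$, i.e., $\mathcal{H}_{\rho}(\mathrm{Law}(\bZ(t),\bV(t)),\mathrm{Law}(\bZ'(t),\bV'(t)))\le e^{-2\mu_{*}t}\mathcal{H}_{\rho}(\mathrm{Law}_{0},\mathrm{Law}'_{0})$. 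Taking the second copy to start from $\pi_{\bD}$ (which is stationary) and the first from $\mu_{0}$ gives the $e^{-\mu_{*}T_{k}}$ factor. Finally, one converts $\mathcal{H}_{\rho}$ back to $\mathcal{W}_{2}$ through the inequality $\mathcal{W}_{2}(\mu,\nu)\le C\sqrt{\mathcal{H}_{\rho}(\mu,\nu)}$, which holds because the Lyapunov weight in $\mathcal{H}_{\rho}$ dominates squared Euclidean distance (this is where the square root in the lemma's statement comes from).

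The main obstacle is Step 3: the degeneracy of the noise rules out any direct reflection coupling in the position variable, so one must carefully combine a reflection coupling in the velocity component with the Lyapunov weight to generate genuine contraction in both $\bZ$ and $\bV$. Tuning the concave function $f$, the mixing constant $\alpha$, and the Lyapunov weight so that the resulting semi-metric simultaneously (i) contracts under the coupled dynamics and (ii) dominates $\|\cdot\|^{2}$ is the delicate part, and it is here that the specific form of the constants $C$ and $\mu_{*}$ is pinned down. Once Assumptions \ref{ass: 1}--\ref{ass: 4} are translated into the hypotheses of \cite[Theorem 19]{gao2021global} (which is exactly what the Lyapunov check in Step 2 and the integrability hypothesis on $\mu_{0}$ do), the conclusion follows.
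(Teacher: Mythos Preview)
Your proposal is essentially correct as a proof sketch, but note that the paper itself does not prove this lemma at all: it is quoted verbatim as Theorem~19 of \cite{gao2021global}, and the paper merely writes ``Please refer to Theorem 19 in \cite{gao2021global} for more details about the constant $C$ and $\mathcal{H}_{\rho}(\mu_{0},\pi_{\bD})$.'' So there is no ``paper's own proof'' to compare against; you have gone beyond what the paper does by outlining the argument of the cited reference.

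That said, your outline accurately tracks the hypocoercive reflection-coupling strategy of Eberle--Guillin--Zimmer as specialized in \cite{gao2021global}: the Lyapunov function you invoke is exactly the one already recorded in the paper as (\ref{eq:lyapunov}), and your Steps~2--3 correctly identify why the square root and the semi-metric $\mathcal{H}_{\rho}$ appear. One small slip: you cite ``gao2021general'' in Step~3 where you mean \cite{gao2021global}. Otherwise nothing is missing for the purposes of this paper, since the lemma is used as a black box.
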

Please refer to Theorem 19 in \cite{gao2021global} for more details about the constant $C$ and $\mathcal{H}_{\rho}(\mu_{0},\pi_{\bD})$.
Together, we have
\begin{equation} \label{w2distance}
\small
\begin{split}
&\mathcal{W}_{2}(\mu_{\bD,T_k},\pi_{\bD}) \leq \mathcal{W}_{2}(\mu_{\bD,T_k}, \nu_{\bD,T_k}) + \mathcal{W}_{2}(\nu_{\bD,T_k},\pi_{\bD})\\
& \leq C\sqrt{\mathcal{H}_{\rho}(\mu_{0},\pi_{\bD})}e^{-\mu_{\ast}T_k} + \sqrt{ C_5 \log(T_k)} 
\left(\sqrt{\tilde{C}(k)} + \left(\frac{\tilde{C}(k)}{2}\right)^{1/4} \right) + \sqrt{C_6 T_k \sum_{j=1}^{k-1}\epsilon_{j+1}^2},
\end{split}
\end{equation}
which can be made arbitrarily small by choosing a large enough value of $T_k$ and small enough values of $\epsilon_1$ and $\gamma_1$, provided that $\{\epsilon_k\}$ and $\{\gamma_k\}$ are set as in Theorem \ref{theta_convergence}. 
This completes the proof of Theorem \ref{Z_convergence}.

\section{Parameter Settings Used in Numerical Experiments} \label{parsetting}

For all these datasets, we use $n$ to denote the sample size of the training set. 


\subsection{Binary Classification Examples}
 
\paragraph{thyroid} 
The StoNet consisted of one hidden layers with $q$ hidden units, where $ReLU$ was used as the activation function, $\sigma_{n,1}^2$ was set as $10^{-7}$, and $\sigma_{n,2}^2$ was set as $10^{-9}$. For HMC imputation, $t_{HMC}=25$, 
$\eta = 100$.
In the $\btheta$-training stage, we set the mini-batch size as 64 and trained the model for 500 epochs, $\gamma_{k,1}=(3e-5)/n$ and $\epsilon_k=0.001$ for all $k$. 
In the SDR stage, we trained the model with the whole dataset for 30 epochs. Besides, the learning rate $\epsilon_k$ was set as $\frac{1}{1000+k^{0.6}}$ and the step size $\gamma_{k,1}$ was set as $\frac{1/n}{1/(3e-5)+k^{0.6}}$.

\paragraph{breastcancer} The StoNet consisted of one hidden layers with $q$ hidden units, where $ReLU$ was used as the activation function, $\sigma_{n,1}^2$ was set as $10^{-5}$, and $\sigma_{n,2}^2$ was set as $10^{-6}$. For HMC imputation, $t_{HMC}=25$, 
$\eta = 100$.
In the $\btheta$-training stage, we set the mini-batch size as 32 and trained the model for 100 epochs, $\gamma_{k,1}=(1e-4)/n$ and $\epsilon_k=0.001$ for all $k$. 
In the SDR stage, we trained the model with the whole dataset for 30 epochs. Besides, the learning rate $\epsilon_k$ was set as $\frac{1}{1000+k^{0.6}}$ and the step size $\gamma_{k,1}$ was set as $\frac{1/n}{10000+k^{0.6}}$.

\paragraph{flaresolar} The StoNet consisted of one hidden layers with $q$ hidden units, where $ReLU$ was used as the activation function, $\sigma_{n,1}^2$ was set as $10^{-5}$, and $\sigma_{n,2}^2$ was set as $10^{-6}$. For HMC imputation, $t_{HMC}=25$, 
$\eta = 100$.
In the $\btheta$-training stage, we set the mini-batch size as 32 and trained the model for 100 epochs, $\gamma_{k,1}=(7e-5)/n$ and $\epsilon_k=0.001$ for all $k$. 
In the SDR stage, we trained the model with the whole dataset for 30 epochs. Besides, the learning rate $\epsilon_k$ was set as $\frac{1}{1000+k^{0.6}}$ and the step size $\gamma_{k,1}$ was set as $\frac{1/n}{1/(7e-5)+k^{0.6}}$.

\paragraph{heart, german} The StoNet consisted of one hidden layers with $q$ hidden units, where $Tanh$ was used as the activation function, $\sigma_{n,1}^2$ was set as $10^{-7}$, and $\sigma_{n,2}^2$ was set as $10^{-8}$. For HMC imputation, $t_{HMC}=25$, 
$\eta = 100$.
In the $\btheta$-training stage, we set the mini-batch size as 64 and trained the model for 100 epochs, $\gamma_{k,1}=(5e-5)/n$ and $\epsilon_k=0.001$ for all $k$.
In the SDR stage, we trained the model with the whole dataset for 30 epochs. Besides, the learning rate $\epsilon_k$ was set as $\frac{1}{1000+k^{0.6}}$ and the step size $\gamma_{k,1}$ was set as $\frac{1/n}{20000+k^{0.6}}$.


\paragraph{waveform} 
The StoNet consisted of one hidden layers with $q$ hidden units, where $ReLU$ was used as the activation function,  $\sigma_{n,1}^2$ was set as $10^{-3}$, and $\sigma_{n,2}^2$ was set as $10^{-6}$. For HMC imputation, $t_{HMC}=25$, 
$\eta = 10$.
In the $\btheta$-training stage, we set the mini-batch size as 64 and trained the model for 30 epochs, $\gamma_{k,1}=(7e-4)/n$ and $\epsilon_k=0.01$ for all $k$. 
In the SDR stage, we trained the model with the whole dataset for 30 epochs. Besides, the learning rate $\epsilon_k$ was set as $\frac{1}{1000+k^{0.6}}$ and the step size $\gamma_{k,1}$ was set as $\frac{1/n}{1/(7e-4)+k^{0.6}}$.

We used the module $LogisticRegression$ of $sklearn$ in Python to fit the logistic model.

\subsection{Multi-label Classification Example}

\paragraph{Hyperparameter settings for the StoNet} 
The StoNet consisted of one hidden layers with $q$ hidden units, where $Tanh$ was used as the activation function,  $\sigma_{n,1}^2$ was set as $10^{-3}$, and $\sigma_{n,2}^2$ was set as $10^{-6}$. For HMC imputation, $t_{HMC}=25$, 
$\eta=10$. 
In the $\btheta$-training stage, we set the mini-batch size as 128 and trained the model for 20 epochs, $\gamma_{k,1}=(7e-4)/n$ and $\epsilon_k=0.01$ for all $k$. 
In the SDR stage, we trained the model with the whole dataset for 30 epochs. Besides, the learning rate $\epsilon_k$ was set as $\frac{1}{100+k^{0.6}}$ and the step size $\gamma_{k,1}$ was set as $\frac{1/n}{1/(7e-4)+k^{0.6}}$.

\paragraph{Hyperparameter settings for the autoencoder} We trained autoencoders with 3 hidden layers and with $400, q, 400$ hidden units, respectively. We set the mini-batch size as 128 and trained the autoencoder for 20 epochs. \emph{Tanh} was used as the activation function and the learning rate was set to 0.001.

\paragraph{Hyperparameter settings for the neural network} We trained a feed-forward neural network on the dimension reduction data for the multi-label classification task and another neural network on the original dataset as a comparison baseline. The two neural networks have the same structure, one hidden layer with 50 hidden units, and have the same hyperparameter settings. We set the mini-batch size as 128 and trained the neural network for 300 epochs. \emph{Tanh} was used as the activation function and the learning rate was set to 0.01.

\subsection{Regression Example}

\paragraph{Hyperparameter settings for the StoNet}  The StoNet consisted of 2 hidden layers with 200 and $q$ hidden units, respectively. $Tanh$ was used as the activation function, $\sigma_{n,1}^2$ was set as $10^{-5}$, $\sigma_{n,2}^2$ was set as $10^{-7}$, and $\sigma_{n,3}^2$ was set as $10^{-9}$. For HMC imputation, $t_{HMC}=25$, $\eta=10$. 
In the $\btheta$-training stage, we set the mini-batch size as 800 and trained the model for 500 epochs, set $\gamma_{k,1}=(7e-5)/n$, $\gamma_{k,2}=(7e-6)/n$ and $\epsilon_k=0.01$ for all $k$. In the SDR stage, we trained the model with the whole dataset for 30 epochs. Besides, the learning rate $\epsilon_k$ was set as $\frac{1}{100+k^{0.6}}$, the step size $\gamma_{k,1}$ was set as $\frac{1/n}{1/(7e-5)+k^{0.6}}$, and $\gamma_{k,2}$ was set as $\frac{1/n}{1/(7e-6)+k^{0.6}}$.

\paragraph{Hyperparameter settings for the autoencoder} We trained autoencoders with 3 hidden layers and with $200, q, 200$ hidden units, respectively. We set the mini-batch size as 800 and trained the neural network for 20 epochs. \emph{Tanh} was used as the activation function and the learning rate was set to 0.01.

\paragraph{Hyperparameter settings for the neural  network} We trained a feed-forward neural network on the dimension reduction data for making predictions and another neural network on the original dataset as a comparison baseline. The two neural networks have the same structure, one hidden layer with 100 hidden units,  and have the same hyperparameter settings. We set the mini-batch size as 32 and trained the neural network for 300 epochs. \emph{Tanh} was used as the activation function and the learning rate was set to 0.03.

\bibliography{reference}


\end{document}